\theoremstyle{plain}
\newtheorem{theorem}{Theorem}[section]
\newtheorem{proposition}[theorem]{Proposition}
\newtheorem{lemma}[theorem]{Lemma}
\theoremstyle{definition}
\newtheorem{definition}[theorem]{Definition}
\theoremstyle{remark}
\newtheorem{remark}[theorem]{Remark}
\DeclareMathOperator{\softmax}{\mathrm{softmax}}
\newcommand{\R}{\mathbb R}
\newcommand{\lip}{\mathrm{Lip}}
\newcommand{\norm}[1]{\left \lVert #1 \right \rVert}
\newcommand{\bbar}{\overline{\!{B}}}
\newcommand{\xcal}{\mathcal{X}}
\newcommand{\pcal}{\mathcal{P}}
\newcommand{\dd}{\mathrm{d}}
\newcommand{\var}{\mathrm{Var}}
\icmltitlerunning{How Smooth Is Attention?}
\begin{document}

\twocolumn[
\icmltitle{How Smooth Is Attention?}

\icmlsetsymbol{equal}{*}

\begin{icmlauthorlist}
\icmlauthor{Valérie Castin}{ens}
\icmlauthor{Pierre Ablin}{apple}
\icmlauthor{Gabriel Peyré}{ens,cnrs}
\end{icmlauthorlist}

\icmlaffiliation{ens}{École Normale Supérieure PSL, Paris, France}
\icmlaffiliation{apple}{Apple, Paris, France}
\icmlaffiliation{cnrs}{CNRS}

\icmlcorrespondingauthor{Valérie Castin}{valerie.castin@orange.fr}

\icmlkeywords{Machine Learning, ICML}

\vskip 0.3in
]

\printAffiliationsAndNotice{}  %

\begin{abstract}
Self-attention and masked self-attention are at the heart of Transformers' outstanding success.
Still, our mathematical understanding of attention, in particular of its Lipschitz properties -- which are key when it comes to analyzing robustness and expressive power -- is incomplete.
We provide a detailed study of the Lipschitz constant of self-attention in several practical scenarios, discussing the impact of the sequence length $n$ and layer normalization on the local Lipschitz constant of both unmasked and masked self-attention.
In particular, we show that for inputs of length $n$ in any compact set, the Lipschitz constant of self-attention is bounded by $\sqrt{n}$ up to a constant factor and that this bound is tight for reasonable sequence lengths.
When the sequence length $n$ is too large for the previous bound to be tight, which we refer to as the mean-field regime, we provide an upper bound and a matching lower bound which are independent of $n$.
Our mean-field framework for masked self-attention is novel and of independent interest.
Our experiments on pretrained and randomly initialized BERT and GPT-2 support our theoretical findings.
\end{abstract}

\section{Introduction}

Introduced by \citet{vaswani2017attention}, Transformers and their multi-head attention mechanism \cite{bahdanau2014neural} have significantly changed the machine learning landscape in just a few years, by becoming state-of-art models on a wide variety of tasks, from natural language processing \cite{brown2020language, radford2019language, wolf2019huggingface} to computer vision \cite{dosovitskiy2020image, zhao2020point, zhai2022scaling, lee2019set}.
Despite this great empirical success, however, little is known from a theoretical perspective about the smoothness of Transformer architectures, particularly of self-attention, their main building block.
We tackle this problem by focusing on the Lipschitz properties of self-attention, especially on its local Lipschitz constant, which controls how fast the output can change with respect to the input in the neighborhood of each point of the domain.

Studying the Lipschitz continuity of neural networks is particularly relevant for various questions \cite{rosca2020case}.
It provides guarantees of adversarial robustness, in an attack-agnostic way \cite{szegedy2013intriguing, cisse2017parseval, tsuzuku2018lipschitz, anil2019sorting, weng2018evaluating}.
Identifying inputs with a high local Lipschitz constant and understanding which local perturbations trigger the biggest change in the output also allows for robustifying the network, for example using adversarial training \cite{goodfellow2014explaining, miyato2015distributional, moosavi2016deepfool, kurakin2016adversarial}.
The Lipschitz constant is also involved in generalization bounds \cite{sokolic2017robust, neyshabur2017exploring, bartlett2017spectrally, von2004distance}.
From a different perspective, Lipschitz-constrained neural networks can be used to estimate Wasserstein distances \cite{peyre2019computational}, enhance expressivity and improve the performance of deep models \cite{miyato2018spectral, dasoulas2021lipschitz}, and build invertible neural networks \cite{behrmann2019invertible, chen2019residual}.
Finally, bounding the Lipschitz constant of a neural network is an important step in the study of the associated neural ODE \cite{chen2018neural}, in particular of its well-posedness \cite{lu2019understanding, geshkovski2023emergence, geshkovski2023layernorm}.

Lipschitz continuity of feed-forward neural networks (FFNs) has been extensively studied and remains a hard problem.
Estimating numerically the Lipschitz constant of a FFN is indeed NP-hard \cite{virmaux2018lipschitz}, and theoretical bounds appear to be much larger than the actual Lipschitz constant \cite{virmaux2018lipschitz, fazlyab2019efficient, latorre2020lipschitz}.
The main theoretical difficulty here is to handle the composition of several layers more accurately than just bounding it by the product of spectral norms of weight matrices, as done by \citet{szegedy2013intriguing}.
Still, taken independently, each linear map or activation function has a known tight Lipschitz constant.
This is \emph{not} the case for Transformers: the self-attention map has an involved non-linear structure, which makes the estimation of its local Lipschitz constant challenging and brings into play completely different approaches than for FFNs \cite{kim2021lipschitz, vuckovic2021regularity}.

\subsection{Contributions}

We make the following contributions.
\begin{itemize}[nosep,wide, leftmargin=*]
    \item We derive a bound on the local Lipschitz constant of self-attention, which takes the form $C \sqrt{n}$
    with $n$ the sequence length of inputs and $C$ a constant factor that depends on the parameters of self-attention and on an upper bound $R$ on the magnitude of tokens (\autoref{thm:unnorm_general_bound}).
    We show that our bound is tight in $n$ for reasonable (i.e. not too large) sequence lengths (Proposition \ref{prop:lower_bound_sqrt}).
    Moreover, in most Transformer architectures, the magnitude $R$ only depends on the parameters of the network because of normalization layers (Subsection \ref{subsec:normalization}).
    \item We identify a \emph{large radius regime} that is easier to analyze theoretically, with $n$ fixed and $R$ very large.
    In this regime and except for a measure-zero set of pathological configurations, we show that the local Lipschitz constant of self-attention is bounded by $C \sqrt{n}$ with $C$ a constant that does not depend on $R$ anymore (\autoref{thm:unnorm_large_R_regime}).
    \item We also study the mean-field regime, where self-attention is modeled as a map on probability measures, which corresponds to the limit $n\to +\infty$.
    In this framework, we show that the upper bound obtained by \citet{geshkovski2023emergence}, which is of the form $C R^2 e^{CR^2},$
    cannot be significantly improved (Proposition \ref{prop:unnorm_lower_bound_mf}), by finding a $R$-indexed family of two-Dirac probability measures supported in the closed ball $B_R$ of center 0 and of radius $R$ whose local Lipschitz constant grows like $\frac{C'}{2} R^2 e^{C'R^2}$ with $C'\ge C/16$.
    \item We are the first to study the Lipschitz constant of \emph{masked} self-attention.
    We introduce a novel mean-field framework for masked self-attention, where the order of points in input measures is encoded in a supplementary coordinate, and show both in the general regime, the large radius regime and the mean-field regime that similar bounds hold for masked self-attention as for unmasked self-attention (\autoref{sec:masked}).
    \item We compute numerically the local Lipschitz constant of unmasked and masked self-attention in a BERT model and a GPT-2 model, where inputs are text extracts, and observe a growth rate of $n^{1/4}$ up to a constant factor, with $n$ the sequence length.
    Then, with the same networks, we build adversarial data in the input space of self-attention whose Lipschitz constant grows like $\sqrt{n}$, which evidentiates the tightness of our bounds (Section \ref{sec:experiments}).
\end{itemize}

\subsection{Related Work}

\paragraph{Robustness and local Lipschitz constant estimation.}
Neural networks are vulnerable to adversarial attacks \cite{szegedy2013intriguing}, and most of the methods proposed to measure and increase their robustness focus on specific attacks \cite{goodfellow2014explaining,papernot2016distillation}.
It turns out, however, that such methods can be defeated by well-chosen unseen attacks \cite{carlini2017towards}.
Measures of robustness that are agnostic to attack methods have therefore been proposed, often relying on the notion of Lipschitz constant of networks \cite{szegedy2013intriguing, leino2021globally, tsuzuku2018lipschitz}.
As robustness lower bounds that rely on the (global) Lipschitz constant tend to be too loose, tighter constraints have been proposed involving the local Lipschitz constant \cite{hein2017formal,weng2018evaluating}.
The problem of evaluating the local Lipschitz constant of deep networks is now at the heart of several recent articles \cite{tsuzuku2018lipschitz, leino2021globally}, in particular for Transformers \cite{kim2021lipschitz, Vuckovic2020AMT, geshkovski2023emergence, catellier2023robustness}.
From a more practical viewpoint, several Lipschitz-constrained variants of the Transformer architecture have been proposed, to increase robustness and reliability \cite{jia2023revisiting, gupta2023certvit, ye2023mitigating, qi2023lipsformer}.

\paragraph{Neural networks acting on measures.}
\citet{debie2019stochastic} and \citet{pevny2019approximation}
are the first to define neural networks whose inputs are probability measures, followed by several other articles \cite{Vuckovic2020AMT, zweig2021functional, sander2022sinkformers, geshkovski2023emergence}.
Modeling neural networks as maps on probability measures has multiple applications, such as studying Wasserstein regularity \cite{Vuckovic2020AMT, geshkovski2023emergence}, proving generalization bounds \cite{zweig2021functional} and doing a mean-field limit analysis of the dynamics of particles as they go through the network \cite{geshkovski2023emergence}.
The mean-field approach is particularly suited to the case of Encoder-only Transformers \cite{devlin2018bert}, as the self-attention map is permutation equivariant, i.e., ignores the order of vectors in its input.
This property can be leveraged to model any infinitely deep Encoder as a partial differential equation (PDE) on the space of measures \cite{sander2022sinkformers}, following the principle of neural ODEs \cite{chen2018neural}.
Analyzing this PDE then provides information about the dynamics of tokens as they go through the Transformer, showing for instance the emergence of clusters \cite{geshkovski2023emergence, geshkovski2023layernorm}.
In contrast, masked self-attention, which is crucial in Decoder-only \cite{liu2018generating} and Encoder-Decoder \cite{vaswani2017attention} architectures, is not permutation equivariant, so cannot be cast as naturally into a mean-field framework.

\paragraph{Regularity of self-attention and its variants.}
\citet{kim2021lipschitz} show that the self-attention map is not globally Lipschitz continuous by deriving a lower bound on its Lipschitz constant restricted to $B_R^n$.
Their lower bound grows quadratically with $R$.
To gain regularity, they define a new self-attention map, called L2 self-attention, which is globally Lipschitz continuous on the set of inputs of length $n$, for all $n\ge 1$.
\citet{dasoulas2021lipschitz} enforce the Lipschitz continuity of self-attention modules by normalizing the attention scores with a well-chosen normalization function.
\citet{geshkovski2023emergence} and \citet{Vuckovic2020AMT} prove a mean-field upper bound on the Lipschitz constant of self-attention on $B_R$, by viewing self-attention as a map acting on probability measures.
Their upper bound grows more than exponentially with $R$ so that the quadratic lower bound and the exponential upper bound put together provide a very loose estimation of the Lipschitz constant of self-attention on compact sets.
Finally, \citet{sander2022sinkformers} propose a modification of the attention kernel that builds on the Sinkhorn-Knopp algorithm, and provide empirical evidence of the better properties of this new choice of kernel with respect to the classical one.

\subsection{Notations}

The Euclidean norm on $\R^d$ is denoted $\lvert\, \cdot\, \rvert$.
For any vector $w\in \R^n$, we denote
$
    \softmax(w) \coloneqq \left ( \exp(w_i)/{\sum_{j=1}^d \exp(w_j)} \right )_{1\le i\le n}
$
the Softmax operator, and $\mathrm{diag}(w)$ the diagonal matrix such that $\mathrm{diag}(w)_{ii} = w_i$.
For any function $g\colon \mathcal{E} \to \mathcal{F}$ and any subset $\mathcal{X} \subset \mathcal{E}$, the restriction of $g$ to $\mathcal{X}$ is denoted $g_{\lvert \mathcal{X}}$.
The closed ball centered at 0 and of radius $R>0$ is denoted~$B_R$.
For $\varphi, \psi\colon \mathbb{R} \to \mathbb{R}$ and $a\in \mathbb{R}\cup \{+\infty\}$ we write $\varphi(x) \sim_{x \to a} \psi(x)$ if $\varphi(x)/ \psi(x)$ is well-defined for $x$ close enough to $a$, and $\varphi(x)/ \psi(x) \to_{x \to a} 1$.

\section{Standard and Mean-Field Self-Attention}

\subsection{Unmasked Self-Attention}

Unmasked self-attention, usually just called self-attention, is central in the architecture of Transformer's Encoders \cite{vaswani2017attention}, which are nowadays widely used for computer vision tasks \cite{dosovitskiy2020image}.
It maps sequences of $n$ vectors to sequences of $n$ vectors, for any integer $n$.

\begin{definition}[Single-head self-attention]
    \label{def:self_attention}
    Let $k, d \in \mathbb N$.
    Let $Q, K, V \in \mathbb R^{k\times d}$ be three matrices.
    For any integer $n\in \mathbb N$ and any vectors $x_1, \dots, x_n \in \mathbb R^d$, self-attention with parameters $(Q, K, V)$ maps the sequence $(x_1, \dots, x_n) \in (\mathbb R^d)^n$ to
    \begin{equation*}
        f(x_1, \dots, x_n) \coloneqq \Big ( V {\textstyle \sum_{j=1}^n} P_{ij} x_j \Big )_{1 \le i \le n} \in (\R^k)^n,
    \end{equation*}
    \begin{equation*}
        \text{with}\quad P_i \coloneqq \softmax \left ( (x_i^\top Q^\top K x_j / \sqrt{k})_{1\le j\le n} \right ).
    \end{equation*}
\end{definition}

To alleviate notations, we will denote $A \coloneqq K^\top Q / \sqrt{k} \in \R^{d\times d}$ in what follows.
In Encoders, several self-attention heads are usually combined to obtain multi-head self-attention.

\begin{definition}[Multi-head self-attention]
    \label{def:multi_head}
    Let $d\in \mathbb N$ and $H$ a divisor of $d$.
    For $1\le h\le H$, let $Q^{(h)}, K^{(h)}, V^{(h)}\in \R^{k\times d}$ with $k\coloneqq d / H$, and $W^{(h)}\in \R^{d\times k}$.
    Multihead self-attention with parameters $(Q^{(h)}, K^{(h)}, V^{(h)}, W^{(h)})_{1\le h \le H}$ maps any sequence $(x_1, \dots, x_n) \in (\mathbb R^d)^n$ to
    \begin{equation*}
        f^{MH}(x_1, \dots, x_n) \coloneqq \sum_{h=1}^H W^{(h)} f^{(h)}(x_1, \dots, x_n) \in (\R^d)^n,
    \end{equation*}
    where $f^{(h)}$ denotes single-head self-attention with parameters $(Q^{(h)}, K^{(h)}, V^{(h)})$.
\end{definition}

When $n$ is very large, it can be convenient to model self-attention as a map between probability measures \cite{sander2022sinkformers, geshkovski2023emergence}.
Indeed, the self-attention map $f$ is permutation equivariant, which means that for all permutations $\sigma$ of the set $\{1, \dots, n\}$ and all inputs $X = (x_1, \dots, x_n)\in (\R^d)^n$, it holds that $f(x_{\sigma(1)}, \dots, x_{\sigma(n)}) = (f(X)_{\sigma(1)}, \dots, f(X)_{\sigma(n)})$.
Informally, this means that self-attention is blind to the order of vectors so that it naturally induces a map between empirical measures, by replacing ordered sequences $X = (x_1, \dots, x_n)$ with their associated empirical measure $m(X) \coloneqq \frac{1}{n} \sum_{i=1}^n \delta_{x_i}$, which does not encode the order of points anymore.
To extend self-attention to more general probability measures, following \citet{sander2022sinkformers}, let us introduce the notion of pushforward.

\begin{definition}[\citet{santambrogio2015optimal}]
    For a probability measure $\mu$ on $\R^d$ and a measurable map $\varphi \colon \R^d \to \R^d$, the pushforward of $\mu$ by $\varphi$, denoted $\varphi_\sharp \mu$, is the probability measure given by
    $\left (\varphi_\sharp \mu \right )(B) \coloneqq \mu(\varphi^{-1}(B))$
    for any Borel set $B\subset \R^d$, where $\varphi^{-1}(B) \coloneqq \{ x\in \R^d : \varphi(x) \in B \}$.
\end{definition}

Intuitively, $\varphi_\sharp \mu$ is obtained by transporting each element of mass $\mu(\dd x)$ from $x$ to $\varphi(x)$.
We are now ready to define mean-field self-attention.

\begin{definition}[Mean-field self-attention]
    Let $Q, K, V \in \mathbb R^{k\times d}$, and denote $A\coloneqq K^\top Q / \sqrt{k}$.
    Mean-field self-attention with parameters $(A, V)$ is defined as
    \begin{equation*}
        F\colon \mu\in \mathcal{P}_c(\R^d) \mapsto \left (\Gamma_\mu\right )_\sharp \mu
    \end{equation*}
    $$\text{with} \quad \Gamma_\mu \colon x\in \R^d \mapsto \frac{\int \exp \left (x^\top A^\top y \right )Vy \, \dd \mu(y)}{\int \exp \left (x^\top A^\top y \right )\dd \mu(y)}.$$
\end{definition}

Mean-field self-attention $F$ generalizes discrete self-attention $f$ in the sense that for any input $X \in (\R^d)^n$, we have $F(m(X)) = m(f(X))$ (see Appendix \ref{appsubsec:mean_field_attention}).

\subsection{Masked Self-Attention}

In Decoder-only architectures, typically used for text generation \cite{liu2018generating, openai2023gpt4}, unmasked self-attention is replaced by masked self-attention.

\begin{definition}[Masked self-attention]
    \label{def:masked_self_attention}
    Let $Q, K, V \in \mathbb R^{k\times d}$, and $A \coloneqq K^\top Q / \sqrt{k}$.
    For any integer $n\in \mathbb N$ and any vectors $x_1, \dots, x_n \in \mathbb R^d$, residual masked self-attention with parameters $(A, V)$ maps the sequence $X = (x_1, \dots, x_n) \in (\mathbb R^d)^n$ to $(f^m(X)_1, \dots, f^m(X)_n) \in (\R^d)^n$, where 
    $$f^{m}(X)_i \coloneqq f(x_1, \dots, x_i)_i$$
    with $f$ unmasked self-attention (see \cref{def:self_attention}).
\end{definition}

Masked self-attention processes inputs sequentially, so it is not permutation equivariant and the map $f^m$ does not directly induce a map on empirical measures as for unmasked self-attention.
To overcome this limitation and still give meaning to masked self-attention when the sequence length goes to infinity, we introduce the following novel mean-field framework.
Instead of viewing inputs $(x_1, \dots, x_n) \in (\R^d)^n$ as empirical measures $\frac{1}{n} \sum_{i=1}^n \delta_{x_i}$, we add a coordinate $s_i\in[0, 1]$ to each $x_i$, to encode its position in the sequence.
We then define mean-field masked self-attention as a map between probability measures on the product space $[0,1] \times \R^d$.

\begin{definition}[Mean-field masked self-attention]
    For any probability measure $\bar \mu \in \pcal_c([0,1]\times \R^d)$, denote $\mu$ the second marginal of $\bar \mu$, i.e. $\mu(A) \coloneqq \int_{s=0}^1 \int_{x\in A} \dd \bar \mu(s, x)$ for all Borel sets $A \subset \R^d$.
    We define mean-field masked self-attention on $\pcal_c([0,1]\times \R^d)$ as
    $$F^m \colon \bar \mu \mapsto \left ( \Gamma_{\bar \mu} \right )_\sharp \bar \mu
    \quad \text{where}$$
    $$\Gamma_{\bar \mu}(s, x) \coloneqq \left (s, \frac{\int_{[0,1]\times \R^d} \exp(x^\top A^\top y) Vy \mathbf{1}_{\tau \le s} \dd \bar \mu(\tau, y)}{ \int_{[0,1]\times \R^d} \exp(x^\top A^\top y) \mathbf{1}_{\tau \le s}\dd \bar \mu(\tau, y)}\right ).$$
\end{definition}

This generalizes Definition \ref{def:masked_self_attention} in the following sense: given any increasing sequence $0\le s_1< \dots <s_n\le 1$, denoting $\mathrm{ord}$ the transformation
$\mathrm{ord} \colon X = (x_1, \dots, x_n) \in (\R^d)^n \mapsto \frac{1}{n}\sum_{i=1}^n \delta_{(s_i, x_i)} \in\pcal_c([0,1]\times \R^d),$
we have $F^m(\mathrm{ord}(X)) = \mathrm{ord}(f^m(X))$ for all $X\in (\R^d)^n$.

Beyond the mean-field analysis of Lipschitz continuity of masked self-attention, the map $F^m$ can be used in future work to model Decoders as partial differential equations on probability measures and study the dynamics of tokens as they go through the network, as \citet{geshkovski2023emergence, geshkovski2023layernorm} do for Encoders.\footnote{However, to do so one should rather set the first coordinate of $\Gamma_{\bar \mu}(s, x)$ to 0 instead of $s$ so that the residual map $(\mathrm{Id} + \Gamma_{\bar \mu})_\sharp \bar \mu$ preserves the first marginal.}

\subsection{Normalization}
\label{subsec:normalization}
Normalization is a key part of the Transformer architecture. 
The most common choice of normalization is LayerNorm~\citep{ba2016layer}, which has two learnable parameters $\gamma, \beta\in\R^d$.
It acts on each input of the sequence individually with the formula $\mathrm{norm}(x) = \gamma \odot \frac{x - \mathrm{mean}(x)}{\mathrm{std}(x)}  + \beta$, where $\mathrm{mean}(x) \coloneqq \tfrac{1}{d}\sum_{j=1}^d x_j$ and $\mathrm{std}(x) \coloneqq (\tfrac{1}{d}\sum_{j=1}^d(x_j - \mathrm{mean}(x))^2)^{1/2}$ are two scalars that depend on $x$.
Each vector of the output of LayerNorm is on an ellipsis $\mathcal{S}$ of center $\beta$ and of covariance $\mathrm{diag}(\gamma)^2d$.
Another popular and simpler normalization is RMSNorm~\citep{zhang2019root}, which has one learnable parameter $\gamma\in\R^d$ and acts on each input of the sequence individually with the formula $\mathrm{norm}(x) = \gamma \odot \frac{x}{\lvert x\rvert}\sqrt{d}$.
RMSNorm is used in recent large language~\citep{jiang2023mistral,touvron2023llama} and vision~\citep{dehghani2023scaling} models.
Each vector of the output of RMSNorm is on an ellipsis $\mathcal{S}$ centered at $0$ and of covariance $\mathrm{diag}(\gamma)^2d$.
There are two ways to place the normalization layers in the transformer. 
The original transformer uses \emph{post}-normalization: normalization is applied after each residual connection.
Letting $X = (x_1, \dots, x_n)$, the output of residual attention is therefore $\mathrm{norm}(X + f(X))$.
However, \emph{pre}-normalization~\citep{xiong2020layernorm}, where normalization is applied before the attention layer: $X + f(\mathrm{norm}(X))$, is now more widespread. 
Although the two formulations are not equivalent, the input of self-attention $f$ is normalized in both cases --- by definition for pre-normalization, and because the previous layer was normalized for post-normalization.
Hence, in practice, the input of self-attention is not any sequence in $(\R^d)^n$, but a sequence in $B_R^n$ for $R$ depending only on the parameters of $\mathrm{norm}$.

It is also worth noticing that for \mbox{RMSNorm}, the parameter $\gamma$ can be absorbed in the parameters $\theta \coloneqq (Q, K, V)$ of self-attention:
$$f_\theta \circ \mathrm{norm}(x_1, \dots, x_n) = f_{\theta'}\left (\frac{x_1}{\lvert x_1\rvert}, \dots, \frac{x_n}{\lvert x_n\rvert}\right )$$
with $\theta' \coloneqq (Q \mathrm{diag}(\lambda), K\mathrm{diag}(\lambda), V\mathrm{diag}(\lambda))$.
In other words, \mbox{RMSNorm} followed by self-attention is equivalent to a projection on the unit sphere followed by self-attention with different parameters.
This provides a simple way to bound the Lipschitz constant of the composition $f_\theta \circ \mathrm{norm}$, by directly applying our bounds on the Lipschitz constant of $f_{\theta'}$ for $R=1$.

\section{Tight Bounds on the Lipschitz Constant of Self-Attention}
\label{sec:bounds}

\subsection{Lipschitz Constants and Self-Attention}
\label{subsec:lipschitz_constants}

Lipschitz constants provide a natural way of controlling the regularity of a function.
Their definition depends on the structure that is chosen for the input and output spaces.

\paragraph{Euclidean framework.} 
Let $d,n\in \mathbb N$ and $f\colon (\R^d)^n \to (\R^d)^n$.
We equip the input and output spaces of $f$ with the Frobenius norm
\begin{equation*}
    \norm{X}_F \coloneqq (\textstyle\sum_{i=1}^n \lvert x_i \rvert^2 )^{1/2}
\end{equation*}
for any sequence of vectors $X=(x_1, \dots, x_n)$, and assume that $f$ is differentiable.
The local Lipschitz constant of $f$ at an input $X=(x_1, \dots, x_n)$ is defined as
\begin{equation*}
    \lip_X f \coloneqq \norm{D_X f}_2,
\end{equation*}
where $D_X f$ is the differential of the function $f$, and $\norm{\cdot}_2$ denotes the operator norm induced by $\norm{\cdot}_F$.
We can also define, for any subset $\xcal \subset (\R^d)^n$, the Lipschitz constant of $f$ on $\xcal$, as
\begin{equation*}
    \lip \left ( f_{\lvert \xcal} \right ) \coloneqq \sup_{\underset{X\neq Y}{X, Y \in \xcal}} \frac{\norm{f(X) - f(Y)}_F}{\norm{X - Y}_F}.
\end{equation*}

The local Lipschitz constant tells us how fast the output of $f$ can vary locally, while the Lipschitz constant on $f$ controls how fast the output of $f$ can vary on the whole set $\xcal$.
We have the following connection between the two.

\begin{lemma}[\citet{federer2014geometric}]
    Let $\xcal$ be an open and connected subset of $(\R^d)^n$.
    Then
    \begin{equation*}
        \lip (f_{\lvert \xcal}) = \sup_{X \in \xcal} \norm{D_X f}_2.
    \end{equation*}
\end{lemma}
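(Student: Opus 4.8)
The plan is to prove the two inequalities $\sup_{X\in\xcal}\norm{D_Xf}_2 \le \lip(f_{\lvert\xcal})$ and $\lip(f_{\lvert\xcal}) \le \sup_{X\in\xcal}\norm{D_Xf}_2$ separately; the first is elementary and the second carries the real content.

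For the first inequality I would fix $X\in\xcal$ and use openness: there is $\varepsilon>0$ such that every $Z$ with $\norm{Z-X}_F<\varepsilon$ lies in $\xcal$. For any direction $H$ with $\norm{H}_F=1$ and any $t\in(0,\varepsilon)$, the point $X+tH$ lies in $\xcal$ and differs from $X$, so
\[
\frac{\norm{f(X+tH)-f(X)}_F}{t}
=\frac{\norm{f(X+tH)-f(X)}_F}{\norm{(X+tH)-X}_F}
\le \lip(f_{\lvert\xcal}).
\]
Letting $t\to 0^+$, the left-hand side converges to $\norm{D_Xf(H)}_F$ by differentiability of $f$ at $X$; taking the supremum over unit $H$ gives $\norm{D_Xf}_2\le\lip(f_{\lvert\xcal})$, and then the supremum over $X\in\xcal$.

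For the reverse inequality, set $L\coloneqq\sup_{Z\in\xcal}\norm{D_Zf}_2$ (finite, else there is nothing to prove) and fix $X\neq Y$ in $\xcal$. The idea is to transport the infinitesimal bound $\norm{D_Zf}_2\le L$ along a path joining $X$ to $Y$ inside $\xcal$. Since $\xcal$ is open and connected it is polygonally connected, so pick a polygonal path $X=P_0,\dots,P_m=Y$ whose consecutive segments lie in $\xcal$. On each segment, with $\gamma_k(t)\coloneqq P_k+t(P_{k+1}-P_k)$ and $g_k\coloneqq f\circ\gamma_k$, the chain rule gives $g_k'(t)=D_{\gamma_k(t)}f\,(P_{k+1}-P_k)$, hence $\norm{g_k'(t)}_F\le L\,\norm{P_{k+1}-P_k}_F$; integrating $g_k(1)-g_k(0)=\int_0^1 g_k'(t)\,\dd t$ yields $\norm{f(P_{k+1})-f(P_k)}_F\le L\,\norm{P_{k+1}-P_k}_F$. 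Summing over $k$ and using the triangle inequality, $\norm{f(X)-f(Y)}_F\le L\sum_k\norm{P_{k+1}-P_k}_F$, that is, $\norm{f(X)-f(Y)}_F\le L\,\ell$ for the length $\ell$ of any admissible polygonal path.

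The step I expect to be the main obstacle is passing from this to the sharp bound $\norm{f(X)-f(Y)}_F\le L\,\norm{X-Y}_F$: taking the infimum over polygonal paths only yields $\norm{f(X)-f(Y)}_F\le L\,d_\xcal(X,Y)$ with $d_\xcal$ the intrinsic (geodesic) distance in $\xcal$, which may exceed $\norm{X-Y}_F$ for a general open connected set. This gap closes exactly when straight segments are admissible, i.e.\ when $\xcal$ is convex --- the relevant case here, since the sets appearing in \autoref{sec:bounds} are balls $B_R^n$, for which $[X,Y]\subset\xcal$ and hence $d_\xcal(X,Y)=\norm{X-Y}_F$, so one may in fact skip the polygonal machinery and integrate directly along $[X,Y]$. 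For the variants where $\xcal$ is a ball with a measure-zero pathological set removed, I would close the argument by a density/perturbation step: the polygonal estimate shows $f$ is $L$-Lipschitz on $\xcal$, it thus extends continuously to the ambient ball, generic segments between nearby points avoid the removed null set, and the sharp bound is recovered in the limit. Combining the two inequalities gives the claimed equality.
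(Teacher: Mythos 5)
The paper cites this lemma to Federer and does not reproduce a proof, so there is no internal argument to compare against; the natural benchmark is the standard proof, and your plan (split into two inequalities, prove the nontrivial one by integrating $D_Zf$ along paths) is exactly that.

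Your first inequality is correct as written: openness gives the short segment $X+tH\subset\xcal$, and the difference quotient converges to $\norm{D_Xf(H)}_F$.

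Your second inequality is where the real content is, and you have identified a genuine issue with the statement itself, not just with your proof. Integrating along a polygonal path in $\xcal$ yields only $\norm{f(X)-f(Y)}_F\le L\, d_\xcal(X,Y)$ with $d_\xcal$ the intrinsic (path) metric, and for a general open connected set $d_\xcal$ can strictly exceed the Euclidean distance, so the claimed equality $\lip(f_{\lvert\xcal})=\sup_X\norm{D_Xf}_2$ can fail. A concrete counterexample: take $\xcal$ a slit annulus in $\R^2$ and $f$ a smoothed angle function; then $\sup_X\norm{D_Xf}$ is finite but $\lip(f_{\lvert\xcal})=+\infty$ because points on opposite sides of the slit are Euclidean-close yet have very different $f$ values. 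Hence the hypothesis \emph{open and connected} is too weak; what is needed is that $d_\xcal$ coincide with the Euclidean metric on $\xcal$, which holds in particular when $\xcal$ is convex. You correctly observe that every application in the paper takes $\xcal=B_R^n$, which is convex, so one integrates directly along the straight segment $[X,Y]\subset\xcal$ and the equality holds. The closing remark about the measure-zero removal is reasonable in spirit but not required here, since the paper only invokes the lemma on balls. In short: your proof is correct for the convex case, and you have spotted a harmless but real imprecision in the lemma's hypotheses.
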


\paragraph{Mean-field framework.}
Let $d\in \mathbb N$ and denote $\pcal_c(\R^d)$ the set of compactly supported probability measures on $\R^d$.
We equip this set with the 2-Wasserstein distance, defined as
\begin{equation*}
    W_2(\mu, \nu) \coloneqq \Big (\inf_{\pi \in \Pi (\mu, \nu)} \int \lvert x-y\rvert^2 \dd \pi(x,y)\Big)^{1/2}
\end{equation*}
for $\mu, \nu \in \pcal_c(\R^d)$, where $\Pi (\mu, \nu)$ is the set of couplings between $\mu$ and $\nu$, i.e. of probability measures $\pi \in \mathcal{P}(\R^d \times \R^d)$ such that $\int \pi(\cdot, y) \dd y = \mu$ and $\int \pi(x, \cdot) \dd x = \nu$ (see for example \citet{santambrogio2015optimal} for more details on the subject).
Consider a map $F\colon \pcal_c(\R^d) \to \pcal_c(\R^d)$.
For any subset $\xcal \subset \pcal_c(\R^d)$, the Lipschitz constant of $F$ on $\xcal$ is defined as
\begin{equation*}
    \lip(F_{\xcal}) \coloneqq \sup_{\underset{\mu \neq \nu}{\mu, \nu \in \xcal}} \frac{W_2(F(\mu), F(\nu))}{W_2(\mu, \nu)}.
\end{equation*}
A notion of local Lipschitz constant can also be defined in the mean-field framework.
We defer it to Appendix \ref{appsubsec:general_local_lip_cst} as it only appears in some of our proofs.

Measuring the regularity of self-attention in Wasserstein distance is the natural generalization to the mean-field case of the Euclidean regularity in the case of a finite sequence length.  
Indeed, when $f$ is self-attention and $F$ is its mean-field generalization, we can connect the two frameworks as follows (see Appendix \ref{appsubsec:link_euclidian_meanfield}).

\begin{lemma}
    \label{lem:link_euclidian_meanfield}
    Let $R > 0$.
    We have
    $$\lip^{\norm{\cdot}_F} (f_{\lvert B_R^n}) \le \lip^{W_2} (F_{\lvert \pcal(B_R)}).$$
\end{lemma}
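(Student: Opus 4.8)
The plan is to reduce the Euclidean Lipschitz constant to a pointwise bound on the operator norm of the differential $D_X f$, and then to control $\norm{D_Xf}_2$ by transporting an infinitesimal perturbation of $X$ into the Wasserstein setting through the empirical-measure map $m\colon X\mapsto\frac1n\sum_{i=1}^n\delta_{x_i}$, using the identity $F(m(X))=m(f(X))$.

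First, $f$ is smooth and $\mathrm{int}(B_R^n)$ (the product of the open balls of radius $R$) is open, convex and dense in $B_R^n$, so a routine density argument gives $\lip^{\norm{\cdot}_F}(f_{\lvert B_R^n})=\lip^{\norm{\cdot}_F}(f_{\lvert \mathrm{int}(B_R^n)})$, and the previous lemma rewrites the latter as $\sup_{X\in\mathrm{int}(B_R^n)}\norm{D_Xf}_2$. It therefore suffices to prove $\norm{D_Xf[V]}_F\le\lip^{W_2}(F_{\lvert\pcal(B_R)})$ for each $X\in\mathrm{int}(B_R^n)$ and each $V$ with $\norm{V}_F=1$.

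Fix such $X$ and $V$ and set $X_t\coloneqq X+tV$, which stays in $\mathrm{int}(B_R^n)$ for $\lvert t\rvert$ small, so that $m(X_t),m(X)\in\pcal(B_R)$. The core claim is that, for $t>0$ small enough,
\[
W_2\big(m(X_t),m(X)\big)=\tfrac{t}{\sqrt n}\norm{V}_F,\qquad W_2\big(m(f(X_t)),m(f(X))\big)=\tfrac{1}{\sqrt n}\norm{f(X_t)-f(X)}_F.
\]
Both equalities rely on the fact that optimal transport between two uniform atomic measures with the same number of atoms is attained at a permutation (Birkhoff--von Neumann): writing $c>0$ for the smallest positive distance between two distinct atoms of $X$ (resp.\ of $f(X)$), any permutation that moves some atom onto an atom of a different value costs at least $\tfrac1n(c-O(t))^2$, whereas every permutation that respects the partition of atoms into equal-value blocks costs exactly $\tfrac{t^2}{n}\norm{V}_F^2$ (resp.\ $\tfrac1n\norm{f(X_t)-f(X)}_F^2$, by continuity of $f$); hence for $t$ small the minimizer is one of the latter, and the degenerate case where all atoms coincide is immediate. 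Granting the claim, combining $F(m(\cdot))=m(f(\cdot))$ with the definition of $\lip^{W_2}$ yields, for $t>0$ small,
\[
\tfrac{1}{\sqrt n}\norm{f(X_t)-f(X)}_F=W_2\big(F(m(X_t)),F(m(X))\big)\le\lip^{W_2}(F_{\lvert\pcal(B_R)})\,\tfrac{t}{\sqrt n}\norm{V}_F.
\]
Dividing by $t\norm{V}_F$ and letting $t\to0^+$ makes the left-hand side tend to $\norm{D_Xf[V]}_F$, giving the pointwise bound; taking the supremum over $V$ and then over $X$ concludes.

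The main obstacle is exactly the transport claim. One cannot just pair $x_i$ with the $i$-th token on both sides and divide, because $W_2$ between the image empirical measures can be \emph{strictly smaller} than $\tfrac1{\sqrt n}\norm{f(X)-f(Y)}_F$: the optimal coupling may permute atoms, which is why this naive bound fails for two arbitrary sequences $X,Y\in B_R^n$ and forces the infinitesimal argument. For small $t$ no nontrivial permutation can beat the identity matching, up to the clustering of repeated tokens --- the only delicate point, and the reason $c$ is defined from the \emph{distinct} atom values and the all-equal configuration is handled separately.
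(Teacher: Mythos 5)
Your proof is correct, and it rests on the same key observation as the paper's: for an empirical measure $m(Y)$ close enough to $m(X)$, the Wasserstein-optimal coupling is a permutation that can be taken block-respecting (identity on blocks of coincident atoms), so $W_2(m(X),m(Y)) = n^{-1/2}\norm{X-Y}_F$ holds locally, and likewise on the image side once $\norm{f(Y)-f(X)}_F$ is small by continuity. The only stylistic difference is that you route through the differential $D_Xf$ via the Federer lemma and a one-parameter family $X+tV$, whereas the paper's appendix works directly with the definition of the local Lipschitz constant as a limit over shrinking balls around $X$ and $m(X)$ (and in fact establishes the two local constants are equal, not just that one bounds the other).
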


\subsection{Lipschitz Bounds for Self-Attention in Different Regimes}

\paragraph{General upper bound.}
\citet{kim2021lipschitz} show that self-attention is not globally Lipschitz continuous.
Let us therefore restrict to sequences $(x_1, \dots, x_n)\in B_R^n$, where $B_R\subset \mathbb R^d$ is the closed ball centered at 0 and of radius $R$.
We have the following general bound (see Appendix \ref{appsubsec:unnorm_general_bound}).

\begin{theorem}
    \label{thm:unnorm_general_bound}
    Let $Q, K, V \in \R^{k\times d}$ and $A \coloneqq K^\top Q / \sqrt{k}$.
    Let $R > 0$ and $n \in \mathbb N$.
    Unmasked self-attention $f$ with parameters $(A, V)$ is Lipschitz continuous on $B_R^n$, with
    \begin{equation*}
        \lip \Big ( f _{\lvert B_R^n} \Big ) \le \sqrt{3} \norm{V}_2 \left ( \norm{A}_2^2 R^4 (4n + 1) + n \right )^{1/2}\!\!.
    \end{equation*}
\end{theorem}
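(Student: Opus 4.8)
The plan is to bound the operator norm of the differential $D_X f$ uniformly over $X \in B_R^n$ and then invoke the lemma of \citet{federer2014geometric} to conclude. Since $f(X)_i = V \sum_{j} P_{ij} x_j$, the differential splits naturally into two contributions: one where the $V x_j$ factors are differentiated (holding the attention weights $P_{ij}$ fixed), and one where the attention weights $P_{ij}$ are differentiated (holding the $x_j$ fixed). The first term is handled immediately: it contributes a block-diagonal-like map with norm at most $\norm{V}_2$, since $P_i$ is a probability vector and $\sum_j P_{ij} = 1$, and averaging is a contraction in Frobenius norm. The real work is in the second term, coming from $\partial P_{ij} / \partial x_\ell$.

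First I would write, for a perturbation $\dot X = (\dot x_1, \dots, \dot x_n)$,
\begin{equation*}
    (D_X f \cdot \dot X)_i = V \sum_{j=1}^n \dot P_{ij}\, x_j + V \sum_{j=1}^n P_{ij}\, \dot x_j,
\end{equation*}
where $\dot P_{ij}$ is the directional derivative of the softmax. Using the standard softmax Jacobian identity, $\dot P_{ij} = P_{ij}\big(\dot z_{ij} - \sum_\ell P_{i\ell} \dot z_{i\ell}\big)$ with $z_{ij} = x_i^\top A x_j$, so $\dot z_{ij} = \dot x_i^\top A x_j + x_i^\top A \dot x_j$. The key algebraic observation is that $\sum_j \dot P_{ij} x_j = \mathrm{Cov}_{P_i}(x, \dot z_{i\cdot})$ — a covariance under the probability vector $P_i$ — which immediately gives the bound $|\sum_j \dot P_{ij} x_j| \le 2R \cdot (\text{spread of } \dot z_{i\cdot})$, or more crudely $\le R \max_j |\dot z_{ij}|$ type estimates, but to get the right $n$-dependence one must be careful to keep things in $L^2$ rather than $L^\infty$. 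Specifically I would bound $|\sum_j \dot P_{ij} x_j| \le 2R \big(\sum_j P_{ij} \dot z_{ij}^2\big)^{1/2}$ using that covariance is bounded by the product of standard deviations and $\mathrm{std}_{P_i}(x) \le 2R$ (or just $R$ after re-centering; the constant is absorbed). Then $\dot z_{ij}^2 \le 2\norm{A}_2^2 R^2 (|\dot x_i|^2 + |\dot x_j|^2)$, so $\sum_j P_{ij} \dot z_{ij}^2 \le 2\norm{A}_2^2 R^2 (|\dot x_i|^2 + \sum_j P_{ij}|\dot x_j|^2)$.

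Summing over $i$ and taking square roots: the $|\dot x_i|^2$ term sums to $\norm{\dot X}_F^2$, and the $\sum_j P_{ij}|\dot x_j|^2$ term, summed over $i$, is $\sum_j (\sum_i P_{ij})|\dot x_j|^2$ — and here $\sum_i P_{ij}$ is \emph{not} bounded by $1$ in general (the attention matrix is row-stochastic, not column-stochastic), but it is trivially bounded by $n$, which is precisely the source of the $\sqrt{n}$ factor. Collecting, the attention-weight term contributes something of order $\norm{V}_2 \norm{A}_2 R^2 \sqrt{n} \norm{\dot X}_F$, the first term contributes $\norm{V}_2 \norm{\dot X}_F$, and combining the two via $(a+b)^2 \le 2(a^2+b^2)$ (or rather a three-way split to match the $\sqrt 3$ and the $4n+1$ in the statement) yields the claimed bound $\sqrt 3 \norm{V}_2 (\norm{A}_2^2 R^4(4n+1) + n)^{1/2}$. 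The main obstacle — and the place where the constants $4n+1$ versus $n$ get pinned down — is tracking exactly how the cross terms between $\dot x_i$ (the "query" perturbation) and $\dot x_j$ (the "key/value" perturbations) aggregate under the double sum $\sum_i \sum_j P_{ij}(\cdots)$, making sure the factor multiplying $R^4$ is genuinely $O(n)$ and not $O(n^2)$, which requires using row-stochasticity of $P$ wherever possible and only paying the $n$ when summing a column.
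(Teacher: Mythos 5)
Your overall strategy --- bound $\|D_X f\|_2$ pointwise via Cauchy--Schwarz and a variance bound, then sum over $i$ to extract the $n$-dependence --- matches the paper's proof, and your treatment of the attention-weight term (the covariance interpretation of $\sum_j \dot P_{ij}x_j$, the bound $\dot z_{ij}^2 \le 2\norm{A}_2^2 R^2(\lvert\dot x_i\rvert^2 + \lvert\dot x_j\rvert^2)$, and the observation that $P$ is row- but not column-stochastic) is sound. However, there is a genuine error in how you handle the value-averaging term $V\sum_j P_{ij}\dot x_j$.

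You assert that this term ``contributes a block-diagonal-like map with norm at most $\norm{V}_2$, since $P_i$ is a probability vector \ldots{} and averaging is a contraction in Frobenius norm.'' That is false. Row-stochasticity makes left-multiplication by $P$ a contraction in the $\ell^\infty$ norm (maximum over positions), not in Frobenius norm. Concretely, take $P_{ij}=\mathbf{1}_{j=1}$ (all rows attend exclusively to token $1$) and $\dot x_j = v\,\mathbf{1}_{j=1}$: then $\sum_j P_{ij}\dot x_j = v$ for every $i$, so the output has Frobenius norm $\sqrt{n}\,\lvert v\rvert$ while the input has Frobenius norm $\lvert v\rvert$. The operator norm of $\epsilon\mapsto(\sum_j P_{ij}\epsilon_j)_i$ is therefore $\sqrt{n}$ in the worst case, not $1$. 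The correct bound --- and the one the paper uses --- is $\bigl\lvert\sum_j P_{ij}\epsilon_j\bigr\rvert^2 \le \sum_j P_{ij}\lvert\epsilon_j\rvert^2 \le \norm{\epsilon}_F^2$ for each $i$, which after summing over $i$ gives $n$, hence $\sqrt{n}$. This is precisely the origin of the additive $n$ inside the parenthesis of the stated bound; under your $\norm{V}_2$ claim it would be $+1$, which contradicts the final bound you say you recover. The very same failure of column-stochasticity that you correctly flag for the softmax-derivative term already produces the $\sqrt{n}$ growth at the level of the ``frozen attention'' averaging, before you differentiate the weights at all --- and the fix is the same Cauchy--Schwarz step you already apply elsewhere.
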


Theorem \ref{thm:unnorm_general_bound} shows that when tokens are restricted to the compact set $B_R$, the Lipschitz constant of self-attention grows at most like $\sqrt{n}$ with the sequence length $n$, up to a constant factor.
On the other side, the growth rate $\sqrt{n}$ in Theorem \ref{thm:unnorm_general_bound} is tight as long as $n$ is not too large, a statement made rigorous by the following result (see Appendix \ref{appsubsec:lower_bound_sqrt}).

\begin{proposition}
    \label{prop:lower_bound_sqrt}
    Let $Q, K \in \R^{k\times d}$ and $V\coloneqq I_d$.
    Let $A \coloneqq K^\top Q / \sqrt{k}$.
    Denote $f$ unmasked self-attention with parameters $(A, V)$.
    Let $\gamma_1 \ge \dots \ge \gamma_\delta$ be the real eigenvalues of $A$.
    Then, for any $R > 0$, and denoting $\gamma \coloneqq \max(- \gamma_\delta, \gamma_1 / 8)$, it holds
    $$\lip(f_{\lvert B_R^n}) \ge \frac{1}{1 + (n-1)e^{- 2 R^2 \gamma}} \sqrt{n - 1}.$$
\end{proposition}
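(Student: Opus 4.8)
The plan is to lower-bound $\lip(f_{\lvert B_R^n})$ by exhibiting two explicit inputs in $B_R^n$ whose images under $f$ are far apart relative to their distance. Fix a real unit eigenvector $u$ of $A$ (which exists since the $\gamma_i$ are assumed to), associated with $\gamma_1$ or $\gamma_\delta$ according to which of $\gamma_1/8,\ -\gamma_\delta$ realizes $\gamma$, and pick $p,q\in B_R$ collinear with $u$, to be specified below. Set $X\coloneqq (q,p,p,\dots,p)$ — one outlier token followed by a cluster of $n-1$ identical tokens — and $Y\coloneqq (p,p,\dots,p)$, so that $\norm{X-Y}_F = \lvert q-p\rvert$. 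The intuition is that in $X$ the $n-1$ cluster tokens all share the same attention distribution, differing from uniform only in the weight put on token $1$; hence their outputs are shifted \emph{coherently} by a common multiple of $q-p$, which produces a change of Frobenius norm of order $\sqrt{n-1}$.

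Concretely: in $Y$ all tokens coincide, so $P_{ij}=1/n$ and $f(Y)_i = Vp$ for every $i$. In $X$, for a cluster index $i\in\{2,\dots,n\}$ we have $x_i = p$, so the logits $x_i^\top A^\top x_j$ equal $p^\top A^\top q$ for $j=1$ and $p^\top A^\top p$ for $j\ge 2$; thus $P_{i1}=\beta$ with
\[
    \beta \;=\; \frac{1}{1+(n-1)\exp\!\big(p^\top A^\top(p-q)\big)},
\]
and $\sum_{j\ge2}P_{ij}x_j = (1-\beta)p$, so $f(X)_i = V(\beta q + (1-\beta)p) = Vp + \beta V(q-p)$, i.e. $f(X)_i - f(Y)_i = \beta V(q-p)$ for every $i\in\{2,\dots,n\}$. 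Keeping only the $n-1$ equal cluster contributions, $\norm{f(X)-f(Y)}_F \ge \sqrt{n-1}\,\beta\,\lvert V(q-p)\rvert = \sqrt{n-1}\,\beta\,\lvert q-p\rvert$ since $V=I_d$. As $X\neq Y$ (we will have $q\neq p$), this gives $\lip(f_{\lvert B_R^n}) \ge \norm{f(X)-f(Y)}_F / \norm{X-Y}_F \ge \sqrt{n-1}\,\beta$, and it remains only to choose $p,q\in B_R$ with $p^\top A^\top(p-q)\le -2R^2\gamma$, which forces $\beta \ge (1+(n-1)e^{-2R^2\gamma})^{-1}$ and finishes the proof.

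For that last step, write $p=\rho u$, $q=\sigma u$ with $\lvert\rho\rvert,\lvert\sigma\rvert\le R$; using $u^\top A^\top u = u^\top A u$ (a scalar equals its transpose) and $Au = \lambda u$ with $\lambda\in\{\gamma_1,\gamma_\delta\}$ of unit-norm eigenvector, one gets $p^\top A^\top(p-q) = \lambda\,\rho(\rho-\sigma)$. If $\gamma=-\gamma_\delta>0$, take $\lambda=\gamma_\delta<0$, $\rho=R$, $\sigma=-R$, giving $\lambda\,\rho(\rho-\sigma) = 2R^2\gamma_\delta = -2R^2\gamma$. If $\gamma=\gamma_1/8>0$, take $\lambda=\gamma_1>0$ and minimize $\gamma_1\rho(\rho-\sigma)$: the worst $\sigma$ is $\sigma=R$, and $\rho\mapsto\rho(\rho-R)$ is minimized on $[0,R]$ at $\rho=R/2$ with value $-R^2/4$, so $p=(R/2)u,\ q=Ru$ yield $-\gamma_1R^2/4 = -2R^2\gamma$; note $\lvert p\rvert=R/2\le R$. (If $\gamma=0$ all real eigenvalues vanish and $\rho=\sigma=0$, perturbed slightly so $X\neq Y$, already gives $\beta\to 1/n$, matching the bound.) The step to get right is exactly this optimization: it is what produces the constant $1/8$ (equivalently the $R^2\gamma_1/4$), and it also explains why a plain first-order argument is weaker — linearizing $f$ at $Y$, where attention is uniform, would only see the weight $1/n$ in place of $\beta$, whereas the secant between $X$ and $Y$ sees the attention weight $\beta$ of the genuinely displaced configuration, which is the quantity that is large when $R$ or $\gamma$ is large.
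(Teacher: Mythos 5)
Your proof is correct and establishes the stated lower bound. It takes a genuinely different route from the paper's. The paper evaluates the Jacobian at a single configuration: it takes the input $RX$ with $X=(u,u/2,\dots,u/2)$ (resp.\ $(u,-u,\dots,-u)$), represents its empirical measure as a two-atom weighted measure with weights $(1/n,(n-1)/n)$, and computes the differential of the weighted self-attention map $f_a$ (Appendix B.3) at this two-point configuration in the direction $\epsilon=(-u,0)$; the quantity $P_{21}^a$ that appears there equals your $\beta$, and the factor $\sqrt{n-1}$ emerges from the weighted operator norm. You instead lower-bound the Lipschitz constant by a secant: you compare $f$ at $X=(q,p,\dots,p)$ and $Y=(p,\dots,p)$ directly, exploiting that each output has a closed form because the attention distribution is constant across cluster rows. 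Both routes land on exactly $\sqrt{n-1}\,\beta$ with the same $\beta$, so the final estimate is identical; yours is more elementary, needing neither the Jacobian formula nor the weighted-attention framework, only the definition of softmax and of the Lipschitz constant. Your explicit minimization of $\lambda\rho(\rho-\sigma)$ over $\rho,\sigma\in[-R,R]$ is also a nice complement: it shows the two configurations $(Ru,(R/2)u,\dots)$ and $(Ru,-Ru,\dots)$ are extremal within this two-point family and thereby explains the provenance of the constant $1/8$ in the definition of $\gamma$, which the paper leaves implicit. One small remark: your aside about a first-order argument at $Y$ being weaker is a valid caution, but the paper does not do that — it differentiates at the outlier configuration $RX$, not at $Y$, which is why its differential bound and your secant bound coincide.
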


Proposition \ref{prop:lower_bound_sqrt} shows that for any radius $R > 0$, the sequence of Lipschitz constants $ (\lip f_{\lvert B_R^n} )_{n\in \mathbb N}$ grows faster than $\sqrt{n}$ up to a constant factor in a certain range of sequence lengths $n$.
For example, if $n\le 1 + e^{2R^2 \gamma}$, then 
$$\lip(f_{\lvert B_R^n}) \ge \frac{\sqrt{n - 1}}{2},$$
and we check that for real data and pretrained GPT-2 and BERT, the factor $R^2 \gamma$ is of the order of $10^2$ to $10^3$ (see Appendix \ref{appsubsec:scaling_factor}) so that the inequality $n\le 1 + e^{2R^2 \gamma}$ is always satisfied in practice.
Note that Proposition \ref{prop:lower_bound_sqrt} gives a worst-case lower bound: there are inputs with a large sequence length and a small local Lipschitz constant, such as $X\coloneqq (x, \dots, x) \in (\R^d)^n$ for any $x\in\R^d$ and $n$, which satisfies $\norm{D_X f}_2 = 1$.

\paragraph{Mean-field regime.}
What happens when the sequence length $n$ is extremely large?
As explained above, the bound provided by Theorem \ref{thm:unnorm_general_bound} becomes too loose -- it even goes to $+\infty$ when $n \to +\infty$ with fixed $R$ and $r$.
For very large sequence lengths, this bound can be refined by leveraging the mean-field framework, as follows.

\begin{theorem}
    \label{thm:mean_field_unnorm}
    Let $R>0$ and $A, V \in \R^{k\times d}$.
    The mean-field self-attention map $F$ with parameters $(A, V)$ is $W_2$-Lipschitz continuous on the set $\mathcal{P}(B_R)$, and its Lipschitz constant is bounded by
    $$\lip^{W_2}(F_{\lvert \mathcal{P}(B_R})) \le \norm{V}_2 (1 + 3\norm{A}_2R^2) e^{2\norm{A}_2 R^2}.$$
\end{theorem}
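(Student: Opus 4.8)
The plan is to estimate $W_2(F(\mu), F(\nu))$ by transporting an optimal coupling of $\mu$ and $\nu$ through the barycentric maps $\Gamma_\mu$ and $\Gamma_\nu$. Fix $\mu, \nu \in \pcal(B_R)$; note first that $\Gamma_\mu$ is well defined since its denominator $Z_\mu(x) \coloneqq \int \exp(x^\top A^\top y)\,\dd\mu(y)$ is positive, and that $F(\mu)$ is supported in $B_{\norm{V}_2 R}$, so $F$ indeed maps $\pcal(B_R)$ into $\pcal_c(\R^d)$. Now take $\pi \in \Pi(\mu, \nu)$ with $\int |x-y|^2\,\dd\pi = W_2^2(\mu,\nu)$. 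Since $(\Gamma_\mu \times \Gamma_\nu)_\sharp \pi$ is a coupling of $F(\mu) = (\Gamma_\mu)_\sharp \mu$ and $F(\nu) = (\Gamma_\nu)_\sharp \nu$,
$$W_2^2(F(\mu),F(\nu)) \le \int \big|\Gamma_\mu(x) - \Gamma_\nu(y)\big|^2\,\dd\pi(x,y),$$
and it remains to control $|\Gamma_\mu(x) - \Gamma_\nu(y)|$ for $x, y \in B_R$. Writing $|\Gamma_\mu(x) - \Gamma_\nu(y)| \le |\Gamma_\mu(x) - \Gamma_\mu(y)| + |\Gamma_\mu(y) - \Gamma_\nu(y)|$ separates a \emph{spatial} term, handled by the Lipschitz constant of $\Gamma_\mu$ on $B_R$, from a \emph{measure-sensitivity} term.

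For the spatial term, I would note that $\Gamma_\mu(x) = \mathbb{E}_{P_x}[Vy]$, where $P_x$ is the probability measure with density proportional to $y \mapsto \exp(x^\top A^\top y)$ with respect to $\mu$. Differentiating the Softmax weights in $x$ gives the Jacobian as a cross-covariance, $D_x\Gamma_\mu[h] = \mathrm{Cov}_{P_x}(Vy,\, \langle A^\top y, h\rangle)$; this identity is the conceptual heart of the argument, being the place where the nonlinearity of attention must be faced. Since $P_x$ is carried by $\mathrm{supp}(\mu) \subseteq B_R$, Cauchy--Schwarz for covariances together with $\var_{P_x}(\langle Vy, e\rangle) \le \norm{V}_2^2 R^2$ and $\var_{P_x}(\langle A^\top y, h\rangle) \le \norm{A}_2^2 R^2$ for unit vectors $e, h$ yields $\norm{D_x\Gamma_\mu}_2 \le \norm{V}_2 \norm{A}_2 R^2$, hence $|\Gamma_\mu(x) - \Gamma_\mu(y)| \le \norm{V}_2\norm{A}_2 R^2\,|x-y|$ by convexity of $B_R$.

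For the measure-sensitivity term I would write $\Gamma_\mu = N_\mu / Z_\mu$ with $N_\mu(x) \coloneqq \int \exp(x^\top A^\top y) Vy\,\dd\mu(y)$, and use the pointwise identity $\Gamma_\mu - \Gamma_\nu = (N_\mu - N_\nu)/Z_\mu + \Gamma_\nu (Z_\nu - Z_\mu)/Z_\mu$. The main obstacle is the denominator: for $x,y \in B_R$ we have $x^\top A^\top y \ge -\norm{A}_2 R^2$, so $Z_\mu(x) \ge e^{-\norm{A}_2 R^2}$, and this is precisely where the exponential factor enters and where the bound becomes loose for large $R$. The differences $|N_\mu(x) - N_\nu(x)|$ and $|Z_\mu(x) - Z_\nu(x)|$ are differences of integrals against $\mu$ and $\nu$ of functions that are Lipschitz on $B_R$ -- with constants at most $\norm{V}_2 e^{\norm{A}_2 R^2}(1 + \norm{A}_2 R^2)$ and $\norm{A}_2 R\, e^{\norm{A}_2 R^2}$ respectively, by the product rule -- so by Kantorovich--Rubinstein duality they are bounded by those constants times $W_1(\mu,\nu) \le W_2(\mu,\nu)$. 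Using also $|\Gamma_\nu| \le \norm{V}_2 R$, one collects $|\Gamma_\mu(y) - \Gamma_\nu(y)| \le \norm{V}_2 e^{2\norm{A}_2 R^2}(1 + 2\norm{A}_2 R^2)\, W_2(\mu,\nu)$.

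It then remains to assemble. With $a \coloneqq \norm{V}_2\norm{A}_2 R^2$ and the constant $b \coloneqq \norm{V}_2 e^{2\norm{A}_2 R^2}(1 + 2\norm{A}_2 R^2)\,W_2(\mu,\nu)$, the integrand above is at most $(a|x-y| + b)^2$; expanding and using $\int|x-y|^2\,\dd\pi = W_2^2(\mu,\nu)$ together with $\int |x-y|\,\dd\pi \le W_2(\mu,\nu)$ gives $W_2^2(F(\mu),F(\nu)) \le (a\,W_2(\mu,\nu) + b)^2$. Dividing by $W_2(\mu,\nu)$ yields $\lip^{W_2}(F_{\lvert \pcal(B_R)}) \le \norm{V}_2\norm{A}_2 R^2 + \norm{V}_2 e^{2\norm{A}_2 R^2}(1 + 2\norm{A}_2 R^2)$, and since $e^{2\norm{A}_2 R^2} \ge 1$ the first summand is absorbed into the second, giving the stated bound $\norm{V}_2(1 + 3\norm{A}_2 R^2)e^{2\norm{A}_2 R^2}$.
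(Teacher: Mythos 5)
Your proof is correct and follows essentially the same route as the paper's (which defers to \citet{geshkovski2023emergence} and is mirrored in the paper's proof of \autoref{thm:masked_mean_field} in Appendix D.2): a two-term decomposition into a spatial term controlled by $\lip(\Gamma_\mu)\le\norm{V}_2\norm{A}_2R^2$ and a measure-sensitivity term controlled via Kantorovich--Rubinstein with the $e^{2\norm{A}_2R^2}$ factor coming from the lower bound on the normalizing constant. The only cosmetic difference is that you apply the triangle inequality pointwise inside the coupling integral $(\Gamma_\mu\times\Gamma_\nu)_\sharp\pi$, whereas the paper applies the triangle inequality at the level of $W_2$ between $(\Gamma_\mu)_\sharp\mu$, $(\Gamma_\mu)_\sharp\nu$, $(\Gamma_\nu)_\sharp\nu$ and then invokes Lemmas A.6 and A.7; the intermediate constants match.
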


Theorem \ref{thm:mean_field_unnorm} follows from computations made by \citet{geshkovski2023emergence}.
We state it to draw a full picture of the regularity of self-attention on compact sets.
Let us highlight the following connection between Theorem \ref{thm:mean_field_unnorm} and Theorem \ref{thm:unnorm_general_bound}.
For any radius $R >0$, sequence length $n\in \mathbb N$ and input $X\in B_R^n$, it holds, according to Lemma \ref{lem:link_euclidian_meanfield} and Theorem \ref{thm:mean_field_unnorm}:
\begin{equation}
    \label{eq:mean_field_bound}
    \norm{D_Xf}_2 \le \norm{V}_2 (1 + 3\norm{A}_2R^2) e^{2\norm{A}_2 R^2}.
\end{equation}
On the other hand, Theorem \ref{thm:unnorm_general_bound} tells us that
$$\norm{D_Xf}_2 \le \sqrt{3} \norm{V}_2 \left ( \norm{A}_2^2 R^4 (4n + 1) + n \right )^{1/2}.$$
When $n$ is relatively small, Theorem \ref{thm:unnorm_general_bound} is more relevant than Equation (\ref{eq:mean_field_bound}), and vice versa for $n$ very large.
In the following Proposition, we identify an edge regime where both bounds have a similar growth in $R$, which corresponds to $n\sim_{R\to +\infty}e^{cR^2}$ for some constant factor $c > 0$.
In this edge regime, the bound of Theorem \ref{thm:unnorm_general_bound} appears to be tight both in $n$ and $R$, and the bound of Theorem \ref{thm:mean_field_unnorm} is almost tight in $R$ up to a constant factor in the exponential.

\begin{proposition}
    \label{prop:unnorm_lower_bound_mf}
    Let $R>0$. 
    Assume that $k = d$ and $V = I_d$, and denote $\gamma_1\ge \dots \ge \gamma_\delta$ the real eigenvalues of $A$.
    Let $\gamma \coloneqq \max(- \gamma_\delta, \gamma_1 / 8)$.
    Then, if $n\sim_{R\to +\infty} \exp(2 \gamma R^2)$, there exists a function $\theta \colon [0, +\infty) \to [0, +\infty)$ such that $\theta(R) \to_{R\to +\infty} 1$ and:
        $$\lip(f_{\lvert B_R^n}) \ge  \theta (R) \frac{\gamma}{2} R^2  e^{\gamma R^2}.$$
\end{proposition}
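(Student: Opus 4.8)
The plan is to exhibit, for each $R$, an explicit two-cluster input $X\in B_R^{n}$ with $n=n(R)\sim e^{2\gamma R^2}$ and a single-token perturbation direction $\hat U$ of unit Frobenius norm, with $X+t\hat U\in B_R^n$ for small $t\ge0$, such that $\norm{D_Xf[\hat U]}_F\gtrsim\tfrac{\gamma}{2}R^2e^{\gamma R^2}$; since $\lip(f_{\lvert B_R^n})\ge\lim_{t\to0^+}\norm{f(X+t\hat U)-f(X)}_F/t=\norm{D_Xf[\hat U]}_F$, this proves the claim. First I would fix a real eigenvalue $\lambda$ of $A$ and a unit eigenvector $e_1$ with $A^\top e_1=\lambda e_1$: placing every token on the line $\R e_1$ reduces the construction to a one-dimensional one, since the attention score between $ce_1$ and $c'e_1$ is $\lambda cc'$. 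I take one \emph{outlier} token at $a=Re_1$ and $n-1$ identical \emph{cluster} tokens at $b=\rho e_1$, with $\rho$ to be chosen; this input is the empirical measure $\mu_0=\tfrac1n\delta_a+\tfrac{n-1}{n}\delta_b$, so $f(X)_i=\Gamma_{\mu_0}(x_i)$ by the identity $F(m(X))=m(f(X))$. For $\hat U$ I translate only the outlier towards the origin, along $h:=-e_1$, which keeps $X$ in $B_R^n$; then $X+t\hat U$ corresponds to $\mu_t=\tfrac1n\delta_{a+th}+\tfrac{n-1}{n}\delta_b$, all $n-1$ cluster outputs move identically (to first order, by $\partial_t\Gamma_{\mu_t}(b)\big|_{t=0}$), and therefore
$$\norm{D_Xf[\hat U]}_F\ \ge\ \sqrt{n-1}\,\big|\partial_t\Gamma_{\mu_t}(b)\big|_{t=0}.$$

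Next I would compute this derivative. Writing $E_a:=\tfrac1n e^{b^\top A^\top a}$ and $E_b:=\tfrac{n-1}{n}e^{b^\top A^\top b}$ for the unnormalized attention the cluster puts on the outlier and on itself, a direct differentiation (using $V=I_d$) gives
$$\partial_t\Gamma_{\mu_t}(b)\big|_{t=0}=\frac{E_aE_b\,(b^\top A^\top h)\,(a-b)}{(E_a+E_b)^2}+\frac{E_a\,h}{E_a+E_b}.$$
In the case $\gamma=\gamma_1/8>0$ I would take $\lambda=\gamma_1$ and $\rho=R/2$; the case $\gamma=-\gamma_\delta>0$ is analogous with $\lambda=\gamma_\delta$ and $\rho=-R$ (antipodal clusters), and the case $\gamma=0$ is trivial. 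In both nontrivial cases one checks that $E_b/E_a=(n-1)e^{-2\gamma R^2}$, which tends to $1$ exactly because $n\sim e^{2\gamma R^2}$, so that $E_aE_b/(E_a+E_b)^2\to\tfrac14$; moreover with $h=-e_1$ the two terms above both point along $-e_1$, the first one having magnitude $\tfrac{E_aE_b}{(E_a+E_b)^2}\,2\gamma R^2$. Discarding the (nonnegative) second term and using $\sqrt{n-1}=\sqrt{(n-1)e^{-2\gamma R^2}}\;e^{\gamma R^2}$, one obtains $\lip(f_{\lvert B_R^n})\ge\theta(R)\,\tfrac{\gamma}{2}R^2e^{\gamma R^2}$ with $\theta(R):=\dfrac{4\,r_R^{3/2}}{(1+r_R)^2}$ and $r_R:=(n-1)e^{-2\gamma R^2}\to1$, hence $\theta(R)\to1$.

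It is worth explaining why the construction is calibrated this way. The factor $\sqrt{n-1}$ is an amplification: the $n-1$ cluster tokens all react coherently to the one-token perturbation, each by an amount of order $R^2$, so the Frobenius response is of order $\sqrt{n}\,R^2\sim R^2e^{\gamma R^2}$. For each cluster token to react by order $R^2$ (and not exponentially less) the cluster must attend to itself and to the outlier with comparable weight, i.e. $E_a\approx E_b$; this requires $n-1\approx e^{b^\top A^\top a-b^\top A^\top b}=e^{\gamma_1\rho(R-\rho)}$, which is compatible with the prescribed growth $n\sim e^{2\gamma R^2}$ only because $\rho(R-\rho)\le R^2/4$ gives $\gamma\le\gamma_1/8$ — this is the source of the constant $8$ — and at the extreme value $\gamma=\gamma_1/8$ it forces $\rho=R/2$. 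In the negative-eigenvalue case the analogous exponent $\lvert\gamma_\delta\rvert\,\lvert\rho\rvert\,(R+\lvert\rho\rvert)$ is maximal at $\lvert\rho\rvert=R$, which is why $\gamma=-\gamma_\delta$ carries no extra factor there.

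The main difficulty is obtaining the exponent $\gamma R^2$ and not a smaller one. A less careful perturbation loses the $\sqrt{n-1}$ gain: moving the whole cluster inflates $\norm{\hat U}_F$ by $\sqrt{n-1}$, while tracking only the outlier's own output picks up a mere polynomial-in-$R$ sensitivity; either way one gets at best $R^2e^{\gamma R^2/2}$. The crucial observation is that perturbing a lone boundary token is cheap in Frobenius norm yet forces a correlated order-$R^2$ displacement of all $n-1$ cluster outputs, and that the cluster may be taken as large as $e^{2\gamma R^2}$ while keeping the attention weights balanced. The rest is bookkeeping — controlling $r_R\to1$, the discarded term $\tfrac{E_a\,h}{E_a+E_b}$, and the linearization error as $t\to0$ — which together yield the prefactor $\theta(R)\to1$.
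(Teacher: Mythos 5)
Your proof is correct and uses essentially the same construction as the paper's (Appendix C.5): a lone outlier and an $(n-1)$-point cluster on the line spanned by a real eigenvector of $A$, a perturbation of the outlier alone, and the calibration $\rho\in\{R/2,\,-R\}$ so that the cluster's attention splits evenly between itself and the outlier exactly when $n\sim e^{2\gamma R^2}$. The only framing difference is that the paper computes in the \emph{weighted} self-attention setting with the two-Dirac measure of weights $\big(1-e^{-2\gamma R^2},\,e^{-2\gamma R^2}\big)$, obtaining a $W_2$ mean-field lower bound and leaving the passage to the $n$-token empirical measure implicit, whereas you work directly with $n$ tokens and the Frobenius norm, which gives the finite-$n$ statement without an unspoken approximation step; the key identities (the derivative $\partial_t\Gamma_{\mu_t}(b)$, the balance $E_b/E_a\to1$, and $\theta(R)=4r_R^{3/2}/(1+r_R)^2$) all check out.
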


One sees that the dependency in $R$ of the lower bound in Proposition \ref{prop:unnorm_lower_bound_mf} is catastrophic.
Fortunately, in practical cases, $n$ is significantly smaller than $e^{2\gamma R^2}$, and the mean-field regime bound is over-pessimistic: one should rather use Theorem \ref{thm:unnorm_general_bound}.
It is also interesting to note that configurations that lead to the explosion of the right-hand side in Proposition \ref{prop:unnorm_lower_bound_mf} are made of two extremely unbalanced clusters, one with 1 vector, and the other with the other $n-1$ vectors of the sequence (see Appendix \ref{appsubsec:unnorm_lower_bound_mf}).

\paragraph{Large radius regime.}
Let us now analyze a third regime: the large radius regime, where $R$ goes to infinity with a fixed $n$.
This complements the mean-field analysis, where $R$ is fixed and $n$ goes to infinity.
Let $n \in \mathbb N$ be a fixed sequence length.
We show, drawing inspiration from \citet{kim2021lipschitz}, that there exist configurations with $n$ particles supported in $B_R$ whose local Lipschitz constant grows like $R^2$ up to constant factors (see Appendix \ref{appsubsec:quadratic_growth}).
However, if we rule out a measure-zero set of pathological configurations,
we get in the large radius regime that the Lipschitz constant grows at most like $\sqrt{n}$ up to a \emph{constant} factor.

\begin{theorem}
    \label{thm:unnorm_large_R_regime}
    Let $A \in \R^{d\times d}$ and $V\in \R^{k\times d}$ two non-zero matrices.
    Denote $\mathcal{E}_A \subset \bbar(0,1)^n$ the set of sequences $(x_1, \dots, x_n)$ such that for all $i \in \{1, \dots, n\}$, the maximum $\max_{1\le j \le n} x_i^\top A^\top x_j$ is reached at a unique index $j$, denoted $m_i$.
    The complement of $\mathcal{E}_A$ in $\bbar(0,1)^n$ has zero Lebesgue measure.
    Moreover, for any $X \in \mathcal{E}_A$, there exists a function $\theta\colon [0, +\infty) \to [0, +\infty)$ such that $\theta(R)$ converges exponentially fast to 1 when $R\to +\infty$ and
    $$\norm{D_{RX}f}_2 \le \theta(R) \norm{V}_2 \sqrt{n}.$$
\end{theorem}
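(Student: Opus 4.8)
I would prove the two assertions separately. The genericity statement is a routine ``nonzero polynomial $\Rightarrow$ measure-zero zero set'' argument, and the quantitative bound is obtained by revisiting the Jacobian computation that underlies Theorem~\ref{thm:unnorm_general_bound}, exploiting the fact that on $\mathcal{E}_A$ the attention matrix $P(RX)$ converges to a one-hot (hard-selection) matrix at rate $e^{-\Delta R^2}$ as $R\to+\infty$, where $\Delta$ is the spectral gap of $X$ defined below. For the first point: if $X=(x_1,\dots,x_n)\notin\mathcal{E}_A$, then for some $i$ the maximum $\max_\ell x_i^\top A^\top x_\ell$ is attained at two distinct indices $j\neq j'$, hence $p_{i,j,j'}(X):=x_i^\top A^\top(x_j-x_{j'})=0$. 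So the complement of $\mathcal{E}_A$ in $\bbar(0,1)^n$ is contained in $\bigcup_{i}\bigcup_{j\neq j'}\{p_{i,j,j'}=0\}$, a finite union of zero sets of polynomials in the $nd$ coordinates of $X$. Each $p_{i,j,j'}$ is $\not\equiv 0$: when $i\notin\{j,j'\}$ it is a nonzero bilinear form because $A\neq 0$, and when $i\in\{j,j'\}$ one specializes the ``other'' variable to $0$ and uses that either $x\mapsto x^\top A^\top x$ is a nonzero quadratic form (if $A$ is not antisymmetric) or $x\mapsto Ax\neq 0$ (if $A$ is antisymmetric). A nonzero polynomial vanishes on a Lebesgue-null set, and a finite union of null sets is null.

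\textbf{Hard-attention limit.} Fix $X\in\mathcal{E}_A$ and set $\Delta:=\min_i\min_{j\neq m_i}\big(x_i^\top A^\top x_{m_i}-x_i^\top A^\top x_j\big)>0$. Evaluating at $RX$, for $j\neq m_i$ one has $P_{ij}(RX)\le e^{-R^2(x_i^\top A^\top x_{m_i}-x_i^\top A^\top x_j)}\le e^{-\Delta R^2}$, hence $0\le 1-P_{im_i}(RX)\le (n-1)e^{-\Delta R^2}$ and every entry of the softmax Jacobian $S^{(i)}:=\mathrm{diag}(P_i(RX))-P_i(RX)P_i(RX)^\top$ is bounded in absolute value by $(n-1)e^{-\Delta R^2}$. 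Differentiating $f(Z)_i=V\sum_j P_{ij}(Z)z_j$ in a direction $H=(h_1,\dots,h_n)$ and substituting $z_j=Rx_j$ gives the decomposition
$$D_{RX}f(H)_i=V\big(h_{m_i}+a_i+R^2 b_i\big),$$
where $a_i:=\sum_j\big(P_{ij}(RX)-\mathbf 1_{j=m_i}\big)h_j$ measures the deviation of the attention weights from the one-hot vector $e_{m_i}$, and $b_i:=\sum_j\big(\sum_\ell S^{(i)}_{j\ell}(h_i^\top A^\top x_\ell+x_i^\top A^\top h_\ell)\big)x_j$ comes from differentiating the softmax — the factor $R^2$ being produced by one factor $R$ from the bilinear perturbation of the attention logits and one factor $R$ from the output vectors $Rx_j$. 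Using the bounds above together with $\sum_j|P_{ij}(RX)-\mathbf 1_{j=m_i}|=2(1-P_{im_i}(RX))$, $|h_i^\top A^\top x_\ell+x_i^\top A^\top h_\ell|\le 2\norm{A}_2\norm{H}_F$ and $|x_j|\le 1$, one gets $|a_i+R^2 b_i|\le\varepsilon(R)\norm{H}_F$ with $\varepsilon(R):=2(n-1)\big(1+n^2\norm{A}_2 R^2\big)e^{-\Delta R^2}$.

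\textbf{Operator norm and conclusion.} Writing $n_k:=\#\{i:m_i=k\}$, the $\ell^2$ triangle inequality yields
$$\norm{D_{RX}f(H)}_F\le\norm{V}_2\Big(\big(\textstyle\sum_i|h_{m_i}|^2\big)^{1/2}+\sqrt n\,\varepsilon(R)\norm{H}_F\Big),$$
and since $\sum_i|h_{m_i}|^2=\sum_k n_k|h_k|^2\le(\max_k n_k)\norm{H}_F^2\le n\norm{H}_F^2$, this gives $\norm{D_{RX}f}_2\le\theta(R)\norm{V}_2\sqrt n$ with $\theta(R):=1+\varepsilon(R)$. Finally $\theta(R)-1=O(R^2 e^{-\Delta R^2})\to 0$ as $R\to+\infty$, faster than any exponential; in particular $\theta(R)\to 1$ exponentially fast, which is the claim. (At $R=0$ the bound is vacuously true as $\theta(0)=2n-1$.)

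\textbf{Main obstacle.} The only delicate point is the bookkeeping of the powers of $R$ in the error term: the differential of $f$ at the blown-up point $RX$ carries a spurious factor $R^2$, and one must verify that the exponentially small size $e^{-\Delta R^2}$ of the softmax Jacobian — which is precisely where the hypothesis $X\in\mathcal{E}_A$ enters, via $\Delta>0$ — absorbs it. Everything else is the standard Jacobian computation behind Theorem~\ref{thm:unnorm_general_bound} specialized to the hard-attention limit, together with the elementary polynomial non-vanishing used for the measure-zero statement.
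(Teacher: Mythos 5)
Your proof is correct and follows the same overall route as the paper (hard\hyp attention limit of the Jacobian), but it is more careful in two places where the paper is imprecise, and these are worth noting. For the measure\hyp zero claim, the paper asserts that any $X\notin\mathcal{E}_A$ must have some $Ax_i=0$ or some coinciding $x_i$; that characterization is actually incomplete (e.g.\ with $A=-I_2$, $x_1=(0,1)$, $x_2=(1,0)$, $x_3=(-1,0)$, the max for $i=1$ is tied between $j=2$ and $j=3$ although all $Ax_i\neq 0$ and all $x_i$ are distinct). Your argument --- that $\mathcal{E}_A^c$ is covered by the zero sets of the polynomials $p_{i,j,j'}(X)=x_i^\top A^\top(x_j-x_{j'})$, each $\not\equiv 0$ because $A\neq 0$ (with the antisymmetric case handled separately when $i\in\{j,j'\}$) --- is the right way to close this. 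For the quantitative part, the paper only states that $P_{ij}\to\mathbf 1_{j=m_i}$ exponentially and that ``$R^2P_{ij}$ has the same limit as $P_{ij}$'', which is literally false for $j=m_i$; what it really relies on, and what you make explicit, is that the entries of the softmax Jacobian $S^{(i)}=\mathrm{diag}(P_i)-P_iP_i^\top$ all decay like $e^{-\Delta R^2}$ with $\Delta:=\min_i\min_{j\neq m_i}\bigl(x_i^\top A^\top x_{m_i}-x_i^\top A^\top x_j\bigr)>0$, which absorbs the spurious $R^2$ from $D_{RX}f$. Your decomposition $D_{RX}f(H)_i=V(h_{m_i}+a_i+R^2b_i)$ with the explicit error $\varepsilon(R)=2(n-1)\bigl(1+n^2\|A\|_2R^2\bigr)e^{-\Delta R^2}$ and the final bound $\sum_i|h_{m_i}|^2=\sum_k n_k|h_k|^2\le n\|H\|_F^2$ are all correct, and they furnish an explicit $\theta(R)=1+\varepsilon(R)$ rather than the paper's qualitative statement of convergence. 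In short: same strategy, but your write\hyp up is the one that actually closes both gaps.
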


The proof of this result is interesting, as it provides a better understanding of the Jacobian of self-attention in this limiting regime.

\begin{proof}
    The sequences $(x_1,\dots, x_n) \in (\R^d)^n$ that are not in $\mathcal{E}_A$ are such that either there exists an index $i \in \{ 1, \dots, n\}$ for which $A x_i = 0$, or the $x_i$ are not distinct.
    Both cases are measure-zero situations, as $A\neq 0$.
    Now let $X \in \mathcal{E}_A$.
    For all perturbations $\epsilon \coloneqq (\epsilon_1,\dots, \epsilon_n) \in (\R^d)^n$ and all $i \in \{1, \dots, n\}$, we have (see Appendix \ref{appsubsec:jacobian_formula})
    \begin{multline*}
        (D_{RX}f)(\epsilon)_i = VR^2 \sum_{j=1}^n P_{ij} (x_j - \sum_{k=1}^n P_{ik}x_k)x_i^\top A^\top \epsilon_j \\ 
        + V\sum_{j=1}^n P_{ij} \epsilon_j + VR^2 \sum_{j=1}^n P_{ij}(x_j - \sum_{k=1}^n P_{ik}x_k) x_j^\top A \epsilon_i,
    \end{multline*}
    with $P_{ij} \coloneqq e^{R^2 x_i^\top A^\top x_j} / \sum_{k=1}^n e^{R^2 x_i^\top A^\top x_k}$.
    By definition of $m_i$, we have $P_{ij} \to_{R\to +\infty} \mathbf{1}_{j = m_i}$, and the convergence is exponentially fast, so $R^2 P_{ij}$ has the same limit as $P_{ij}$.
    As a consequence, in the large radius regime, the Jacobian of self-attention has a much simpler form:
    $(D_{RX}f)(\epsilon)_i \to_{R\to +\infty} V\sum_{j=1}^n P_{ij} \epsilon_j,$
    and the operator norm of the limit is bounded by $\norm{V}_2 \sqrt{n}$.
    Moreover, when $V = I_d$ for example, this bound is reached up to a constant factor if there exists an index $j$ such that $j = m_i$ for a constant fraction of the indices $i$, i.e. if a token grasps the attention of a constant fraction of all tokens.
\end{proof}

In practice, the large radius regime on general configurations (i.e. that belong to the set $\mathcal{E}_A$) provides an oversimplified model for self-attention.
Indeed, in this regime, attention matrices are 1-sparse row-wise, i.e. have exactly one non-zero coefficient -- equal to 1, on each row $i\in \{1,\dots,n\}$, which corresponds to the index $m_i$.
If we look at real data, attention matrices indeed tend to have sparse rows, but with more than one non-zero coefficient on each row \cite{vaswani2017attention, vyas2020fast, likhosherstov2021expressive} -- which is expected, otherwise the representation given by self-attention would be too rough.
Still, Theorem \ref{thm:unnorm_large_R_regime} gives some nice intuition about the growth rate of $\sqrt{n}$ obtained in Theorem \ref{thm:unnorm_general_bound} and observed in practice (see Figure \ref{fig:real_data} in the experiments).

\paragraph{Multi-head self-attention.}
Bounding the Lipschitz constant of single-head self-attention provides the following bound on the Lipschitz constant of multi-head self-attention.

\begin{lemma}[\citet{kim2021lipschitz}]
    \label{lem:multi_head}
    Let $R>0$.
    With the notations of Definition \ref{def:multi_head}, it holds
    $$\lip(f^{MH}_{\lvert B_R^n}) \le \sum_{h=1}^H \|W^{(h)}\|_2 \, \lip(f^{(h)}_{\lvert B_R^n}).$$
\end{lemma}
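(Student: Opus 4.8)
The plan is to exploit the fact that $f^{MH}$ is nothing but a linear combination, through the matrices $W^{(h)}$, of the single-head maps $f^{(h)}$, together with the subadditivity of the Lipschitz constant under sums and its submultiplicativity under composition with linear maps.

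First I would record how a matrix $W\in\R^{d\times k}$ acts on a sequence $Z=(z_1,\dots,z_n)\in(\R^k)^n$, namely $WZ\coloneqq(Wz_1,\dots,Wz_n)$, and note that
$$\norm{WZ}_F^2=\sum_{i=1}^n\lvert Wz_i\rvert^2\le\norm{W}_2^2\sum_{i=1}^n\lvert z_i\rvert^2=\norm{W}_2^2\,\norm{Z}_F^2,$$
so that the operator norm (for $\norm{\cdot}_F$) of the blockwise map $Z\mapsto WZ$ is at most $\norm{W}_2$ — in fact equal to it, but only the inequality is needed here. This is the one place that deserves an explicit line; everything else is the triangle inequality and the definitions.

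Then, for any $X,Y\in B_R^n$ with $X\neq Y$, I would write $f^{MH}(X)-f^{MH}(Y)=\sum_{h=1}^H W^{(h)}\big(f^{(h)}(X)-f^{(h)}(Y)\big)$, apply the triangle inequality for $\norm{\cdot}_F$, then the bound above on each summand, and finally the definition of $\lip(f^{(h)}_{\lvert B_R^n})$:
$$\norm{f^{MH}(X)-f^{MH}(Y)}_F\le\sum_{h=1}^H\norm{W^{(h)}}_2\,\norm{f^{(h)}(X)-f^{(h)}(Y)}_F\le\Big(\sum_{h=1}^H\norm{W^{(h)}}_2\,\lip(f^{(h)}_{\lvert B_R^n})\Big)\norm{X-Y}_F.$$
Dividing by $\norm{X-Y}_F$ and taking the supremum over $X\neq Y$ in $B_R^n$ gives the claim. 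Alternatively, one could differentiate: since $D_Xf^{MH}=\sum_{h=1}^H W^{(h)}D_Xf^{(h)}$ blockwise, the same estimate yields $\norm{D_Xf^{MH}}_2\le\sum_{h=1}^H\norm{W^{(h)}}_2\,\norm{D_Xf^{(h)}}_2\le\sum_{h=1}^H\norm{W^{(h)}}_2\,\lip(f^{(h)}_{\lvert B_R^n})$, and two applications of Federer's lemma (once for each $f^{(h)}$ on the open set $\mathrm{int}(B_R)^n$, once for $f^{MH}$) conclude.

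There is no genuine obstacle: the argument is entirely elementary, matching the one given by \citet{kim2021lipschitz}. The only subtlety worth flagging is that the relevant norm for $W^{(h)}$ is its spectral norm precisely because the input and output spaces carry the Frobenius norm and $W^{(h)}$ acts diagonally across the $n$ tokens.
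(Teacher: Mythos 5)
Your argument is correct. The paper itself gives no proof of this lemma — it is cited directly from \citet{kim2021lipschitz} — and your argument (write $f^{MH}(X)-f^{MH}(Y)=\sum_h W^{(h)}(f^{(h)}(X)-f^{(h)}(Y))$, apply the triangle inequality for $\norm{\cdot}_F$, bound the blockwise action of $W^{(h)}$ by its spectral norm, and invoke the definition of $\lip(f^{(h)}_{\lvert B_R^n})$) is exactly the standard one, with the one nontrivial observation (that the token-wise action of $W^{(h)}$ has $\norm{\cdot}_F$-operator norm equal to $\norm{W^{(h)}}_2$) stated explicitly.
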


In the whole paper, we focus on single-head self-attention and avoid tackling the possibly tedious dependencies between the matrices $W^{(1)}V^{(1)}, \dots, W^{(H)}V^{(H)}$.
Deriving a finer estimation of the Lipschitz constant of multi-head attention than what Lemma \ref{lem:multi_head} gives us is left for future work.

\section{Tight Bounds on the Lipschitz Constant of Masked Self-Attention}
\label{sec:masked}

\subsection{Measuring Distances With Conditional Optimal Transport}

To study the Lipschitz properties of masked self-attention as defined in Definition \ref{def:masked_self_attention}, the Euclidean framework introduced in \autoref{subsec:lipschitz_constants} still applies.
In contrast, the Wasserstein framework used for mean-field unmasked self-attention is not suited to measuring the regularity of mean-field masked self-attention.
Indeed, in the standard case, the distance between the outputs $f^m(X)$ and $f^m(Y)$ for two inputs $X,Y\in (\R^d)^n$ is measured by $(\sum_{i=1}^n \lvert f(x_1, \dots, x_i)_i - f(y_1, \dots, y_i)_i\rvert^2)^{1/2}$ so that $i$-th coordinates are compared to each other, separately for each index $i$.
On the contrary, when looking at $W_2(F^m(\bar \mu), F^m(\bar \nu))$, the optimal transport plan may transport mass from a point $(s,x)$ to a point $(s', y)$ with $s\neq s'$, which contradicts the sequential nature of masked self-attention.
It is therefore natural to introduce the following distance on $\pcal_c([0,1]\times \R^d)$, which comes from the theory of conditional optimal transport \cite{hosseini2023conditional}.

\begin{definition}
    Let $\bar \mu$ and $\bar \nu$ be two probability measures in $\pcal_c([0,1]\times \R^d)$, and $p \ge 1$.
    If $\bar \mu$ and $\bar \nu$ have the same marginal with respect to $s$, i.e.
    $$\int_{s_1}^{s_2} \int_{\R^d} \dd \bar \mu(s, x) = \int_{s_1}^{s_2} \int_{\R^d} \dd \bar \nu(s, x)$$
    for all $0\le s_1 < s_2 \le 1$, denote $\theta$ this marginal distribution, and write with the disintegration theorem \cite{bogachev2007measure}
    $\dd \bar \mu (\tau, x) \eqqcolon \dd \theta(\tau) \dd \mu^\tau (x)$ and $\dd \bar \nu (\tau, x) \eqqcolon \dd \theta(\tau) \dd \nu^\tau (x).$
    The measures $\mu^\tau$ and $\nu^\tau$ can be seen intuitively as the restriction of $\mu$ and $\nu$ to the mass located at position $\tau$, rescaled to obtain probability measures.
    We then measure the distance between $\bar \mu$ and $\bar \nu$ with
    $$d_p(\bar \mu, \bar \nu) \coloneqq \left (\int_0^1 W_p(\mu^\tau, \nu^\tau)^p \dd \theta(\tau)\right )^{1/p}.$$
    If $\bar \mu$ and $\bar \nu$ do not have the same first marginal, we set
    $d_p(\bar \mu, \bar \nu) \coloneqq +\infty.$
\end{definition}

Considering $d_p(\bar \mu, \bar \nu)$ amounts to minimizing the transport cost between $\bar \mu$ and $\bar \nu$ under the constraint that points must keep the first coordinate constant along their trajectory.
Equivalently, allowed transport plans must be the identity on the first marginal.

As for unmasked self-attention, we have the following connection between the Euclidean framework and the mean-field framework for residual masked self-attention.

\begin{lemma}
    Let $R > 0$.
    We have
    $$\lip^{\norm{\cdot}_F} (f^m_{\lvert B_R^n}) \le \lip^{d_2} (F^m_{\lvert \pcal([0,1]\times B_R)}).$$
\end{lemma}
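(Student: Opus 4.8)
The strategy mirrors the proof of Lemma~\ref{lem:link_euclidian_meanfield} for the unmasked case: I would take an input $X = (x_1, \dots, x_n) \in B_R^n$, associate to it an empirical measure in $\pcal([0,1]\times B_R)$ via the $\mathrm{ord}$ map with a \emph{fixed} increasing sequence $0 \le s_1 < \dots < s_n \le 1$, and then show that the Euclidean distance $\norm{\cdot}_F$ on inputs is recovered (up to the normalizing factor $1/\sqrt n$) by the distance $d_2$ on the corresponding ordered measures, and likewise for outputs through the identity $F^m(\mathrm{ord}(X)) = \mathrm{ord}(f^m(X))$ established just after the definition of mean-field masked self-attention.

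First I would fix $s_1 < \dots < s_n$ once and for all, and set $\bar\mu_X \coloneqq \mathrm{ord}(X) = \frac1n \sum_i \delta_{(s_i, x_i)}$ and $\bar\mu_Y \coloneqq \mathrm{ord}(Y)$ for two inputs $X, Y \in B_R^n$. The key observation is that $\bar\mu_X$ and $\bar\mu_Y$ share the same first marginal, namely $\theta = \frac1n\sum_i \delta_{s_i}$, since the $s_i$ are the same for both. Because the $s_i$ are distinct, the disintegration is trivial: $\mu^{s_i} = \delta_{x_i}$ and $\nu^{s_i} = \delta_{y_i}$, so $W_2(\mu^{s_i}, \nu^{s_i}) = \lvert x_i - y_i\rvert$. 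Plugging into the definition of $d_2$ gives
\begin{equation*}
    d_2(\bar\mu_X, \bar\mu_Y)^2 = \int_0^1 W_2(\mu^\tau,\nu^\tau)^2 \dd\theta(\tau) = \frac1n \sum_{i=1}^n \lvert x_i - y_i\rvert^2 = \frac1n \norm{X - Y}_F^2 .
\end{equation*}
Next, since $F^m(\bar\mu_X) = \mathrm{ord}(f^m(X))$ and the outputs $f^m(X)_i = f(x_1,\dots,x_i)_i$ still live in $B_R$ (each is a convex combination of $x_1,\dots,x_i$), the exact same computation applied to the output measures yields $d_2(F^m(\bar\mu_X), F^m(\bar\mu_Y))^2 = \frac1n \norm{f^m(X) - f^m(Y)}_F^2$. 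Dividing, the factor $1/\sqrt n$ cancels, so
\begin{equation*}
    \frac{\norm{f^m(X) - f^m(Y)}_F}{\norm{X-Y}_F} = \frac{d_2(F^m(\bar\mu_X), F^m(\bar\mu_Y))}{d_2(\bar\mu_X, \bar\mu_Y)} \le \lip^{d_2}(F^m_{\lvert \pcal([0,1]\times B_R)}),
\end{equation*}
and taking the supremum over $X \ne Y$ in $B_R^n$ gives the claim.

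The only genuinely delicate point is making sure the two measures $\bar\mu_X$ and $\bar\mu_Y$ that one compares in $d_2$ have the same first marginal — otherwise $d_2$ is $+\infty$ and the bound is vacuous on that pair. This is exactly why one must use a \emph{single fixed} choice of the positions $s_1 < \dots < s_n$ for all inputs simultaneously, rather than letting them depend on $X$; I would state this explicitly. A secondary subtlety is checking that $\mathrm{ord}$ genuinely commutes with $F^m$ as asserted in the excerpt (which requires the $s_i$ to be increasing so that the indicator $\mathbf 1_{\tau \le s_i}$ in $\Gamma_{\bar\mu}$ picks out exactly the prefix $\{x_1,\dots,x_i\}$ with the right weights) — but this is granted by the generalization statement following the definition of $F^m$. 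Everything else is the bookkeeping above, so I would keep the write-up short.
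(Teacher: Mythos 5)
Your proof is correct and, in fact, cleaner than what the paper suggests. The paper states that the argument ``follows the same steps as for Lemma~\ref{lem:link_euclidian_meanfield},'' which in the appendix is a \emph{local} argument: one must first restrict to a small neighborhood of $X$ so that the optimal $W_p$-coupling between $m(X)$ and $m(Y)$ is the identity matching (rather than a cheaper permutation), equate the local Lipschitz constants, and then globalize via the geodesic property and Lemma~\ref{lem:loc_glob_lip}. Your argument sidesteps this entirely by exploiting the rigidity of conditional optimal transport: once the positions $s_1<\dots<s_n$ are fixed and distinct, $\bar\mu_X$ and $\bar\mu_Y$ share the first marginal $\theta = \tfrac1n\sum_i\delta_{s_i}$, and the disintegration forces $\mu^{s_i}=\delta_{x_i}$, $\nu^{s_i}=\delta_{y_i}$, so $d_2(\bar\mu_X,\bar\mu_Y) = \tfrac{1}{\sqrt n}\norm{X-Y}_F$ holds \emph{globally}, with no permutation ambiguity to worry about. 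The same holds for the outputs via $F^m\circ\mathrm{ord}=\mathrm{ord}\circ f^m$, so the Lipschitz ratios agree identically and the inequality follows by taking the supremum. This is a genuinely different and shorter route: you buy a global identity where the paper's approach needs a local one, precisely because the conditional structure removes the degree of freedom (the permutation) that makes the unmasked case delicate.

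One small inaccuracy that does not affect the argument: you assert that each $f^m(X)_i$ lies in $B_R$ because it is ``a convex combination of $x_1,\dots,x_i$.'' This is false in general, since $f^m(X)_i = V\sum_{j\le i}P_{ij}x_j$ and $V$ need not be a contraction. Fortunately, this claim is never used: $\lip^{d_2}(F^m_{\lvert\pcal([0,1]\times B_R)})$ only constrains the \emph{domain} of $F^m$, and $d_2$ between the output measures is well-defined wherever they land. I would simply drop that parenthetical.
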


We do not detail the proof, which follows the same steps as for Lemma \ref{lem:link_euclidian_meanfield}.

\subsection{Lipschitz Bounds for Masked Self-Attention in Different Regimes}

\paragraph{General upper bound.}
We show in Appendix \ref{appsubsec:masked_general_bound} that the bound provided by \autoref{thm:unnorm_general_bound} still holds for masked self-attention.

\begin{theorem}
    \label{thm:masked_general_bound}
    Let $Q, K, V \in \R^{k\times d}$ and $A \coloneqq K^\top Q / \sqrt{k}$.
    Let $R > 0$ and $n \in \mathbb N$.
    Masked self-attention $f^m$ with parameters $(A, V)$ is Lipschitz continuous on the set $B_R^n$, and
    \begin{equation*}
        \lip \left ( f^m_{\lvert B_R^n} \right ) \le \sqrt{3} \norm{V}_2 \left ( \norm{A}_2^2 R^4 (n + 1) + n \right )^{1/2}.
    \end{equation*}
\end{theorem}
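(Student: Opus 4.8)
The plan is to differentiate $f^m$ explicitly and bound the operator norm of its differential at each point, mirroring the argument behind \cref{thm:unnorm_general_bound} and letting the mask enter only through bookkeeping. The structural observation is that, by \cref{def:masked_self_attention}, $f^m(X)_i = f(x_1, \dots, x_i)_i$ depends only on $x_1, \dots, x_i$, so $D_X f^m$ is block lower-triangular and its $i$-th block row is the differential of length-$i$ unmasked self-attention queried at position $i$. Using the Jacobian formula derived in Appendix \ref{appsubsec:jacobian_formula}, this gives, for every $\epsilon = (\epsilon_1, \dots, \epsilon_n) \in (\R^d)^n$,
\begin{multline*}
    (D_X f^m)(\epsilon)_i = V \sum_{j \le i} P_{ij}(x_j - \bar x_i)(x_i^\top A^\top \epsilon_j) \\
    + V \sum_{j \le i} P_{ij}\epsilon_j + V \sum_{j \le i} P_{ij}(x_j - \bar x_i)(x_j^\top A \epsilon_i),
\end{multline*}
where $P_{ij} \coloneqq e^{x_i^\top A^\top x_j}/\sum_{k \le i} e^{x_i^\top A^\top x_k}$ are the masked attention weights and $\bar x_i \coloneqq \sum_{k \le i} P_{ik} x_k$, which lies in $B_R$ since it is a convex combination of points of $B_R$.

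I would then bound the three terms $T_1^{(i)}, T_2^{(i)}, T_3^{(i)}$ separately, invoking $(a+b+c)^2 \le 3(a^2+b^2+c^2)$ to produce the factor $\sqrt{3}$. Cauchy--Schwarz and $\sum_{j\le i} P_{ij} = 1$ give at once $|T_2^{(i)}| \le \norm{V}_2 (\sum_{j\le i} P_{ij}|\epsilon_j|^2)^{1/2}$. The terms $T_1^{(i)}$ and $T_3^{(i)}$ are controlled using $|x_i| \le R$, submultiplicativity of $\norm{\cdot}_2$, and the variance identity
\begin{equation*}
    \sum_{j\le i} P_{ij}|x_j - \bar x_i|^2 = \sum_{j\le i} P_{ij}|x_j|^2 - |\bar x_i|^2 \le R^2,
\end{equation*}
which yields $|T_1^{(i)}| \le \norm{V}_2\norm{A}_2 R^2 (\sum_{j\le i} P_{ij}|\epsilon_j|^2)^{1/2}$ and $|T_3^{(i)}| \le \norm{V}_2\norm{A}_2 R^2 |\epsilon_i|$.

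Summing over $i$ finishes the argument. The term $T_3$ contributes $\sum_i |T_3^{(i)}|^2 \le \norm{V}_2^2\norm{A}_2^2 R^4 \norm{\epsilon}_F^2$, the ``$+1$'' in the bound. For $T_1$ and $T_2$, I would swap the order of summation: $\sum_i \sum_{j\le i} P_{ij}|\epsilon_j|^2 = \sum_j |\epsilon_j|^2 \sum_{i\ge j} P_{ij} \le n\norm{\epsilon}_F^2$, because there are at most $n$ indices $i$ with $i \ge j$ and each $P_{ij} \le 1$ --- this triangular column-sum estimate is where the mask is used, and it accounts both for the $n$ inside $\norm{A}_2^2 R^4(n+1)$ and for the trailing $n$. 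Collecting the three pieces gives $\norm{(D_X f^m)(\epsilon)}_F^2 \le 3\norm{V}_2^2 (\norm{A}_2^2 R^4(n+1) + n)\norm{\epsilon}_F^2$, hence $\norm{D_X f^m}_2 \le \sqrt{3}\norm{V}_2(\norm{A}_2^2 R^4(n+1)+n)^{1/2}$ for every $X$; the lemma of \citet{federer2014geometric} recalled in \autoref{subsec:lipschitz_constants}, applied on the open connected set $(\mathrm{int}\,B_R)^n$, together with continuity of $f^m$, transfers this to $\lip(f^m_{\lvert B_R^n})$.

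I expect the only step needing genuine care to be the derivation of the masked Jacobian, and within it the index $j = i$, where $x_i$ acts as query, key, and value simultaneously; this produces an extra correction term on the $i$-th coordinate, but it is of the same type as $T_3^{(i)}$ and does not affect the dependence on $n$. Everything else is routine estimation. It is also worth pointing out that the variance identity, rather than the coarser bound $|x_j - \bar x_i| \le 2R$, is precisely what lets the masked estimate reach the constant $(n+1)$ instead of the $(4n+1)$ appearing for unmasked self-attention in \cref{thm:unnorm_general_bound}.
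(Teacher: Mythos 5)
Your proposal is correct and follows essentially the same route as the paper, whose proof is simply to invoke the masked Jacobian formula (Lemma~\ref{applem:jacobian_formula_masked}) and repeat the argument of Theorem~\ref{thm:unnorm_general_bound} with sums truncated to $j\le i$. Two small remarks: the concern you raise about the index $j=i$ is already absorbed into Lemma~\ref{applem:jacobian_formula_masked} (the $j=i$ summand of the first term captures $x_i$'s role as a key), and your sum-swap $\sum_i\sum_{j\le i}P_{ij}\lvert\epsilon_j\rvert^2=\sum_j\lvert\epsilon_j\rvert^2\sum_{i\ge j}P_{ij}\le n\norm{\epsilon}_F^2$ is an equivalent variant of the paper's direct per-row bound $\sum_{j\le i}P_{ij}\lvert\epsilon_j\rvert^2\le\norm{\epsilon}_F^2$; also, your closing observation that the variance identity is what delivers the $(n+1)$ rather than a $(4n+1)$ is apt, and in fact the paper's own appendix proof of the unmasked case already yields $(n+1)$, so the $(4n+1)$ in the main-text statement of Theorem~\ref{thm:unnorm_general_bound} is simply a looser rendering of the same estimate.
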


\paragraph{Mean-field regime.}
Let us now bound from above the $d_p$ Lipschitz constant of mean-field masked self-attention.

\begin{theorem}
    \label{thm:masked_mean_field}
    Let $R > 0$ and $p\ge 1$.
    The mean-field masked self-attention map $F^m$ is Lipschitz continuous on the space of measures supported in $[0,1] \times B_R$, with a Lipschitz constant upper-bounded by
    $$\norm{V}_2(1 + 3\norm{A}_2 R^2) e^{2\norm{A}_2 R^2}.$$
\end{theorem}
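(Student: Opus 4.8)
The plan is to disintegrate $F^m$ over the position variable $s\in[0,1]$, push the per-fibre estimates back onto the unmasked mean-field bounds behind \autoref{thm:mean_field_unnorm}, and reassemble via the definition of $d_p$. Concretely: since $\Gamma_{\bar\mu}$ fixes the first coordinate, $F^m(\bar\mu)=(\Gamma_{\bar\mu})_\sharp\bar\mu$ has $\theta$ as first marginal — the common first marginal of $\bar\mu$ and $\bar\nu$, for otherwise $d_p=+\infty$ and there is nothing to prove — and conditional $(\Phi^s_{\bar\mu})_\sharp\mu^s$ at position $s$, where $\Phi^s_{\bar\mu}(x):=(\int e^{x^\top A^\top y}Vy\,\mathbf 1_{\tau\le s}\,\dd\bar\mu(\tau,y))/(\int e^{x^\top A^\top y}\mathbf 1_{\tau\le s}\,\dd\bar\mu(\tau,y))$. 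Writing $\mu_{\le s}$ for the push-forward to $\R^d$ of $\bar\mu$ restricted to $[0,s]\times\R^d$ and $\hat\mu_{\le s}:=\mu_{\le s}/\theta([0,s])\in\pcal_c(B_R)$ for its normalization, one recognizes $\Phi^s_{\bar\mu}=\Gamma_{\hat\mu_{\le s}}$: the conditional of $F^m(\bar\mu)$ at $s$ is unmasked mean-field self-attention of the truncated measure $\hat\mu_{\le s}$, read off on the fibre $\mu^s$. This gives $d_p(F^m\bar\mu,F^m\bar\nu)^p=\int_0^1 W_p((\Gamma_{\hat\mu_{\le s}})_\sharp\mu^s,(\Gamma_{\hat\nu_{\le s}})_\sharp\nu^s)^p\,\dd\theta(s)$.

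Next I would bound, for each fixed $s$, the fibre distance by pushing forward the optimal coupling of $(\mu^s,\nu^s)$ and splitting via the triangle inequality into $\lip(\Gamma_{\hat\mu_{\le s}})\,W_p(\mu^s,\nu^s)$ plus $\sup_{x\in B_R}|\Gamma_{\hat\mu_{\le s}}(x)-\Gamma_{\hat\nu_{\le s}}(x)|$. The first factor is estimated exactly as for \autoref{thm:mean_field_unnorm}: differentiating through the softmax, $D_x\Gamma_{\hat\mu_{\le s}}$ is a cross-covariance of $y\mapsto Vy$ and $y\mapsto Ay$ under a probability measure supported in $B_R$, so $\lip(\Gamma_{\hat\mu_{\le s}})\le\norm{V}_2\norm{A}_2R^2$. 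For the second factor, I would expand $\Gamma_{\hat\mu_{\le s}}-\Gamma_{\hat\nu_{\le s}}$ as a difference of ratios, rewrite $\mu_{\le s}-\nu_{\le s}$ through fibrewise optimal couplings of $(\mu^\tau,\nu^\tau)$ for $\tau\le s$, and combine the Lipschitz bounds of $y\mapsto e^{x^\top A^\top y}Vy$ and $y\mapsto e^{x^\top A^\top y}$ on $B_R$ with the lower bound $e^{-\norm{A}_2R^2}$ on the denominator; this yields $\sup_{x\in B_R}|\Gamma_{\hat\mu_{\le s}}(x)-\Gamma_{\hat\nu_{\le s}}(x)|\le\norm{V}_2(1+2\norm{A}_2R^2)e^{2\norm{A}_2R^2}\,W_1(\hat\mu_{\le s},\hat\nu_{\le s})$, and gluing the same fibrewise plans gives $W_1(\hat\mu_{\le s},\hat\nu_{\le s})\le\theta([0,s])^{-1}\int_{[0,s]}W_1(\mu^\tau,\nu^\tau)\,\dd\theta(\tau)$.

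Substituting these into the disintegrated identity and applying Minkowski's inequality in $L^p(\theta)$, the statement reduces to bounding $\int_0^1(\theta([0,s])^{-1}\int_{[0,s]}W_1(\mu^\tau,\nu^\tau)\,\dd\theta(\tau))^p\,\dd\theta(s)$ by $d_p(\bar\mu,\bar\nu)^p$ up to a universal constant — which one gets from a Hardy-type averaging inequality together with $W_1\le W_p$ — and then collecting the constants $\norm{V}_2\norm{A}_2R^2$ and $\norm{V}_2(1+2\norm{A}_2R^2)e^{2\norm{A}_2R^2}$ yields the announced bound.

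The hard part is this last reassembly. Because the fibre map $\Gamma_{\hat\mu_{\le s}}$ depends only on the truncated mass $\bar\mu|_{[0,s]\times\R^d}$, the discrepancy between the two fibre maps is controlled by a \emph{running average} of $\tau\mapsto W_1(\mu^\tau,\nu^\tau)$ over $[0,s]$ rather than by $W_p(\mu^s,\nu^s)$ at the single position $s$, so a low-mass perturbation of $\bar\mu$ at an early position can propagate to the output at many later positions. Showing that this averaging over the mask does not deteriorate the Lipschitz constant — and keeping exact track of the constant while doing so — is the crux, and it is precisely here that the conditional-optimal-transport structure of $d_p$ (transport plans being the identity on the first marginal) is essential; an alternative would be to argue by density from the ordered empirical measures $\mathrm{ord}(X)$, on which $d_2$ reduces to a rescaled Frobenius distance and $F^m(\mathrm{ord}(X))=\mathrm{ord}(f^m(X))$.
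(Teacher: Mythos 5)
Your overall strategy is the same as the paper's: disintegrate over the position variable, split $W_p\bigl((\Gamma^s_{\bar\mu})_\sharp\mu^s,(\Gamma^s_{\bar\nu})_\sharp\nu^s\bigr)$ by the triangle inequality into a Lipschitz term $\lip(\Gamma^s_{\bar\mu})\,W_p(\mu^s,\nu^s)$ plus a kernel-discrepancy term $\sup_x\lvert\Gamma^s_{\bar\mu}(x)-\Gamma^s_{\bar\nu}(x)\rvert$, and bound the first by $\norm{V}_2\norm{A}_2R^2$. Where you diverge — and you are in fact more careful than the published appendix — is in recognizing that $\Gamma^s_{\bar\mu}$ is $\Gamma_{\hat\mu_{\le s}}$, a function of the \emph{normalized truncation} $\hat\mu_{\le s}$ rather than of the fibre $\mu^s$, so the kernel-discrepancy term is controlled by $W_1(\hat\mu_{\le s},\hat\nu_{\le s})$ and hence by a running average of $\tau\mapsto W_1(\mu^\tau,\nu^\tau)$ over $[0,s]$. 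The paper's appendix instead writes $\norm{\Gamma^s_{\bar\mu}-\Gamma^s_{\bar\nu}}_{L^\infty(B_R)}\le\norm{V}_2(2\norm{A}_2R^2)e^{2\norm{A}_2R^2}\,W_p(\mu^s,\nu^s)$ and declares the proof finished; as $\Gamma^s_{\bar\mu}$ depends on all the mass at positions $\tau\le s$, that inequality does not hold as stated, and "which concludes the proof" skips exactly the difficulty you isolate.

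The gap in your proposal is the Hardy step, and it is not a bookkeeping issue. The averaging operator $g\mapsto\bigl(s\mapsto\theta([0,s])^{-1}\int_{[0,s]}g\,\dd\theta\bigr)$ has norm $p/(p-1)$ on $L^p(\theta)$ for $p>1$ and is unbounded on $L^1(\theta)$, so it cannot supply the $p$-independent constant the theorem asserts and gives nothing at $p=1$. Worse, the $p=1$ claim looks genuinely false: with $\theta$ Lebesgue on $[0,1]$, $d=k=1$, $A=V=1$, take $\mu^\tau=\delta_R$ and $\nu^\tau=\delta_{R-\epsilon}$ for $\tau\le s_0$ and $\mu^\tau=\nu^\tau=\delta_0$ for $\tau>s_0$. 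Then $d_1(\bar\mu,\bar\nu)=s_0\epsilon$, the output fibre at $s\le s_0$ is $\delta_R$ vs.\ $\delta_{R-\epsilon}$, and at $s>s_0$ it is $\delta_{s_0R/s}$ vs.\ $\delta_{s_0(R-\epsilon)/s}$, giving $d_1(F^m\bar\mu,F^m\bar\nu)=s_0\epsilon\bigl(1+\log(1/s_0)\bigr)$ and a Lipschitz ratio $1+\log(1/s_0)\to\infty$. So the running-average phenomenon you flagged as "the crux" is not something to be absorbed into constants: either the theorem needs a $p$-dependent (or $\theta$-dependent) constant, or the definition of $d_p$ must weight early positions more heavily. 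Your alternative route via ordered empirical measures meets the same obstruction, since \autoref{thm:masked_general_bound} itself grows like $\sqrt{n}$.
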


Note that in \autoref{thm:masked_mean_field}, we consider that two measures $\bar \mu$ and $\bar \nu$ with different first marginals induce an infinite Lipschitz ratio $d_p(F^m(\bar \mu), F^m(\bar \nu)) / d_p(\bar \mu, \bar \nu)$.
The proof can be found in Appendix \ref{appsubsec:masked_mean_field_bound}.

\paragraph{Large radius regime.}
We have a similar result as for unmasked attention in the large radius regime.
Except for a measure-zero set of pathological configurations, when $R$ is sufficiently large, the local Lipschitz constant of $f^m$ at the input $RX$ does not depend on $R$ anymore and grows at most like $\sqrt{n}$ up to a constant factor.

\begin{theorem}
    \label{thm:large_R_masked}
    Let $A \in \R^{d\times d}$ and $V\in \R^{k\times d}$ two non-zero matrices, and denote $f^m$ the masked self-attention map with parameters $(A, V)$.
    Denote $\mathcal{E}^m_A \subset \bbar(0,1)^n$ the set of sequences $(x_1, \dots, x_n)$ such that for all $i \in \{1, \dots, n\}$, the maximum $\max_{1\le j \le i} x_i^\top A^\top x_j$ is reached at a unique index $j$, denoted $m_i$.
    The complement of $\mathcal{E}^m_A$ in $\bbar(0,1)^n$ has zero Lebesgue measure.
    Moreover, for any $X \in \mathcal{E}_A$, there exists a function $\theta\colon [0, +\infty) \to [0, +\infty)$ such that $\theta(R)$ converges exponentially fast to 1 when $R\to +\infty$ and
    $$\norm{D_{RX}f^m}_2 \le \theta(R) \norm{V}_2 \sqrt{n}.$$
\end{theorem}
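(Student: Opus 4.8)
The plan is to mirror the proof of \autoref{thm:unnorm_large_R_regime} almost verbatim, the only structural change being that $f^m(X)_i = f(x_1,\dots,x_i)_i$ depends on $x_1,\dots,x_i$ only, so that every sum over the token index gets truncated at $i$. First I would settle the measure-zero claim: a sequence $(x_1,\dots,x_n)$ fails to lie in $\ecal^m_A$ exactly when, for some $i$, the map $j\mapsto x_i^\top A^\top x_j$ does not attain its maximum over $\{1,\dots,i\}$ at a single index, i.e. when $x_i^\top A^\top(x_j-x_{j'})=0$ for two distinct $j,j'\le i$. Since $A\neq 0$, each such equation cuts out a proper algebraic --- hence Lebesgue-negligible --- subset of $\bbar(0,1)^n$, and the complement of $\ecal^m_A$ is the finite union of these subsets over $i\in\{1,\dots,n\}$ and pairs $(j,j')$; it therefore has zero measure. (Note that $m_1=1$ always.)

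Next I would differentiate. Applying to the truncated sequence $(x_1,\dots,x_i)$ the Jacobian formula for unmasked self-attention recalled in the proof of \autoref{thm:unnorm_large_R_regime} (see Appendix \ref{appsubsec:jacobian_formula}) and reading off the $i$-th output coordinate, one obtains, for every perturbation $\epsilon=(\epsilon_1,\dots,\epsilon_n)$ and every $i$,
\begin{multline*}
(D_{RX}f^m)(\epsilon)_i = VR^2 \sum_{j=1}^i P_{ij}\Big(x_j - \sum_{k=1}^i P_{ik}x_k\Big)x_i^\top A^\top \epsilon_j \\
+ V\sum_{j=1}^i P_{ij}\epsilon_j + VR^2 \sum_{j=1}^i P_{ij}\Big(x_j - \sum_{k=1}^i P_{ik}x_k\Big)x_j^\top A\epsilon_i,
\end{multline*}
with $P_{ij} \coloneqq e^{R^2 x_i^\top A^\top x_j}/\sum_{k=1}^i e^{R^2 x_i^\top A^\top x_k}$; in particular $(D_{RX}f^m)(\epsilon)_i$ does not involve $\epsilon_{i+1},\dots,\epsilon_n$, which reflects the block-triangular structure of $D_{RX}f^m$.

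Then I would let $R\to+\infty$. As $X\in\ecal^m_A$, the index $m_i$ is the unique argmax of $j\mapsto x_i^\top A^\top x_j$ over $\{1,\dots,i\}$, so $P_{ij}\to\mathbf 1_{j=m_i}$ exponentially fast. The two terms carrying an $R^2$ factor vanish in the limit: for $j\neq m_i$ because $R^2 P_{ij}\to 0$ exponentially, and for $j=m_i$ because the bracket $x_{m_i}-\sum_{k=1}^i P_{ik}x_k = -\sum_{k\neq m_i}P_{ik}(x_k-x_{m_i})$ is itself exponentially small, which beats the $R^2$ growth. Hence $(D_{RX}f^m)(\epsilon)_i\to V\epsilon_{m_i}$, with exponentially fast convergence in operator norm. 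The limiting map $\epsilon\mapsto(V\epsilon_{m_i})_{1\le i\le n}$ obeys $\sum_{i=1}^n|V\epsilon_{m_i}|^2\le \norm{V}_2^2\sum_{i=1}^n|\epsilon_{m_i}|^2 \le n\norm{V}_2^2\norm{\epsilon}_F^2$, since each index appears at most $n$ times among $m_1,\dots,m_n$; so its operator norm is at most $\norm{V}_2\sqrt n$. Combining, $\norm{D_{RX}f^m}_2 \le \norm{V}_2\sqrt n + C_Xe^{-c_XR^2} =: \theta(R)\norm{V}_2\sqrt n$ for some constants $c_X,C_X>0$, with $\theta(R)\to 1$ exponentially fast.

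The only delicate point --- exactly as in the unmasked case --- is justifying that the $R^2$-weighted terms vanish: this is the single place where $X\in\ecal^m_A$ is used, since a non-unique argmax would keep $P_{i,m_i}$ bounded away from $1$ while $R^2$ diverges, and the limit Jacobian would cease to exist. Everything else is bookkeeping with truncated sums. One may also observe that the bound $\norm{V}_2\sqrt n$ is reached up to a constant when $m_i=1$ for all $i$, a configuration that is natural for masked self-attention since the first token lies in every token's receptive field.
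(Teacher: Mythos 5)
Your proof is correct and follows exactly the approach the paper relies on: the appendix simply asserts that \autoref{thm:large_R_masked} is proved the same way as \autoref{thm:unnorm_large_R_regime}, using the truncated Jacobian formula of Lemma~\ref{applem:jacobian_formula_masked}, and you have carried this out faithfully (including the correct bound $\sum_i \lvert \epsilon_{m_i}\rvert^2 \le n\norm{\epsilon}_F^2$ from the fact that each index appears at most $n$ times among $m_1,\dots,m_n$, which is attained when $m_i=1$ for all $i$). In fact you argue the $R\to+\infty$ limit slightly more carefully than the paper does for the unmasked case, by noting that the $j=m_i$ term in the $R^2$-weighted sums vanishes because the centering bracket $x_{m_i}-\sum_{k\le i}P_{ik}x_k$ is itself exponentially small (rather than because $R^2P_{ij}$ has the same limit as $P_{ij}$, which fails at $j=m_i$); you also implicitly correct what appears to be a typo in the theorem statement, which should read $X\in\mathcal{E}^m_A$ rather than $X\in\mathcal{E}_A$.
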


\section{Experiments}
\label{sec:experiments}

\begin{figure*}
\centering
\includegraphics[width=0.9\textwidth]{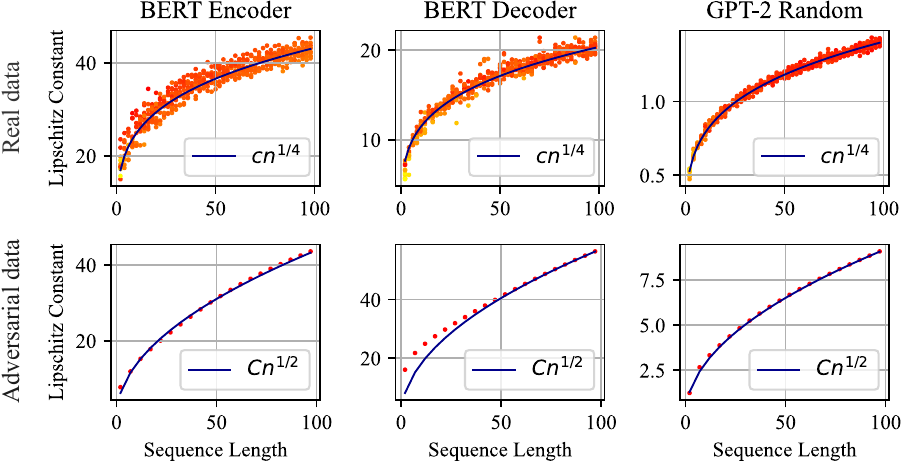}
\caption{Scatter plots of the local Lipschitz constant of self-attention (column 1) and masked self-attention (columns 2 and 3) on text data (upper row) and adversarial data (lower row) as a function of the sequence length $n$.
In the upper row, the color encodes the mean radius of inputs $X = (x_1, \dots, x_n)$, defined as $R\coloneqq \sqrt{1/n\sum_{i=1}^n\lvert x_i\rvert^2}$.
Lighter points have a smaller mean radius.
The first two columns correspond to two different pretrained BERT models: an Encoder-only and a Decoder-only, on the same dataset Alice in Wonderland, respectively for attention layers 0 and 6.
The third column is obtained with the masked self-attention layer 6 of GPT-2 randomly initialized, on the dataset AG\_NEWS.
We see that the Lipschitz constant of self-attention on real data grows approximately like $n^{1/4}$ with the sequence length $n$ and that the growth rate is $\sqrt{n}$ for adversarial data, which shows the tightness of Theorems \ref{thm:unnorm_general_bound}, \ref{thm:unnorm_large_R_regime}, \ref{thm:masked_general_bound} and \ref{thm:large_R_masked}.}
\label{fig:real_data}
\end{figure*}

The bound stated in \autoref{thm:unnorm_general_bound} corresponds to a worst-case scenario.
In practice, does it reflect the evolution of the Lipschitz constant of a self-attention layer of a Transformer on real data?
We perform numerical experiments on BERT \cite{devlin2018bert}, using the pretrained Huggingface model 'bert-base-uncased' first as an Encoder and then in its Decoder version, and on GPT-2 \cite{radford2019language} both pretrained and randomly initialized.
Both models have 12 multi-head attention layers, and 12 attention heads per layer, with an embedding dimension $d=768$.
We perform two different experiments, first with real data, and then with synthetic adversarial data.

\subsection{Experiments With Real Data}

We take our data from two test datasets, Alice in Wonderland from the NLTK corpus Gutenberg \cite{bird2009natural}, and AG\_NEWS from the PyTorch package torchtext \cite{Zhang2015CharacterlevelCN}.
The aim is, for various multi-head self-attention layers $f^{\text{model}}$ of BERT and GPT-2, and for a batch of inputs of varying length taken from the two datasets mentioned above, to get a scatter plot of the local Lipschitz constant of $f^{\text{model}}$ at each input $(x_1, \dots, x_n)$ as a function of the sequence length $n$.

\paragraph{Construction of the datasets.}
Given some raw text from Alice in Wonderland or AG\_NEWS, we first tokenize it and then split the resulting sequence of tokens into subsequences with a fixed sequence length.
For each even integer $n$ in $\{2, \dots, 100\}$, we build 10 sequences with $n$ tokens, so that none of the constructed sequences $(s_1, \dots, s_n)$ overlap.
Then, for each input sequence $(s_1, \dots, s_n)$, we do a forward pass of the model, and get with a forward hook the intermediate activations just before the attention layer of interest $f^{\text{model}}$.
This gives us a batch of sequences $(x_1,\dots, x_n) \in (\R^d)^n$ that are fed to $f^{\text{model}}$ when $(s_1,\dots, s_n)$ goes through the model.
Note that except for the inputs of the first attention layer, all the $x_i$ are the result of normalization with LayerNorm, and therefore belong to an ellipsis, which depends on the parameters of LayerNorm.

\paragraph{Computation of the local Lipschitz constants.}
The local Lipschitz constant of $f^{\text{model}}$ at an input sequence $X = (x_1, \dots, x_n)$ is equal to $\norm{D_X f^{\text{model}}}_2$.
As $D_X f^{\text{model}}$, which we denote $J_X$ to alleviate notations, is of shape $nd \times nd$ with $d=768$, we do not compute it explicitly but use a power method on the matrix $J_X^\top J_X$ by alternating Jacobian-vector products and vector-Jacobian products (see Appendix \ref{appsubsec:power_iteration}).
The power method converges to the largest eigenvalue of $J_X^\top J_X$, which is equal to $\norm{J_X}_2^2$.

\subsection{Experiments With Adversarial Data}

\paragraph{Adversarial data.}
To check numerically the tightness in $n$ of the bound provided by Theorem \ref{thm:unnorm_general_bound}, we build adversarial data in the input space of each self-attention layer $f^{\text{model}}$.
More precisely, for each sequence length $n$, and for a radius $R>0$ to be discussed later, we look for an input $X\in B_R^n$ where $f^{\text{model}}$ has a large (ideally maximal) local Lipschitz constant.
Unfortunately, performing a gradient ascent on the local Lipschitz constant $f^{\text{model}}$ gives poor results, as the optimization landscape is highly non-convex.
We, therefore, build $X$ as follows, using the heuristics provided by Proposition \ref{prop:unnorm_lower_bound_mf}.
For $h\in \{1, \dots, 12\}$, denote $A^{(h)} \coloneqq {K^{(h)}}^\top Q^{(h)} / \sqrt{k}$, with the notations of Definition \ref{def:multi_head} applied to the multi-head self-attention layer $f^{\text{model}}$.
Choose any head $h$, and denote $\gamma_1$ (resp. $\gamma_\delta$) the largest (resp. the smallest) real eigenvalue of $A$, and $u_1$ (resp. $u_\delta$) an associated unit eigenvector.
If $\gamma_1 \ge - 8 \gamma_\delta$, define $X = (x_1, \dots, x_n)$ with $x_1 \coloneqq R u_1$ and $x_2 = \dots = x_n \coloneqq R/2 u_1$.
If $\gamma_1 < - 8 \gamma_\delta$, define $X = (x_1, \dots, x_n)$ with $x_1 \coloneqq R u_\delta$ and $x_2 = \dots = x_n \coloneqq -R u_\delta$.
This way of defining adversarial inputs does not exactly maximize the local Lipschitz constant for each choice of $n$ and $R$, but leads, for $R$ large enough, to a growth rate of $\sqrt{n}$ for the local Lipschitz constant of $f^{\text{model}}$ (see Figure \ref{fig:real_data}), which is exactly what we need to recover tightness.

\paragraph{Influence of the scaling factor.}
In Figure \ref{fig:real_data}, the scaling factor $R$ is equal to 15.5 and 21.5 for the first two columns, which corresponds to an approximation of the mean radius of real data used with the same models in the first row.
In other words, our adversarial data for the first two models have tokens with a magnitude similar to tokens obtained with real data.
In contrast, for the third column, corresponding to GPT-2 randomly initialized, we take $R = 100$, which is much larger than the magnitude of tokens generated with real data (which is 27.7).
Indeed, we observed that smaller scaling factors induce a growth rate that is slower than $n^{1/2}$.
Studying this aspect more in-depth is an interesting perspective for future work.

\subsection{Discussion}

\autoref{fig:real_data} gives the following insights.
\begin{itemize}[nosep,wide, leftmargin=*]
    \item The Lipschitz constant of self-attention on real data grows significantly with the sequence length, in all considered cases, independently of the dataset, the depth of the attention layer, and of whether self-attention is masked or not.
    The observed growth rate is close to $n^{1/4}$, which is smaller than the worst-case rate $\sqrt{n}$.
    \item The Lipschitz constant of self-attention on our adversarial data grows like $\sqrt{n}$, which is the worst-case rate according to Theorem \ref{thm:unnorm_general_bound}.
    This evidentiates tightness of the bound with respect to the sequence length.
\end{itemize}
Let us make a few remarks.
First, the architecture of BERT adds biases to the traditional formula for self-attention.
This does not affect too much the theoretical analysis (see Appendix \ref{appsubsec:sa_biases}).
Second, the same experiments as in Figure \ref{fig:real_data} performed with GPT-2 pre-trained \cite{radford2019language} lead to a different behavior of the Lipschitz constant.
In particular, the growth rate of the Lipschitz constant can be faster than $\sqrt{n}$, which seems to come from a correlation between the sequence length of inputs and the magnitude of their tokens after going through LayerNorm (see Appendix \ref{appsubsec:gpt}).
Finally, our results point out the difficulty of designing Lispchitz-constrained self-attention layers independently of the sequence length.
Indeed, dividing a self-attention layer by the mean-field bound of Theorem \ref{thm:mean_field_unnorm} to enforce its 1-Lipschitz continuity would induce a dramatic loss of expressive power on smaller sequence lengths.
However, when the sequence length is fixed -- for example with Vision Transformers \cite{dosovitskiy2020image}, dividing the output of the self-attention layer by the bound in Theorem \ref{thm:unnorm_general_bound} is a promising option.

\section*{Conclusion}

In this thorough study of the Lipschitz constant of self-attention, we have identified sharp bounds in different regimes, the most relevant from a practical viewpoint being the general bounds stated in Theorems \ref{thm:unnorm_general_bound} and \ref{thm:masked_general_bound}.
Our theoretical and numerical analyses show that the Lipschitz constant of self-attention grows with the sequence length, the worst-case rate being $\sqrt{n}$, and the rate on real data being at least $n^{1/4}$, and possibly larger for learned positional encoding.
This insight is new and represents an obstruction to designing robust Transformers without modifying the architecture or fixing the sequence length of inputs, which opens interesting avenues for future work.
We have also introduced a novel mean-field framework for masked self-attention, which overcomes the lack of permutation equivariance and paves the way for a study of neural PDEs on Decoders, as \citet{sander2022sinkformers} and \citet{geshkovski2023emergence, geshkovski2023layernorm} do for Encoders.

\section*{Impact Statement}
This paper presents work whose goal is to advance the field of Machine Learning.
There are many potential societal consequences of our work, none of which we feel must be specifically highlighted here.

\bibliographystyle{icml2024}
\bibliography{example_paper}

\newpage
\appendix
\onecolumn

\section{Optimal Transport Toolbox}
\label{appsec:optimaltransport}

This section gathers some useful definitions and lemmas from optimal transport.
In what follows, $\mathcal{X}$ is a Borel subset of $\R^d$.

\subsection{Pushforward, Wasserstein Distance}

Let us start with the notion of pushforward.

\begin{definition}[Pushforward]
    Set $\mu$ a probability measure on $\mathcal{X}$ and $\varphi\colon \mathcal{X} \to \mathcal{X}$ a measurable function.
    The pushforward of $\mu$ by $\varphi$, denoted $\varphi_\sharp \mu$, is the probability measure given by
    $$\left (\varphi_\sharp \mu\right )(B) = \mu(\varphi^{-1}(B))$$
    for any Borel set $B\subset \mathcal{X}$, where $\varphi^{-1}(B) \coloneqq \{ x\in \R^d : \varphi(x) \in B \}$.
\end{definition}

The pushforward measure $\varphi_\sharp \mu$ can be seen as the result of a transportation of the mass of $\mu$ by $\varphi$.
Intuitively, $\varphi_\sharp \mu$ is obtained by transporting each element of mass $\mu(\dd x)$ from $x$ to $\varphi(x)$.

Another crucial tool is the notion of Wasserstein distance.

\begin{definition}[Wasserstein space, Wasserstein distance]
    Let $p\ge 1$.
    Denote
    $$\mathcal{P}_p(\mathcal{X}) \coloneqq \{ \mu \in  \mathcal{P}(\mathcal{X}) : \int_\mathcal{X}  \lvert x\rvert^p \dd \mu(x) < \infty\}$$
    the $p$-Wasserstein space.
    Then, the $p$-Wasserstein distance between two probability measures $\mu, \nu \in \mathcal{P}_p(\mathcal{X})$ is defined as
    $$W_p(\mu, \nu) \coloneqq \left (\inf_{\pi \in \Pi(\mu, \nu)} \int \lvert x-y\rvert ^p \dd \pi(x,y)\right )^{1/p}$$
    with $\Pi(\mu, \nu)$ the set of all couplings between $\mu$ and $\nu$, i.e. of all probability measures $\pi \in \mathcal{P}(\mathcal{X} \times \mathcal{X})$ such that $\int \pi(\cdot,y) \dd y = \mu$ and $\int \pi(x,\cdot) \dd x = \nu$.
\end{definition}

Wasserstein distances have the following nice property, which is a direct consequence of Jensen inequality.

\begin{lemma}
    \label{applem:ineq_w_distances}
    For every $p\ge 1$, it holds
    $$W_1\le W_p.$$
\end{lemma}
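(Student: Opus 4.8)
The plan is to prove the inequality $W_1 \le W_p$ for all $p \ge 1$ by exploiting the fact that the same couplings compete in both optimization problems, so that the only thing to compare is the integrand $|x - y|$ against $\left(|x-y|^p\right)^{1/p}$ under an arbitrary coupling $\pi$. The key tool is Jensen's inequality applied to the concave function $t \mapsto t^{1/p}$ on $[0, +\infty)$.

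First I would fix $\mu, \nu \in \pcal_p(\xcal)$ and let $\pi \in \Pi(\mu, \nu)$ be an arbitrary coupling. Define $X := \int |x - y|^p \, \dd \pi(x,y)$, which is finite since $\mu, \nu \in \pcal_p(\xcal)$. I want to show $\int |x-y| \, \dd\pi(x,y) \le X^{1/p}$. Since $\pi$ is a probability measure and $t \mapsto t^{1/p}$ is concave on $[0,+\infty)$, Jensen's inequality gives
$$\int |x-y| \, \dd\pi(x,y) = \int \left(|x-y|^p\right)^{1/p} \dd\pi(x,y) \le \left(\int |x-y|^p \, \dd\pi(x,y)\right)^{1/p} = X^{1/p}.$$
The left-hand side is an upper bound for $W_1(\mu,\nu)$? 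No — rather, the left-hand side is $\ge W_1(\mu,\nu)$ only after taking an infimum; I should be careful about the direction. Taking the infimum over $\pi \in \Pi(\mu,\nu)$ on both sides: the left-hand side infimum is exactly $W_1(\mu,\nu)$, and the right-hand side infimum is $\left(\inf_\pi \int |x-y|^p \dd\pi\right)^{1/p} = W_p(\mu,\nu)$ since $t \mapsto t^{1/p}$ is increasing. Hence $W_1(\mu,\nu) \le W_p(\mu,\nu)$.

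More carefully, the cleanest argument avoids swapping infimum and the power: for each $\pi \in \Pi(\mu,\nu)$ we have, by the Jensen step above, $\int |x-y|\,\dd\pi \le \left(\int |x-y|^p\,\dd\pi\right)^{1/p}$, and therefore $W_1(\mu,\nu) = \inf_\pi \int |x-y|\,\dd\pi \le \inf_\pi \left(\int |x-y|^p\,\dd\pi\right)^{1/p} = \left(\inf_\pi \int |x-y|^p\,\dd\pi\right)^{1/p} = W_p(\mu,\nu)$, where the last equality uses monotonicity of $t \mapsto t^{1/p}$. This works because the infimum over the \emph{same} index set $\Pi(\mu,\nu)$ of a monotone function of a quantity equals the monotone function of the infimum of that quantity.

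There is no real obstacle here; the only point requiring minor care is the direction of the inequality when passing to infima and making sure that $W_p$ is well-defined (i.e. the integrals are finite), which is guaranteed by the assumption $\mu, \nu \in \pcal_p(\xcal)$ together with the elementary bound $|x-y|^p \le 2^{p-1}(|x|^p + |y|^p)$. One could alternatively cite the standard monotonicity of $L^p$ norms in $p$ on a probability space, but the Jensen argument is self-contained and matches the remark already made in the statement that this is "a direct consequence of Jensen inequality."
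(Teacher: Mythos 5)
Your proof is correct and follows exactly the route the paper indicates (the paper states the lemma as ``a direct consequence of Jensen inequality'' without spelling out the details, and your Jensen-plus-monotonicity argument is precisely that). The care you take with the direction of the inequality when passing to the infimum over couplings is the right level of rigor; nothing more is needed.
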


The distance $W_1$ has also a simple dual formulation.

\begin{lemma}[$W_1$ duality formula]
    \label{applem:duality_formula}
    The distance $W_1$ can be rewritten with the so-called duality formula: for all $\mu, \nu \in \mathcal{P}_1(\mathcal{X})$, it holds
    \begin{equation}
        \label{appeq:duality}
        W_1(\mu, \nu) = \sup_{\lip(\varphi)\le 1} \int \varphi\, \dd (\mu - \nu),
    \end{equation}
    where the supremum is taken over all functions $\varphi\colon \mathcal{X} \to \R$ with a Lipschitz constant bounded by one.
\end{lemma}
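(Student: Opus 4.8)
The plan is to prove the two inequalities $\sup_{\lip(\varphi)\le 1}\int \varphi\, \dd(\mu-\nu) \le W_1(\mu,\nu)$ and $W_1(\mu,\nu) \le \sup_{\lip(\varphi)\le 1}\int \varphi\, \dd(\mu-\nu)$ separately. The first is elementary: fix a $1$-Lipschitz $\varphi\colon \xcal \to \R$ and any coupling $\pi \in \Pi(\mu,\nu)$. Picking $x_0\in\xcal$, the bound $|\varphi(x)| \le |\varphi(x_0)| + |x-x_0|$ together with $\mu,\nu \in \mathcal{P}_1(\xcal)$ shows $\varphi$ is $\mu$- and $\nu$-integrable, so using that $\pi$ has marginals $\mu$ and $\nu$,
$$\int \varphi\, \dd(\mu-\nu) = \int \big(\varphi(x) - \varphi(y)\big)\,\dd\pi(x,y) \le \int |x-y|\,\dd\pi(x,y).$$
Taking the infimum over $\pi \in \Pi(\mu,\nu)$ yields $\int \varphi\,\dd(\mu-\nu) \le W_1(\mu,\nu)$, and then the supremum over $\varphi$.

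For the reverse inequality I would start from the general Kantorovich duality formula
$$W_1(\mu,\nu) = \sup\Big\{ \textstyle\int \varphi\,\dd\mu + \int \psi\,\dd\nu \; : \; \varphi \in L^1(\mu),\ \psi \in L^1(\nu),\ \varphi(x) + \psi(y) \le |x-y| \ \text{for all } x,y \in \xcal \Big\},$$
which holds for the lower semicontinuous cost $c(x,y) = |x-y|$ and can either be cited from \citet{santambrogio2015optimal} or derived by Fenchel--Rockafellar duality. The point is to reduce the admissible pairs $(\varphi,\psi)$ to pairs of the form $(\varphi,-\varphi)$ with $\varphi$ $1$-Lipschitz, via $c$-transforms. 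Given an admissible $\psi$, set $\psi^c(x) \coloneqq \inf_{y\in\xcal}\big(|x-y| - \psi(y)\big)$. Since $(\varphi,\psi)$ is admissible we have $\varphi \le \psi^c$ pointwise, so $\psi^c > -\infty$, and as an infimum of $1$-Lipschitz functions of $x$ it is itself $1$-Lipschitz (hence integrable against $\mu$); moreover $(\psi^c,\psi)$ is still admissible. Iterating once more, $(\psi^c, (\psi^c)^c)$ is admissible with $(\psi^c)^c \ge \psi$, so its objective value dominates that of $(\varphi,\psi)$. Thus the Kantorovich supremum is unchanged if restricted to pairs $(\varphi, \varphi^c)$ with $\varphi$ $1$-Lipschitz. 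Finally, for such $\varphi$ one computes $\varphi^c(y) = \inf_{x\in\xcal}\big(|x-y| - \varphi(x)\big) = -\varphi(y)$: the value $-\varphi(y)$ is attained at $x=y \in \xcal$, and for any other $x$ the $1$-Lipschitz bound gives $|x-y| - \varphi(x) \ge |x-y| - \varphi(y) - |x-y| = -\varphi(y)$. Hence the Kantorovich dual equals $\sup_{\lip(\varphi)\le 1}\big(\int \varphi\,\dd\mu - \int \varphi\,\dd\nu\big)$, which is exactly \eqref{appeq:duality}.

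The only genuinely nontrivial ingredient, and thus the main obstacle, is the Kantorovich duality formula itself; everything after it is bookkeeping about $c$-transforms — measurability, $\mathcal{P}_1$-integrability, and the fact that the relevant infima are finite, which follows from the existence of at least one admissible pair. Since this appendix is a toolbox of standard optimal-transport facts, I would either cite the duality from \citet{santambrogio2015optimal} or include a short self-contained proof via Fenchel--Rockafellar: write the primal as the minimization of the linear functional $\pi \mapsto \int |x-y|\,\dd\pi$ over $\Pi(\mu,\nu)$, view $\Pi(\mu,\nu)$ as the intersection of the cone of nonnegative measures with an affine marginal constraint, and dualize. It is also worth noting that taking $\xcal$ a general Borel subset of $\R^d$ causes no difficulty: the identity $\varphi^c = -\varphi$ only uses points $y \in \xcal$ and the triangle inequality, so no Lipschitz extension off $\xcal$ is ever needed.
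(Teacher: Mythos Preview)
Your argument is correct and is the standard route to Kantorovich--Rubinstein duality: the easy inequality via any coupling, and the hard inequality by specializing general Kantorovich duality to the cost $c(x,y)=|x-y|$ using the $c$-transform trick $\varphi^c=-\varphi$ for $1$-Lipschitz $\varphi$. The integrability checks and the remark that working on a Borel subset $\mathcal X\subset\R^d$ causes no issue are also fine.

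There is nothing to compare against, though: the paper does not prove this lemma. It appears in the appendix as part of an ``Optimal Transport Toolbox'' of standard facts and is simply stated without proof (implicitly deferring to references such as \citet{santambrogio2015optimal}). So your write-up goes beyond what the paper does; if you want to match the paper's level of detail you can just cite the result, and if you want a self-contained appendix your sketch is already adequate. The only part you flag as nontrivial --- Kantorovich duality itself --- is indeed the real content, and citing it is entirely in keeping with how the paper treats this section.
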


The following result is useful to bound the Wasserstein distance between two probability measures that are pushed forward by the same map.

\begin{lemma}
    \label{applem:lip_wass1}
    Let $p\ge 1$.
    Consider a measurable function $\varphi\colon \mathcal{X} \to \mathcal{X}$, and probability measures $\mu, \nu \in \mathcal{P}_p(\mathcal{X})$ such that $\varphi_\sharp \mu \in \mathcal{P}_p(\mathcal{X})$ and $ \varphi_\sharp \nu \in \mathcal{P}_p(\mathcal{X})$.
    Then, it holds
    $$W_p(\varphi_\sharp \mu, \varphi_\sharp \nu) \le \lip(\varphi) W_p(\mu, \nu).$$
\end{lemma}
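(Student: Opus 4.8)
Write $L \coloneqq \lip(\varphi)$. If $L = +\infty$ there is nothing to prove, so assume $L < +\infty$. The strategy is the classical one: transport the optimal plan for $(\mu,\nu)$ through $\varphi$ and estimate the resulting cost. Fix an optimal coupling $\pi \in \Pi(\mu, \nu)$ for $W_p$ (existence follows from the standard weak-compactness and lower-semicontinuity argument for optimal transport; alternatively one can run the whole argument with $\varepsilon$-optimal couplings and let $\varepsilon \to 0$ at the end, which avoids invoking existence).

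The first step is to define the candidate coupling between the pushforwards. Let $\Phi \colon \mathcal{X} \times \mathcal{X} \to \mathcal{X} \times \mathcal{X}$ be the map $\Phi(x,y) \coloneqq (\varphi(x), \varphi(y))$, which is measurable since $\varphi$ is, and set $\tilde\pi \coloneqq \Phi_\sharp \pi$. I would then check that $\tilde\pi \in \Pi(\varphi_\sharp \mu, \varphi_\sharp \nu)$: for any Borel set $B \subset \mathcal{X}$, the first marginal of $\tilde\pi$ evaluated on $B$ equals $\pi(\Phi^{-1}(B \times \mathcal{X})) = \pi(\varphi^{-1}(B) \times \mathcal{X}) = \mu(\varphi^{-1}(B)) = (\varphi_\sharp\mu)(B)$, and symmetrically for the second marginal. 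This is the only slightly fiddly bookkeeping step, and it is the place where one must be careful with the definition of the pushforward; it is not a genuine obstacle.

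The second step is the cost estimate. Since $\tilde\pi$ is an admissible coupling, the definition of $W_p$ gives
\begin{equation*}
    W_p(\varphi_\sharp\mu, \varphi_\sharp\nu)^p \le \int_{\mathcal{X}\times\mathcal{X}} \lvert x' - y'\rvert^p \,\dd\tilde\pi(x',y').
\end{equation*}
Applying the change-of-variables formula for pushforwards (i.e. $\int g \,\dd(\Phi_\sharp\pi) = \int (g\circ\Phi)\,\dd\pi$, first for $g$ simple, then by monotone convergence for $g(x',y') = \lvert x'-y'\rvert^p \ge 0$) converts the right-hand side into $\int \lvert \varphi(x) - \varphi(y)\rvert^p \,\dd\pi(x,y)$. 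The Lipschitz bound on $\varphi$ gives $\lvert \varphi(x) - \varphi(y)\rvert^p \le L^p \lvert x - y\rvert^p$ pointwise, so the integral is at most $L^p \int \lvert x-y\rvert^p \,\dd\pi(x,y) = L^p\, W_p(\mu,\nu)^p$ by optimality of $\pi$. Taking $p$-th roots yields $W_p(\varphi_\sharp\mu, \varphi_\sharp\nu) \le L\, W_p(\mu,\nu)$, which is the claim. (One should note in passing that the finiteness hypotheses $\varphi_\sharp\mu, \varphi_\sharp\nu \in \mathcal{P}_p(\mathcal{X})$ guarantee $W_p(\varphi_\sharp\mu,\varphi_\sharp\nu)$ is well-defined and finite, so the inequality is not vacuous.)

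There is no real difficulty here beyond care with measure-theoretic formalities; the "hard part," such as it is, is simply verifying that $\Phi_\sharp\pi$ has the right marginals.
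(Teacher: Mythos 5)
Your proof is correct and follows essentially the same route as the paper: both push the coupling $\pi \in \Pi(\mu,\nu)$ forward by $\varphi \times \varphi$ (the paper writes this explicitly as $\pi'(B_1 \times B_2) \coloneqq \pi(\varphi^{-1}(B_1)\times\varphi^{-1}(B_2))$, which is your $\Phi_\sharp\pi$), then bound the resulting transport cost using the Lipschitz estimate $\lvert\varphi(x)-\varphi(y)\rvert \le \lip(\varphi)\lvert x-y\rvert$. The only cosmetic difference is that you fix an optimal (or $\varepsilon$-optimal) coupling up front, whereas the paper carries the infimum through the chain of inequalities; the content is identical.
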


\begin{proof}
    We have
    \begin{align*}
        W_p(\varphi_\sharp \mu, \varphi_\sharp \nu)^p &= \inf_{\pi' \in \Pi(\varphi_\sharp \mu, \varphi_\sharp \nu)} \int \norm{x-y}^p \dd \pi'(x,y) \\
        &\le \inf_{\pi \in \Pi(\mu, \nu)} \int \norm{\varphi(x)-\varphi(y)}^p \dd \pi(x,y) \\
        &\le \lip(\varphi)^p \inf_{\pi \in \Pi(\mu, \nu)} \int \norm{x-y}^p \dd \pi(x,y)\\
        &= \lip(\varphi)^p W_p(\mu, \nu)^p,
    \end{align*}
    where the first inequality derives from the fact that every $\pi \in \Pi(\mu, \nu)$ induces a coupling $\pi' \in \Pi(\varphi_\sharp \mu, \varphi_\sharp \nu)$ by setting
    $$\pi'(B_1 \times B_2) \coloneqq \pi(\varphi^{-1}(B_1)\times \varphi^{-1}(B_2))$$
    for all Borel sets $B_1,B_2\subset \mathcal{X}$, and that with this choice of $\pi'$ it holds
    $$ \int \norm{x-y}^p \dd \pi'(x,y) = \int \norm{\varphi(x)-\varphi(y)}^p \dd \pi(x,y).$$
\end{proof}

Let us now bound the Wasserstein distance between two different pushforwards of the same probability measure.

\begin{lemma}
    \label{applem:lip_wass2}
    Let $p\ge 1$.
    Consider two measurable functions $\varphi,\psi\colon \mathcal{X} \to \mathcal{X}$, and a probability measure $\mu \in \mathcal{P}_p(\mathcal{X})$ such that $\varphi_\sharp \mu \in \mathcal{P}_p(\mathcal{X})$ and $\psi_\sharp \mu \in \mathcal{P}_p(\mathcal{X})$.
    Then, it holds
    $$W_p(\varphi_\sharp \mu, \psi_\sharp \mu) \le \norm{\varphi-\psi}_{L^p(\mu)}.$$
\end{lemma}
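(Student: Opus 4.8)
The plan is to exhibit an explicit coupling between $\varphi_\sharp\mu$ and $\psi_\sharp\mu$ that transports $\varphi(x)$ to $\psi(x)$ for each $x$, and then bound its cost by $\norm{\varphi-\psi}_{L^p(\mu)}^p$. The key observation is that the pair map $\Phi \coloneqq (\varphi,\psi)\colon \mathcal{X}\to\mathcal{X}\times\mathcal{X}$, $x\mapsto(\varphi(x),\psi(x))$, pushes $\mu$ forward to a probability measure $\pi\coloneqq\Phi_\sharp\mu$ on $\mathcal{X}\times\mathcal{X}$ which is a valid coupling of $\varphi_\sharp\mu$ and $\psi_\sharp\mu$.

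First I would verify the marginal conditions: for any Borel set $B_1\subset\mathcal{X}$, $\pi(B_1\times\mathcal{X}) = \mu(\Phi^{-1}(B_1\times\mathcal{X})) = \mu(\varphi^{-1}(B_1)) = (\varphi_\sharp\mu)(B_1)$, and symmetrically the second marginal is $\psi_\sharp\mu$. Hence $\pi\in\Pi(\varphi_\sharp\mu,\psi_\sharp\mu)$. Second, I would bound the transport cost of this particular coupling using the change-of-variables formula for pushforwards:
\begin{align*}
    W_p(\varphi_\sharp\mu,\psi_\sharp\mu)^p
    &\le \int_{\mathcal{X}\times\mathcal{X}} \lvert x-y\rvert^p\,\dd\pi(x,y) \\
    &= \int_{\mathcal{X}} \lvert \varphi(x)-\psi(x)\rvert^p\,\dd\mu(x) \\
    &= \norm{\varphi-\psi}_{L^p(\mu)}^p.
\end{align*}
Taking $p$-th roots gives the claim. (If $\norm{\varphi-\psi}_{L^p(\mu)}=+\infty$ the inequality is trivial, and the hypothesis $\varphi_\sharp\mu,\psi_\sharp\mu\in\mathcal{P}_p(\mathcal{X})$ together with the triangle inequality in $L^p(\mu)$ shows it is in fact finite, so $\pi$ indeed has finite $p$-cost and is an admissible competitor in the infimum defining $W_p$.)

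There is no real obstacle here; this is the standard "same base measure, two transport maps" estimate and mirrors exactly the structure of the proof of Lemma~\ref{applem:lip_wass1}. The only point requiring a line of care is the change-of-variables identity $\int g\,\dd(\Phi_\sharp\mu) = \int g\circ\Phi\,\dd\mu$ applied to $g(x,y)=\lvert x-y\rvert^p$, which is immediate from the definition of pushforward for nonnegative measurable $g$, and checking that $\pi=\Phi_\sharp\mu$ genuinely lies in $\Pi(\varphi_\sharp\mu,\psi_\sharp\mu)$ so that it is a legitimate competitor in the infimum.
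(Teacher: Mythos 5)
Your proposal is correct and is essentially the same as the paper's proof: the coupling $\pi'(B\times C)=\mu(\varphi^{-1}(B)\cap\psi^{-1}(C))$ that the paper constructs is exactly $\Phi_\sharp\mu$ for $\Phi=(\varphi,\psi)$, so the two arguments are the same, with yours phrased a bit more cleanly via the pair-map pushforward.
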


\begin{proof}
    Recall that
    \begin{equation*}
        W_p(\varphi_\sharp \mu, \psi_\sharp \mu)^p = \inf_{\pi' \in \Pi(\varphi_\sharp \mu, \psi_\sharp \mu)} \int \norm{x-y}^p \dd \pi'(x,y).
    \end{equation*}
    Now consider the following coupling between $\varphi_\sharp \mu$ and $\psi_\sharp \mu$, defined by the relation
    $$\pi'(B\times  C) \coloneqq \mu(\varphi^{-1}(B)\cap \psi^{-1}(C))$$
    for every Borel sets $B, C\subset \mathcal{X}$.
    In other words, we set $\dd \pi'(y, z) \coloneqq \int_{\varphi^{-1}(y)\cap \psi^{-1}(z)}\dd \mu$, and $\dd \pi'(y, z) = 0$ if $\varphi^{-1}(y)\cap \psi^{-1}(z) = \emptyset$.
    With this definition of $\pi'$, we have
    \begin{align*}
        W_p(\varphi_\sharp \mu, \psi_\sharp \mu)^p &\le \int\norm{x-y}^p \dd \pi'(x,y) 
        = \int \norm{\varphi(x) - \psi(x)}^p \dd \mu(x).
    \end{align*}
\end{proof}

\subsection{Geodesics}

The notion of a geodesic is useful for the following section of the Appendix.

\begin{definition}[Geodesic]
    Let $(\mathcal{E}, d_{\mathcal{E}})$ be a metric space.
    A curve $\gamma\colon [0,1] \to \mathcal{E}$ is called a geodesic if there exists a constant $v\ge 0$ such that for all $ t_1, t_2\in [0, 1]$ we have
    $$d_\mathcal{E}(\gamma(t_1), \gamma(t_2)) = v\lvert t_2 - t_1\rvert.$$
    We say that the space $\mathcal{E}$ is geodesic if for any $x, y\in \mathcal{E}$, there exists a geodesic between $x$ and $y$.
\end{definition}

One important example of geodesic space is the 2-Wasserstein space $\mathcal{P}_2(\R^d)$.

\begin{lemma}[\citet{santambrogio2015optimal}]
    The space $\mathcal{P}_2(\R^d)$ is a geodesic space.
\end{lemma}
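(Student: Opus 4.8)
The statement to prove is that $\mathcal{P}_2(\R^d)$ is a geodesic space, i.e.\ that any two measures $\mu, \nu \in \mathcal{P}_2(\R^d)$ can be joined by a constant-speed geodesic for the $W_2$ distance. The plan is to exhibit such a geodesic explicitly via \emph{displacement interpolation} (McCann's interpolation). Concretely, I would first invoke the existence of an optimal coupling $\pi \in \Pi(\mu, \nu)$ achieving the infimum in the definition of $W_2(\mu, \nu)$ — this follows from a standard tightness/lower-semicontinuity argument (Prokhorov plus weak-$*$ lower semicontinuity of the transport cost), which I would cite from \citet{santambrogio2015optimal} rather than reprove. Then, denoting by $\mathrm{pr}_t \colon (x,y) \mapsto (1-t)x + ty$ the interpolation map on $\R^d \times \R^d$, I would define the curve $\gamma(t) \coloneqq (\mathrm{pr}_t)_\sharp \pi$ for $t \in [0,1]$, and check that $\gamma(0) = \mu$, $\gamma(1) = \nu$.

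The key step is then to verify the geodesic identity $W_2(\gamma(t_1), \gamma(t_2)) = |t_2 - t_1| \, W_2(\mu,\nu)$ for all $t_1, t_2 \in [0,1]$. For the upper bound, I would push forward the optimal plan $\pi$ by the map $(x,y) \mapsto ((1-t_1)x + t_1 y,\ (1-t_2)x + t_2 y)$ to produce an admissible coupling between $\gamma(t_1)$ and $\gamma(t_2)$; since $((1-t_2)x+t_2y) - ((1-t_1)x+t_1y) = (t_2-t_1)(y-x)$, the cost of this coupling is exactly $(t_2-t_1)^2 \int |x-y|^2 \dd\pi = (t_2-t_1)^2 W_2(\mu,\nu)^2$, giving $W_2(\gamma(t_1),\gamma(t_2)) \le |t_2-t_1| W_2(\mu,\nu)$. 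For the matching lower bound, I would use the triangle inequality for $W_2$: writing $W_2(\mu,\nu) \le W_2(\mu, \gamma(t_1)) + W_2(\gamma(t_1), \gamma(t_2)) + W_2(\gamma(t_2), \nu)$ and bounding the first and last terms by the upper bound just established ($t_1 W_2(\mu,\nu)$ and $(1-t_2)W_2(\mu,\nu)$ respectively, for $t_1 \le t_2$), one is forced into equality throughout, which pins down $W_2(\gamma(t_1), \gamma(t_2)) = (t_2 - t_1) W_2(\mu,\nu)$. I should also note $\gamma(t) \in \mathcal{P}_2(\R^d)$ since its second moment is finite (bounded by a convex combination of those of $\mu$ and $\nu$).

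The main obstacle — or rather the one point requiring care rather than cleverage — is the existence of the optimal coupling $\pi$: without it, the curve $\gamma$ is not well-defined, and this existence genuinely requires the compactly-supported or finite-second-moment hypothesis together with a compactness argument. Since the excerpt already works throughout with $\mathcal{P}_c$ and cites \citet{santambrogio2015optimal} freely for optimal transport facts, I would simply quote existence of optimizers from there. Everything else is the two-line interpolation-map computation plus the triangle-inequality squeeze, so the proof is short once that input is in hand.
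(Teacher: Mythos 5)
Your proof is correct and is precisely the standard displacement (McCann) interpolation argument; the paper offers no proof of its own, instead citing \citet{santambrogio2015optimal}, and the argument you give is exactly the one found in that reference.
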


\section{Local Lipschitz Constant in the Mean-Field Setting}

\subsection{Mean-Field Self-Attention Generalizes Self-Attention}
\label{appsubsec:mean_field_attention}

For any input $X \in (\R^d)^n$, we have $F(m(X)) = m(f(X))$.
Indeed, we can rewrite $f$ as
\begin{equation} 
\label{appeq:discrete_pushforward}
    f(X) = (\Gamma_X(x_1),\dots, \Gamma_X(x_n)),
\end{equation}
with
$$\Gamma_X\colon x\mapsto \frac{\sum_{i=1}^n \exp \left (\frac{1}{\sqrt{k}}x^\top Q^\top K x_i\right )Vx_i}{\sum_{i=1}^n \exp \left (\frac{1}{\sqrt{k}} x^\top Q^\top K x_i\right )}.$$
Seeing the ratio of sums as a ratio of integrals against the empirical measure $m(X)$, and Equation (\ref{appeq:discrete_pushforward}) as a pushforward leads precisely to the formula of mean-field self-attention.

\subsection{Local Lipschitz Constant in a General Metric Framework}
\label{appsubsec:general_local_lip_cst}

The local Lipschitz constant can be defined for any function between two metric spaces, as follows.

\begin{definition}[Local Lipschitz constant]
\label{def:loc_cst}
    Let $\varphi\colon \mathcal{E} \to \mathcal{F}$ be a map between two metric spaces.
    We define the local Lipschitz constant of $\varphi$ at point $x\in \mathcal{E}$ as
    $$\lip_x(\varphi) \coloneqq \lim_{\varepsilon \to 0^+} \lip \varphi_{\lvert B(x, \varepsilon)}.$$
    The limit exists, as $\lip \varphi_{\lvert B(x, \varepsilon)}$ decreases with $\varepsilon$.
\end{definition}

Definition \ref{def:loc_cst} is interesting, as it captures more information than the global Lipschitz constant.
More precisely, we have the following connection between the two notions.

\begin{lemma}
\label{lem:loc_glob_lip}
Let $\varphi\colon \mathcal{E} \to \mathcal{F}$ be a map between two metric spaces.
We have
$$\lip (\varphi) \ge \sup_{x\in \mathcal{E}} \lip_x (\varphi).$$
Assume moreover that the space $\mathcal{E}$ admits geodesics, which is the case for $\R^d$ and $\mathcal{P}_2(\R^d)$ equipped with $W_2$ (see \ref{appsec:optimaltransport}).
Then, this inequality becomes an equality.
\end{lemma}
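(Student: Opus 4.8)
\textbf{Proof plan for Lemma \ref{lem:loc_glob_lip}.}

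The plan is to prove the two parts separately: first the inequality $\lip(\varphi) \ge \sup_{x\in\mathcal{E}} \lip_x(\varphi)$ for arbitrary metric spaces, then the reverse inequality under the geodesic assumption. For the first part, I would fix $x\in\mathcal{E}$ and any $\varepsilon > 0$. By definition, $\lip\varphi_{\lvert B(x,\varepsilon)}$ is a supremum of ratios $d_{\mathcal F}(\varphi(y),\varphi(z)) / d_{\mathcal E}(y,z)$ taken over a subset of all admissible pairs $(y,z)$, hence $\lip\varphi_{\lvert B(x,\varepsilon)} \le \lip(\varphi)$; letting $\varepsilon\to 0^+$ and then taking the supremum over $x$ gives the claim. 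This direction is entirely routine and should be dispatched in one or two lines.

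For the reverse inequality, the idea is to reconstruct a global Lipschitz ratio from local ones along a geodesic. Fix $x, y\in\mathcal{E}$ with $x\neq y$, and let $\gamma\colon[0,1]\to\mathcal{E}$ be a geodesic from $x$ to $y$, so that $d_{\mathcal E}(\gamma(t_1),\gamma(t_2)) = d_{\mathcal E}(x,y)\lvert t_2-t_1\rvert$ for all $t_1,t_2$. Set $L \coloneqq \sup_{z\in\mathcal{E}}\lip_z(\varphi)$; I may assume $L<\infty$, otherwise there is nothing to prove. The plan is to show that $\varphi\circ\gamma$ is $L\, d_{\mathcal E}(x,y)$-Lipschitz on $[0,1]$, which then yields $d_{\mathcal F}(\varphi(x),\varphi(y)) \le L\, d_{\mathcal E}(x,y)$ and hence $\lip(\varphi)\le L$ after taking the supremum over $x\neq y$. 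To get the Lipschitz bound on $\varphi\circ\gamma$, I would use a covering / continuity argument: for each $t\in[0,1]$, since $\lip_{\gamma(t)}(\varphi)\le L$ there is $\varepsilon_t>0$ with $\lip\varphi_{\lvert B(\gamma(t),\varepsilon_t)} \le L + \delta$ for a prescribed $\delta>0$; the preimages $\gamma^{-1}(B(\gamma(t),\varepsilon_t))$ form an open cover of the compact interval $[0,1]$ (using continuity of $\gamma$, which follows from it being a geodesic). Extract a finite subcover, take a Lebesgue number, and partition $[0,1]$ into finitely many subintervals $0 = t_0 < t_1 < \dots < t_N = 1$ each contained in one of the balls. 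On each piece, $d_{\mathcal F}(\varphi(\gamma(t_{i})),\varphi(\gamma(t_{i+1}))) \le (L+\delta)\, d_{\mathcal E}(\gamma(t_i),\gamma(t_{i+1})) = (L+\delta)\, d_{\mathcal E}(x,y)(t_{i+1}-t_i)$; summing via the triangle inequality telescopes the factors $(t_{i+1}-t_i)$ to $1$, giving $d_{\mathcal F}(\varphi(x),\varphi(y)) \le (L+\delta)\, d_{\mathcal E}(x,y)$. Letting $\delta\to 0$ finishes it.

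The main obstacle is the finite-covering step: one must be careful that the pairs $(\gamma(t_i),\gamma(t_{i+1}))$ both lie inside a single ball $B(\gamma(t),\varepsilon_t)$ so that the local Lipschitz estimate genuinely applies to that pair — this is exactly what the Lebesgue number lemma guarantees. A minor subtlety is the degenerate case where $\gamma$ is constant (i.e. $x=y$), which is excluded since we take $x\neq y$, and the case $L=\infty$, handled trivially. Everything else — the continuity of geodesics, monotonicity of $\varepsilon\mapsto\lip\varphi_{\lvert B(x,\varepsilon)}$, compactness of $[0,1]$ — is standard. The fact that $\R^d$ and $(\mathcal{P}_2(\R^d),W_2)$ are geodesic is already recorded in the excerpt, so the lemma applies to both settings of interest.
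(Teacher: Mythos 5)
Your proof is correct and complete; the paper in fact states this lemma without proof, so you are filling a genuine gap. The first direction is, as you say, immediate from monotonicity of restricted suprema. For the converse, your chaining argument along a geodesic is the standard one: fix $x\neq y$, take a geodesic $\gamma$ (continuous since it is $d_{\mathcal E}(x,y)$-Lipschitz by definition), pick local radii $\varepsilon_t$ so that $\lip\varphi_{\lvert B(\gamma(t),\varepsilon_t)}\le L+\delta$, extract a Lebesgue number $\lambda$ for the pulled-back open cover of the compact interval, partition $[0,1]$ with mesh below $\lambda$, apply the local estimate on each piece, and telescope via the triangle inequality and the geodesic identity $d_{\mathcal E}(\gamma(t_i),\gamma(t_{i+1}))=d_{\mathcal E}(x,y)(t_{i+1}-t_i)$. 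You correctly isolate the only real subtlety, namely that both endpoints of each piece must lie in a single ball, which the Lebesgue number lemma guarantees, and you handle the trivial cases $L=\infty$ and $x=y$. Two cosmetic remarks: (i) to invoke the Lebesgue number lemma you should take open balls $B^\circ(\gamma(t),\varepsilon_t)$ when forming the cover, which is harmless since you can shrink $\varepsilon_t$; (ii) because $\gamma$ is a geodesic, the preimage $\gamma^{-1}(B(\gamma(t),\varepsilon))$ is literally an interval of the form $[t-\varepsilon/v,\,t+\varepsilon/v]\cap[0,1]$, so one can in fact bypass the Lebesgue number lemma by ordering a finite subcover of these intervals directly, but your route is equally valid and arguably cleaner.
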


\subsection{Proof of Lemma \ref{lem:link_euclidian_meanfield}}
\label{appsubsec:link_euclidian_meanfield}

Let us prove the following slightly stronger result.

\begin{lemma}
    Let $p\ge 1$.
    For any matrix $X\in \R^{n\times d}$, we have
    $$\lip_{X}^{\norm{\cdot}_{F,p}}(f) = \lip_{m(X)}^{W_p}(F_{\lvert \mathcal{M}_n(\R^d)})\le \lip_{m(X)}^{W_p}(F).$$
\end{lemma}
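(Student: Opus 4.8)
The statement has two parts: the equality $\lip_{X}^{\norm{\cdot}_{F,p}}(f) = \lip_{m(X)}^{W_p}(F_{\lvert \mathcal{M}_n(\R^d)})$, where $\mathcal{M}_n(\R^d)$ denotes the set of empirical measures of $n$ points (with multiplicity), and the inequality $\lip_{m(X)}^{W_p}(F_{\lvert \mathcal{M}_n(\R^d)}) \le \lip_{m(X)}^{W_p}(F)$. The second part is immediate: restricting the domain of $F$ to the subset $\mathcal{M}_n(\R^d)$ can only decrease the supremum defining the local Lipschitz constant, since $\lip_{\mu}\varphi_{\lvert B(\mu,\varepsilon)}$ is monotone under restriction of the ambient space. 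So the work is entirely in the equality.

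For the equality, the plan is to exploit the identity $F(m(Y)) = m(f(Y))$ (proven in Appendix~\ref{appsubsec:mean_field_attention}) together with the key metric fact that $Y \mapsto m(Y)$ is, up to the combinatorics of relabeling, an isometry from $((\R^d)^n, \norm{\cdot}_{F,p})$ modulo permutations onto $(\mathcal{M}_n(\R^d), W_p)$. Concretely, for two $n$-tuples $Y, Z$, one has $W_p(m(Y), m(Z))^p = \min_{\sigma \in \mathfrak{S}_n} \frac{1}{n}\sum_{i=1}^n |y_i - z_{\sigma(i)}|^p$: the optimal transport plan between two uniform empirical measures on $n$ atoms can be taken to be a permutation matrix (by Birkhoff's theorem, the extreme points of the Birkhoff polytope), so $W_p$ between empirical measures reduces to an optimal assignment. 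Hence $\norm{Y-Z}_{F,p}^p/n \ge W_p(m(Y),m(Z))^p$, with equality achievable by composing $Z$ with the optimal permutation. I would then take $Z$ ranging over a small ball around $X$: for the $\le$ direction of the equality, each perturbation $Y$ of $X$ gives $W_p(m(X),m(Y)) \le \norm{X-Y}_{F,p}/n^{1/p}$ and $W_p(F(m(X)),F(m(Y))) = W_p(m(f(X)),m(f(Y))) \le \norm{f(X)-f(Y)}_{F,p}/n^{1/p}$ — wait, the inequality goes the wrong way there — so more carefully: $W_p(m(f(X)),m(f(Y))) \le \norm{f(X)-f(Y)}_{F,p}/n^{1/p}$ and we want to lower-bound the numerator, so instead use that the OT value is a \emph{min} over permutations, pick the minimizing $\sigma$ for the input pair, relabel $Y$, and note $f$ is permutation equivariant so the output relabels the same way; this makes both $W_p$'s exactly equal to $n^{-1/p}$ times Frobenius distances of suitably labeled tuples, giving both inequalities and hence equality in the limit $\varepsilon \to 0$.

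The cleanest way to organize it: given $\varepsilon > 0$, show $\lip\, f_{\lvert B(X,\varepsilon)} = \lip\, F_{\lvert B(m(X), \varepsilon/n^{1/p}) \cap \mathcal{M}_n}$ up to controlling that every empirical measure $W_p$-close to $m(X)$ is $m(Y)$ for some $Y$ close to $X$ in Frobenius norm — which holds because $W_p(m(X),\nu) < \delta$ with $\nu \in \mathcal{M}_n$ forces $\nu = m(Y)$ with $\min_\sigma \norm{X - Y\circ\sigma}_{F,p} < n^{1/p}\delta$, and we may replace $Y$ by its optimal relabeling. Then $\frac{W_p(F(m(Y)),F(m(Y')))}{W_p(m(Y),m(Y'))}$ and $\frac{\norm{f(Y)-f(Y')}_{F,p}}{\norm{Y-Y'}_{F,p}}$ match once all tuples are put in optimal relative labeling, using permutation equivariance of $f$ to ensure $f(Y\circ\sigma) = f(Y)\circ\sigma$. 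Taking $\varepsilon \to 0^+$ on both sides gives the equality of local constants.

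\textbf{Main obstacle.} The delicate point is the bookkeeping around permutations: one must be careful that the $\min$ over $\sigma$ in the Wasserstein distance between empirical measures, the Frobenius distance (which sees a fixed labeling), and permutation equivariance of $f$ all interact consistently — in particular that choosing the optimal $\sigma$ for the \emph{pair} $(Y,Y')$ and relabeling simultaneously does not disturb which measure is $m(X)$, and that atoms with multiplicity (when some $x_i$ coincide) are handled correctly so that $\mathcal{M}_n(\R^d)$ really is the image of $Y\mapsto m(Y)$. Establishing the local (rather than global) version also requires the monotonicity of $\lip_\mu \varphi_{\lvert B(\mu,\varepsilon)}$ in $\varepsilon$ (Definition~\ref{def:loc_cst}) and the fact that small $W_p$-balls around $m(X)$ inside $\mathcal{M}_n$ correspond exactly to small Frobenius-balls around $X$ modulo permutation — a quantitative statement that needs the atoms of $m(X)$ to be resolvable, which is automatic for $\varepsilon$ small enough.
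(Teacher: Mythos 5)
Your proposal follows essentially the same route as the paper's proof: reduce $W_p$ between nearby empirical measures to a Frobenius distance under an optimal relabeling, use the identity $F(m(Y)) = m(f(Y))$ together with permutation equivariance of $f$, and shrink the ball until the identity permutation becomes optimal so that both ratios coincide. One detail to tighten in your ``main obstacle'' paragraph: besides requiring the atoms of $m(X)$ to be resolvable, you also need the atoms of $m(f(X))$ to be resolvable so that the identity is simultaneously the optimal matching on the output side; the paper secures this by introducing a second scale $\varepsilon_2$ depending on the minimal gap of $f(X)$ and then using continuity of $f$ to choose the input radius $\varepsilon$ small enough that $\norm{f(X)-f(Y)}_{F,p}\le\varepsilon_2$.
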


\begin{proof}
    Set $X\in \R^{n\times d}$.
    One can choose $\varepsilon_1 > 0$ small enough (for example $\varepsilon_1 < \min_{x_i\neq x_j}\norm{x_i-x_j}/2$) to have
    $$\norm{X-Y}_{F,p}\le \varepsilon_1 \Rightarrow \norm{X-Y}_{F,p} = W_p(m(X),m(Y)).$$ 
    Indeed, if 
    $$\norm{X-Y}_{F,p}\le \varepsilon_1 < \min_{x_i\neq x_j}\norm{x_i-x_j}/2,$$ 
    then for all $i \in \{1, \dots, n\}$, we have $\norm{x_i-y_i}\le \varepsilon_1$, and thus $x_i$ is the nearest neighbor (or one of the nearest neighbors) of $y_i$ among the $x_j$.
    Similarly, one can choose $\varepsilon_2 > 0$ small enough to have
    $$\norm{f(X)-f(Y)}_{F,p}\le \varepsilon_2 \Rightarrow \norm{f(X)-f(Y)}_{F,p} = W_p(m(f(X)),m(f(Y))).$$
    Then, we can set $\varepsilon\le \varepsilon_1$ small enough to have
    $$ \norm{X-Y}_{F,p}\le \varepsilon \Rightarrow \norm{f(X)-f(Y)}_{F,p}\le \varepsilon_2,$$
    and it holds, for all $\eta\le \varepsilon$ and all $Y$ such that $\norm{X-Y}_{F,p}\le \eta$, that 
    $$\norm{X-Y}_{F,p} = W_p(m(X),m(Y))$$ and $$\norm{f(X)-f(Y)}_{F,p} = W_p(m(f(X)),m(f(Y))).$$
    Now for all $\eta \le \varepsilon$, we have
    \begin{align*}
        \lip^{\norm{\cdot}_{F,p}}f_{\lvert B_{\norm{\cdot}_{F,p}}(X,\eta)} &= \sup_{Y \in B_{\norm{\cdot}_{F,p}}(X,\eta)}\frac{\norm{f(X)-f(Y)}_{F,p}}{\norm{X-Y}_{F,p}}\\
        &= \sup_{Y \in B_{\norm{\cdot}_{F,p}}(X,\eta)}\frac{W_p(m(f(X)), m(f(Y)))}{W_p(m(X),m(Y))} \\
        & = \sup_{Y \in B_{\norm{\cdot}_{F,p}}(X,\eta)}\frac{W_p(F(m(X)), F(m(Y)))}{W_p(m(X),m(Y))},
    \end{align*}
    by definition of $\varepsilon$ and $F$.
    We conclude the proof by noticing that:
    \begin{itemize}
        \item $Y\in B_{\norm{\cdot}_{F,p}}(X, \eta)$ implies $m(Y)\in B_{W_p}(m(X), \eta)$, which shows that $\lip^{\norm{\cdot}_{F,p}}f_{\lvert B_{\norm{\cdot}_{F,p}}(X,\eta)}\le \lip^{W_p}F_{\lvert \mathcal{M}_n(\R^d)}$,
        \item $\mu \in B_{W_p}(m(X), \eta)$ with $\mu\in \mathcal{M}_n(\R^d)$ implies the existence of $Y\in \R^{n\times d}$ such that $\mu = m(Y)$ and $\norm{X-Y}_{F,p} = W_p(m(X), m(Y))$, so that $Y\in B_{\norm{\cdot}_{F,p}}(X, \eta)$.
        Indeed, take $Y$ such that $\mu_n = m(Y)$ and then permute its coordinates so that $x_i$ becomes the nearest neighbor (or one of the nearest neighbors) of $y_i$.
        This shows the reverse inequality and concludes the proof.
    \end{itemize}
\end{proof}

\section{Proofs of Section \ref{sec:bounds}}
\label{appsec:proofs_sec3}

We have the following useful Lemma.

\begin{lemma}
    \label{applem:bound_variance}
    Let $\mu$ be a probability measure supported in $\bbar(x_0,R) \subset \R^d$, with any $x_0\in \R^d$ and $R>0$.
    Then, denoting $\var \mu \coloneqq \mathbb E [(Z - \mathbb E Z)(Z - \mathbb E Z)^\top]$ with $Z$ a random variable distributed according to $\mu$, we have
    $$\norm{\var \mu}_2 \le R^2,$$
    with equality when $\mu = \frac{1}{2}(\delta_{x_0+x} + \delta_{x_0-x})$ for any $x \in \R^d$ such that $\lvert x \rvert = R$.
\end{lemma}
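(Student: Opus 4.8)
The plan is to bound $\norm{\var \mu}_2$ by noting that for any unit vector $u \in \R^d$, the quantity $u^\top (\var \mu) u$ equals the variance of the real-valued random variable $u^\top Z$. Since $Z$ is supported in $\bbar(x_0, R)$, the projection $u^\top Z$ takes values in the interval $[u^\top x_0 - R, u^\top x_0 + R]$, an interval of length $2R$. The key one-dimensional fact I would invoke (or quickly prove via the "variance is minimized at the mean" identity) is that a real random variable supported in an interval of length $L$ has variance at most $L^2/4$; here this gives $\var(u^\top Z) \le (2R)^2/4 = R^2$. Taking the supremum over unit vectors $u$ yields $\norm{\var \mu}_2 = \sup_{\lvert u \rvert = 1} u^\top (\var \mu) u \le R^2$, using that $\var \mu$ is symmetric positive semidefinite so its operator norm is its largest eigenvalue, attained as a Rayleigh quotient.

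For the one-dimensional variance bound, the cleanest argument I would give: for a real random variable $Y$ with values in $[a, b]$, we have $\var Y = \mathbb{E}[(Y - c)^2] - (\mathbb{E}[Y] - c)^2 \le \mathbb{E}[(Y-c)^2]$ for any constant $c$; choosing $c = (a+b)/2$ gives $(Y - c)^2 \le ((b-a)/2)^2$ pointwise, hence $\var Y \le ((b-a)/2)^2$. Applying this with $Y = u^\top Z$, $[a,b] = [u^\top x_0 - R, u^\top x_0 + R]$ gives exactly $\var(u^\top Z) \le R^2$.

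For the equality case, I would take $\mu = \tfrac12(\delta_{x_0 + x} + \delta_{x_0 - x})$ with $\lvert x \rvert = R$. A direct computation gives $\mathbb{E} Z = x_0$ and $\var \mu = \mathbb{E}[(Z - x_0)(Z-x_0)^\top] = \tfrac12 x x^\top + \tfrac12 (-x)(-x)^\top = x x^\top$, whose operator norm is $\lvert x \rvert^2 = R^2$. So the bound is attained.

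The main obstacle here is essentially bookkeeping rather than conceptual difficulty: one must be careful that $\var \mu$ is well-defined (the support is bounded, so all second moments are finite) and that reducing the operator norm of a PSD matrix to a supremum of Rayleigh quotients over unit vectors is legitimate. The reduction to one dimension via projections is the crucial idea, and the scalar estimate "variance $\le$ (half the range)$^2$" is the only real inequality used; everything else is a routine check. I expect the write-up to be short.
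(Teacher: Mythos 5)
Your argument is correct, and it takes a genuinely different route from the paper's proof. You reduce to one dimension via the Rayleigh-quotient characterization $\norm{\var\mu}_2 = \sup_{|u|=1} u^\top(\var\mu)u = \sup_{|u|=1}\var(u^\top Z)$, and then invoke the Popoviciu-type bound that a scalar random variable supported in an interval of length $L$ has variance at most $L^2/4$. The paper instead bounds the operator norm by the \emph{trace} of the covariance: it moves the operator norm inside the expectation (Jensen/triangle inequality) using $\norm{vv^\top}_2 = |v|^2$ to get $\norm{\var\mu}_2 \le \mathbb E[|Z - \mathbb E Z|^2] = \mathbb E[|Z|^2] - |\mathbb E Z|^2$ after centering at $x_0 = 0$, and then bounds $\mathbb E[|Z|^2] \le R^2$ (via a slightly elaborate antipodal-angle argument, though $|Z|\le R$ already gives this). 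Both routes are sharp on the two-point measure and are comparable in length. Yours has the mild advantage of not needing the rank-one identity $\norm{vv^\top}_2 = |v|^2$ and of isolating the one-dimensional variance fact as the sole inequality; the paper's has the advantage of giving the stronger intermediate statement $\norm{\var\mu}_2 \le \operatorname{tr}(\var\mu)$ explicitly, which is the quantity actually manipulated elsewhere in the proof of Theorem~\ref{thm:unnorm_general_bound}. Either proof is acceptable here.
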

    
\begin{proof}
    Let us assume without loss of generality that $x_0 = 0$.
    It is straightforward to check that if $\mu = \frac{1}{2}(\delta_x + \delta_{-x})$ then $\norm{\var \mu}_2 = R^2$.
    To show that this is the maximal value the variance can take, we use the triangle inequality:
        \begin{align*}
            \norm{\mathbb E [(Z - \mathbb E Z)(Z - \mathbb E Z)^\top]}_2 &\le \mathbb E \norm{(Z - \mathbb E Z)(Z - \mathbb E Z)^\top}_2 \\
            &= \mathbb E [\lvert Z - \mathbb E Z\rvert^2]\\
            &= \mathbb E [\lvert Z\rvert^2] - \lvert \mathbb E Z\rvert^2.
        \end{align*}
    Now let us pick any $x \in B_R \setminus B(0,r)$.
    We have $\mathbb E (Z - x)^\top (Z + x) \le 0$, as the angle between the vectors $Z - x$ and $Z + x$ is at least $\pi/2$ for $Z$ in $B(0,R)$.
    By expanding this relationship we get
    $$E[\lvert Z\rvert^2] - \lvert x\rvert^2 \le 0,$$
    which yields the result.
\end{proof}

\subsection{The Jacobian of Self-Attention}
\label{appsubsec:jacobian_formula}

\begin{lemma}
    \label{applem:jacobian_formula}
    Let $X = (x_1, \dots, x_n) \in (\R^d)^n$.
    For all perturbations $\epsilon \coloneqq (\epsilon_1,\dots, \epsilon_n) \in (\R^d)^n$ and all $i \in \{1, \dots, n\}$, we have
    \begin{equation*}
    (D_{X}f)(\epsilon)_i = V \sum_{j=1}^n P_{ij} (x_j - \sum_{k=1}^n P_{ik}x_k)x_i^\top A^\top \epsilon_j + V\sum_{j=1}^n P_{ij} \epsilon_j + V \sum_{j=1}^n P_{ij}(x_j - \sum_{k=1}^n P_{ik}x_k) x_j^\top A \epsilon_i,
    \end{equation*}
    with $P_{ij} \coloneqq e^{ x_i^\top A^\top x_j} / \sum_{k=1}^n e^{ x_i^\top A^\top x_k}$.
\end{lemma}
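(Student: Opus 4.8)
The plan is to differentiate the explicit formula $f(X)_i = V\sum_{j=1}^n P_{ij}x_j$ directly, treating the softmax weights $P_{ij}$ as functions of $X$ and applying the product and chain rules. First I would record the key fact about the softmax Jacobian: if $w = w(X) \in \R^n$ and $P = \softmax(w)$, then $D P = (\mathrm{diag}(P) - PP^\top) \, D w$, i.e. for a perturbation $\epsilon$ one has $(D_X P_i)(\epsilon)_j = P_{ij}\big((D_X w_{ij})(\epsilon) - \sum_{k=1}^n P_{ik}(D_X w_{ik})(\epsilon)\big)$ where $w_{ij} = x_i^\top A^\top x_j$ (using $A = K^\top Q/\sqrt k$, so the $1/\sqrt k$ scaling is already absorbed).

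Next I would compute the differential of the bilinear form $w_{ij}(X) = x_i^\top A^\top x_j$. Since it depends only on $x_i$ and $x_j$, we get $(D_X w_{ij})(\epsilon) = \epsilon_i^\top A^\top x_j + x_i^\top A^\top \epsilon_j$. Plugging this into the softmax differential gives
\begin{equation*}
(D_X P_{ij})(\epsilon) = P_{ij}\Big(\epsilon_i^\top A^\top x_j + x_i^\top A^\top \epsilon_j - \sum_{k=1}^n P_{ik}(\epsilon_i^\top A^\top x_k + x_i^\top A^\top \epsilon_k)\Big).
\end{equation*}
Then I apply the product rule to $f(X)_i = V\sum_j P_{ij}x_j$:
\begin{equation*}
(D_X f)(\epsilon)_i = V\sum_{j=1}^n (D_X P_{ij})(\epsilon)\, x_j + V\sum_{j=1}^n P_{ij}\epsilon_j.
\end{equation*}
The middle term $V\sum_j P_{ij}\epsilon_j$ already matches the claimed formula, so the work is all in simplifying $V\sum_j (D_X P_{ij})(\epsilon)\, x_j$.

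The main bookkeeping step is to expand $\sum_j (D_X P_{ij})(\epsilon)\, x_j$ using the formula above and reorganize. The terms involving $\epsilon_j$ (the ``column'' perturbations): $\sum_j P_{ij}(x_i^\top A^\top \epsilon_j)x_j - \sum_j P_{ij}\big(\sum_k P_{ik} x_i^\top A^\top \epsilon_k\big) x_j$. Relabeling the dummy index in the second double sum and using $\sum_j P_{ij} = 1$, this collapses to $\sum_j P_{ij}(x_j - \sum_k P_{ik}x_k)(x_i^\top A^\top \epsilon_j)$, which is the first term in the statement. The terms involving $\epsilon_i$ (the ``row'' perturbations) are handled identically: $\sum_j P_{ij}(\epsilon_i^\top A^\top x_j)x_j - \sum_j P_{ij}(\sum_k P_{ik}\epsilon_i^\top A^\top x_k)x_j = \sum_j P_{ij}(x_j - \sum_k P_{ik}x_k)(x_j^\top A \epsilon_i)$, using $\epsilon_i^\top A^\top x_j = x_j^\top A \epsilon_i$; this is the third term. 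Applying $V$ to everything and collecting the three pieces gives exactly the claimed expression.

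There is no real obstacle here — the result is a routine (if slightly tedious) computation — so the only thing to be careful about is the index gymnastics in the two ``centering'' simplifications, making sure the subtracted double sums are correctly relabeled and that $\sum_j P_{ij} = 1$ is invoked at the right moment to factor out $(x_j - \sum_k P_{ik}x_k)$. I would also note explicitly that $f$ is smooth (softmax is $C^\infty$ and the numerator/denominator are polynomial and strictly positive), so the differential exists and the chain rule applies without qualification.
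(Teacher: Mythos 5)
Your proof is correct, and since the paper states this Jacobian lemma without supplying a proof, there is nothing to compare against; the computation you give (softmax Jacobian $\mathrm{diag}(P)-PP^\top$, chain rule through $w_{ij}=x_i^\top A^\top x_j$, product rule on $V\sum_j P_{ij}x_j$, then dummy-index relabeling to factor out the centered vector $x_j - \sum_k P_{ik}x_k$) is the standard direct route. One tiny imprecision: the collapse of the double sums does not actually invoke $\sum_j P_{ij}=1$; it only uses that $\sum_j P_{ij}\bigl(\sum_k P_{ik}\,c_k\bigr)x_j = \bigl(\sum_k P_{ik}x_k\bigr)\bigl(\sum_j P_{ij}\,c_j\bigr)$ after relabeling, from which the centering factor emerges — but this has no bearing on the correctness of the argument.
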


\subsection{Proof of Theorem \ref{thm:unnorm_general_bound}}
\label{appsubsec:unnorm_general_bound}

Let $X\in B_R^n$ and $\epsilon \coloneqq (\epsilon_1,\dots, \epsilon_n) \in (\R^d)^n$ such that $\norm{\epsilon}_F = 1$.
According to Lemma \ref{applem:jacobian_formula}, for all $i \in \{1, \dots, n\}$ we have
\begin{equation*}
(D_{X}f)(\epsilon)_i = V \sum_{j=1}^n P_{ij} (x_j - \sum_{k=1}^n P_{ik}x_k)x_i^\top A^\top \epsilon_j + V\sum_{j=1}^n P_{ij} \epsilon_j + V \sum_{j=1}^n P_{ij}(x_j - \sum_{k=1}^n P_{ik}x_k) x_j^\top A \epsilon_i,
\end{equation*}
with $P_{ij} \coloneqq e^{x_i^\top A^\top x_j} / \sum_{k=1}^n e^{ x_i^\top A^\top x_k}$.
The triangle inequality gives
\begin{equation*}
    \lvert (D_{X}f)(\epsilon)_i \rvert \le \norm{V}_2 \left ( \norm{A}_2 R \left \lvert \sum_{j=1}^n P_{ij} (x_j - \sum_{k=1}^n P_{ik} x_k) \epsilon_j\right \rvert + \left \lvert \sum_{j=1}^n P_{ij} \epsilon_j \right \rvert + \norm{A}_2 \norm{\mathrm{Var}^{(i)}}_2 \lvert \epsilon_i  \rvert \right )
\end{equation*}
where $\mathrm{Var}^{(i)} \coloneqq \sum_{j=1}^n P_{ij}(x_j - \sum_{k=1}^n P_{ik}x_k) x_j^\top$ is the variance of the probability measure $\sum_{j=1}^n P_{ij} \delta_{x_j}$.
Lemma \ref{applem:bound_variance} gives us that $\norm{\mathrm{Var}^{(i)}}_2 \le R^2$.
We can also apply Cauchy-Schwarz inequality to get
$$\left \lvert \sum_{j=1}^n P_{ij} (x_j - \sum_{k=1}^n P_{ik} x_k) \epsilon_j\right \rvert \le \left ( \sum_{j=1}^n P_{ij} \left \lvert x_j - \sum_{k=1}^n P_{ik} x_k \right \rvert^2 \right )^{1/2} \left (\sum_{j=1}^n P_{ij} \lvert \epsilon_j \rvert^2 \right )^{1/2}.$$
The proof of Lemma \ref{applem:bound_variance} allows us to bound
$$\sum_{j=1}^n P_{ij} \left \lvert x_j - \sum_{k=1}^n P_{ik} x_k \right \rvert^2 \le R^2.$$
Collecting terms, we get
\begin{equation*}
    \lvert (D_{X}f)(\epsilon)_i \rvert \le \norm{V}_2 \left ( \norm{A}_2 R^2 \left (\sum_{j=1}^n P_{ij} \lvert \epsilon_j \rvert^2 \right )^{1/2} + \left \lvert \sum_{j=1}^n P_{ij} \epsilon_j \right \rvert + \norm{A}_2 R^2 \lvert \epsilon_i  \rvert \right ).
\end{equation*}
Then, using the inequality $(a + b + c)^2 \le 3(a^2 + b^2 + c^2)$ for any $a,b,c\in \R$, we obtain
\begin{align*}
    \sum_{i=1}^n \lvert (D_{X}f)(\epsilon)_i \rvert^2 &\le 3 \norm{V}_2^2  \left ( \norm{A}_2^2 R^4 (n + 1) + n \right ),
\end{align*}
where we have used that with the triangle inequality and Cauchy-Schwarz inequality
$$ \left \lvert \sum_{j=1}^n P_{ij} \epsilon_j \right \rvert^2 \le \left ( \sum_{j=1}^n P_{ij} \lvert \epsilon_j \rvert \right )^2 \le \sum_{j=1}^n P_{ij}^2\norm{\epsilon}_F^2 \le 1$$
as $\norm{\epsilon}_F^2 = 1$ and $\sum_{j=1}^n P_{ij}^2 \le \sum_{j=1}^n P_{ij} = 1$, and that
$$ \sum_{j=1}^n P_{ij} \lvert \epsilon_j \rvert^2 \le \sum_{j=1}^n \lvert \epsilon_j \rvert^2 = 1.$$

The proof above allows us to recover the following tighter bound but with maybe less natural assumptions on the tokens.

\begin{theorem}
    \label{thm:unnorm_general_bound_tighter}
    Let $Q, K, V \in \R^{k\times d}$ and $A \coloneqq K^\top Q / \sqrt{k}$.
    Denote $f$ unmasked self-attention with parameters $(A, V)$.
    Let $X\in (\R^d)^n$ and $\rho, r > 0$ such that 
    \begin{enumerate}[label=(\roman*), topsep=0pt]
        \item $\max_{1\le i, j \le n}\lvert x_i - x_j\rvert \le r$,
        \item $\max_{1\le i\le n} \lvert Ax_i\rvert \le \rho$.
    \end{enumerate}
    Then, the local Lipschitz constant of $f$ at $X$ is bounded by
    \begin{equation*}
        \norm{D_X f}_2 \le \sqrt{3} \norm{V}_2 \left ( (\rho^2 r^2 + 1) n + \rho^2 r^2 \right )^{1/2}\!\!.
    \end{equation*}
\end{theorem}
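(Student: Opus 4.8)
The plan is to prove Theorem~\ref{thm:unnorm_general_bound_tighter} by essentially re-running the argument of Appendix~\ref{appsubsec:unnorm_general_bound}, but replacing the two crude bounds coming from $X \in B_R^n$ with the sharper quantities $r$ and $\rho$. Concretely, I would start from the exact Jacobian formula of Lemma~\ref{applem:jacobian_formula}: for $\norm{\epsilon}_F = 1$ and each $i$,
\begin{equation*}
(D_{X}f)(\epsilon)_i = V \sum_{j=1}^n P_{ij} \Big(x_j - \textstyle\sum_{k} P_{ik}x_k\Big)x_i^\top A^\top \epsilon_j + V\sum_{j=1}^n P_{ij} \epsilon_j + V \sum_{j=1}^n P_{ij}\Big(x_j - \textstyle\sum_{k} P_{ik}x_k\Big) x_j^\top A \epsilon_i.
\end{equation*}
The key observation is that all three terms only see the $x_j$ through differences $x_j - \sum_k P_{ik} x_k$ or through $A x_i$, $A x_j$. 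The barycenter $\sum_k P_{ik} x_k$ is itself a convex combination of the $x_k$, so $\lvert x_j - \sum_k P_{ik} x_k \rvert = \lvert \sum_k P_{ik}(x_j - x_k)\rvert \le \max_{k} \lvert x_j - x_k \rvert \le r$ by assumption (i); more importantly, the variance bound I need, $\sum_j P_{ij} \lvert x_j - \sum_k P_{ik} x_k\rvert^2 \le r^2$, follows from Lemma~\ref{applem:bound_variance} applied to the measure $\sum_j P_{ij}\delta_{x_j}$, which is supported in a ball of radius $r/2$ (any finite set of diameter $\le r$ lies in a ball of radius $r/2$ — or more simply, translate so the barycenter is $0$ and bound $\mathbb{E}\lvert Z\rvert^2 \le \mathbb{E}\lvert Z - Z'\rvert^2/\ldots$; the cleanest route is: center at any $x_{i_0}$, then the measure is supported in $\bbar(x_{i_0}, r)$, so $\norm{\var}_2 \le r^2$ by Lemma~\ref{applem:bound_variance}, and $\sum_j P_{ij}\lvert x_j - \text{bary}\rvert^2 = \mathrm{tr}(\var) \le \norm{\var}_2 \cdot \mathrm{rank} $ — wait, need to be careful here, but the original proof already bounds $\sum_j P_{ij}\lvert x_j - \sum_k P_{ik}x_k\rvert^2 \le R^2$ "by the proof of Lemma~\ref{applem:bound_variance}", so the same computation gives $\le r^2$ once one centers at some $x_{i_0}$).

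Next I would bound each of the three terms in $\lvert (D_X f)(\epsilon)_i\rvert$ exactly as in Appendix~\ref{appsubsec:unnorm_general_bound}: the first term is bounded, via $\lvert x_i^\top A^\top \epsilon_j\rvert = \lvert (Ax_i)^\top \epsilon_j\rvert \le \rho \lvert \epsilon_j\rvert$ (assumption (ii)) followed by Cauchy--Schwarz over $j$ with weights $P_{ij}$, by $\norm{V}_2 \rho r (\sum_j P_{ij}\lvert\epsilon_j\rvert^2)^{1/2}$; the second term is $\le \norm{V}_2 \lvert \sum_j P_{ij}\epsilon_j\rvert$; and the third is $\le \norm{V}_2 \norm{\mathrm{Var}^{(i)}}_2 \lvert (Ax_i)^\top\epsilon_i\rvert/\lvert\epsilon_i\rvert \cdot \ldots$ — more precisely, writing $x_j^\top A \epsilon_i = (Ax_i)^\top$ is wrong; rather $x_j^\top A \epsilon_i$ contracts against $\epsilon_i$, and the whole sum $\sum_j P_{ij}(x_j - \sum_k P_{ik}x_k)x_j^\top A$ is the matrix $\mathrm{Var}^{(i)} A$ up to the transpose convention, so the third term is $\le \norm{V}_2 \norm{\mathrm{Var}^{(i)}}_2 \norm{A x_i}\ldots$ — here I should follow the original exactly: it bounds the third term by $\norm{V}_2 \norm{A}_2 \norm{\mathrm{Var}^{(i)}}_2 \lvert\epsilon_i\rvert$; with the refined hypotheses one instead keeps $\lvert x_j^\top A\epsilon_i\rvert$ and uses that $\sum_j P_{ij}(x_j - \mathrm{bary})x_j^\top A\epsilon_i = \mathrm{Var}^{(i)}A\epsilon_i$, whose norm relates to $\rho r \lvert\epsilon_i\rvert$ — I'd need to recheck that $\norm{\mathrm{Var}^{(i)} A \epsilon_i} \le r \cdot \rho \lvert \epsilon_i\rvert$ by factoring through $\max_j \lvert A x_j\rvert \le \rho$. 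Collecting, $\lvert(D_X f)(\epsilon)_i\rvert \le \norm{V}_2(\rho r (\sum_j P_{ij}\lvert\epsilon_j\rvert^2)^{1/2} + \lvert\sum_j P_{ij}\epsilon_j\rvert + \rho r \lvert\epsilon_i\rvert)$.

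Finally I would square, use $(a+b+c)^2 \le 3(a^2+b^2+c^2)$, and sum over $i$: the first contribution gives $3\norm{V}_2^2\rho^2 r^2 \sum_i \sum_j P_{ij}\lvert\epsilon_j\rvert^2 \le 3\norm{V}_2^2\rho^2 r^2$ (since $\sum_i P_{ij} \le \ldots$ — actually $\sum_i P_{ij}$ need not be $\le 1$; rather $\sum_j P_{ij}\lvert\epsilon_j\rvert^2 \le \sum_j \lvert\epsilon_j\rvert^2 = 1$ for each $i$, so summing over $i$ gives $\le n$, yielding $3\norm{V}_2^2 \rho^2 r^2 n$); the second contribution gives $\le 3\norm{V}_2^2 n$ using $\lvert\sum_j P_{ij}\epsilon_j\rvert^2 \le \sum_j P_{ij}^2 \le 1$; the third gives $3\norm{V}_2^2\rho^2 r^2 \sum_i\lvert\epsilon_i\rvert^2 = 3\norm{V}_2^2\rho^2 r^2$. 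Adding, $\norm{D_X f}_2^2 \le 3\norm{V}_2^2((\rho^2 r^2 + 1)n + \rho^2 r^2)$, which is the claimed bound. The main obstacle — really the only subtle point — is getting the per-$i$ variance-type bounds $\sum_j P_{ij}\lvert x_j - \mathrm{bary}\rvert^2 \le r^2$ and the third-term bound $\le \rho r\lvert\epsilon_i\rvert$ correct with $r$ in place of $R$; these follow by translating the support to be centered at one of the tokens (so it sits in $\bbar(x_{i_0}, r)$) and invoking Lemma~\ref{applem:bound_variance}, together with assumption (ii) to control the $A x_j$ factors, but one must be careful that the bound on $\max_j\lvert Ax_j\rvert$ and the diameter bound interact correctly in the term $\mathrm{Var}^{(i)}A$.
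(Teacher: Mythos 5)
You have correctly reconstructed the intended argument: the paper offers no separate proof of this statement, merely remarking that ``the proof above allows us to recover the following tighter bound,'' and your plan of re-running the proof of Theorem~\ref{thm:unnorm_general_bound} with $\rho$ in place of $\norm{A}_2 R$ and $r$ in place of $R$ is the right one. Terms one and two of the Jacobian and the final $(a+b+c)^2 \le 3(a^2+b^2+c^2)$ collection are handled correctly. On the variance bound you hesitate needlessly: since the barycenter $b_i \coloneqq \sum_k P_{ik}x_k$ minimizes $c\mapsto\sum_j P_{ij}\lvert x_j-c\rvert^2$, hypothesis~(i) gives $\sum_j P_{ij}\lvert x_j-b_i\rvert^2\le\sum_j P_{ij}\lvert x_j-x_1\rvert^2\le r^2$ directly; the detour through $\mathrm{tr}\le\mathrm{rank}\cdot\norm{\cdot}_2$ that you started down would be both unnecessary and lossy.

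The one genuine gap is exactly where you flagged it, and your proposed fix of ``factoring through $\max_j\lvert Ax_j\rvert\le\rho$'' targets the wrong quantity. The third term equals $V\,\mathrm{Var}^{(i)}A\,\epsilon_i = V\sum_j P_{ij}\,(x_j-b_i)\,\bigl\langle A^\top(x_j-b_i),\,\epsilon_i\bigr\rangle$ (you may center the $A$-factor since $\sum_j P_{ij}(x_j-b_i)=0$), so it is controlled by $A^\top$ acting on token differences, whereas hypothesis~(ii) only bounds $\lvert Ax_i\rvert$, and these can differ arbitrarily. Concretely, take $d=2$, $V=I_2$, $A=\bigl(\begin{smallmatrix}0&a\\0&0\end{smallmatrix}\bigr)$, $x_1=(R,0)$, $x_2=(R+1,0)$: then $Ax_1=Ax_2=0$, so $\rho$ may be taken arbitrarily small, $r=1$, all $P_{ij}=1/2$, yet $\mathrm{Var}^{(i)}A=\bigl(\begin{smallmatrix}0&a/4\\0&0\end{smallmatrix}\bigr)$, and with $\epsilon=(e_2,0)$ one finds $\norm{D_Xf}_2\ge a/4$, which for large $a$ exceeds the claimed bound $\sqrt{6}$. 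So the hypothesis as printed does not suffice; the intended one---consistent with the remark after the theorem that normalization keeps tokens on a fixed ellipsis, on which both $Ax$ and $A^\top x$ are uniformly bounded---is presumably $\max_i\max(\lvert Ax_i\rvert,\lvert A^\top x_i\rvert)\le\rho$. With that strengthening, the third term is bounded exactly like the first, by $\norm{V}_2\,\rho\,r\,\lvert\epsilon_i\rvert$, and the rest of your computation closes.
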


Moreover, in practical Transformers architectures, inputs $X$ of self-attention can be written as $\mathrm{norm}(\tilde X)$ for some $\tilde X \in (\R^d)^n$, where $\mathrm{norm}$ stands for LayerNorm of RMSNorm (see Subsection \ref{subsec:normalization}).
In both cases, all inputs are on an ellipsis whose shape depends on the parameters of LayerNorm or RMSNorm, so that there is a choice of parameters $(r, \rho)$ such that for all $\tilde X\in (\R^d)^n$, the input $X = \mathrm{norm}(\tilde X)$ satisfies assumptions $(i)$ and $(ii)$ in Theorem \ref{thm:unnorm_general_bound_tighter}.

\subsection{Weighted Self-Attention}
\label{appsubsec:weighted_sa}

In the whole subsection, $\Sigma_n \coloneqq \{ a\in [0,1]^n : \sum_{i=1}^n a_i = 1\}$ is the simplex.

In view of proving Propositions \ref{prop:lower_bound_sqrt} and \ref{prop:unnorm_lower_bound_mf}, let us introduce a framework for self-attention that extends the Euclidean framework, where the local Lipschitz constant is given by the operator norm of the differential, to probability measures with a finite number of diracs.
We call this framework weighted self-attention.

\begin{definition}[Weighted self-attention]
    \label{appdef:weighted_attention}
    For any vector $a\in \Sigma_n$, denote 
    $\mathcal{P}_a(\R^d) \coloneqq \{ \sum_{i=1}^n a_i \delta_{x_i} : x_1,\dots,x_n\in \R^d\}.$
    We define the Euclidean version of the restriction of self-attention with parameters $(A, V)$ to $\mathcal{P}_a(\R^d)$ in the following way:
    $$f_a\colon (x_1, \dots, x_n) \in (\R^d)^n \mapsto \left ( V \sum_{j=1}^n P_{ij}^a x_j \right )_{1\le i\le n}\in (\R^k)^n$$
    with
    $P_{i}^a \coloneqq \softmax_a\left ( (x_i^\top A^\top x_j)_{1\le j\le n} \right ),$
    where
    $\softmax_a (w_1,\dots, w_n)\coloneqq \left (\frac{a_i e^{w_i}}{\sum_{j}a_j e^{w_j}}\right )_{1\le i\le n}.$
    The function $f_a$ is called weighted self-attention associated to the coefficients $a$.
\end{definition}

Definition \ref{appdef:weighted_attention} is designed so that for any sequence $X\in (\R^d)^n$, it holds
$$F(m_a(X)) = m_a(f_a(X)),$$
with
$m_a(X)\coloneqq \sum_{i=1}^n a_i \delta_{X_{i}}.$
Weighted self-attention provides a representation that is very convenient to study both theoretically and numerically the local Lipschitz constant of self-attention at measures that have a finite number of Diracs but with weights such that they require a massive sequence length to be approximated well by an empirical measure.
This is for example the case of measures of the form $e^{-2 R^2 } \delta_R + (1 - e^{-2 R^2 }) \delta_{-R}$, which are at stake in the proof of Proposition \ref{prop:unnorm_lower_bound_mf}.
To study the Lipschitz continuity of $f_a$, we equip the space $(\R^d)^n$ with the norm
$$\norm{X}_a \coloneqq \left ( \sum_{i=1}^n a_i \lvert x_i \rvert^2 \right )^{1/2},$$
so that we have the following connection with the local Lipschitz constant of mean-field self-attention in the Wasserstein 2 sense.

\begin{lemma}
    \label{applem:loc_lip_jac_weighted}
    Let $X\in (\R^d)^n$ be an input sequence, and $a\in \Sigma_n$ a vector of coefficients.
    Then, we have
    $$\lip_{m_a(X)}^{W_2}F_{\lvert \mathcal{P}_a(\R^d)} = \norm{D_X^af_a}_{2,a},$$
    where $D^a$ is the Jacobian in the space $((\R^d)^n,\norm{\cdot}_a)$ and $\norm{\cdot}_{2,a}$ is the corresponding operator norm.\footnote{As we are in finite dimension, all norms are equivalent so the linear operator $D^a$ is just the usual notion of differential.}
\end{lemma}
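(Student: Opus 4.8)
The statement is an equality between a local Lipschitz constant of mean-field self-attention $F$ restricted to the finite-support slice $\mathcal{P}_a(\R^d)$, measured in the $W_2$ metric, and the operator norm $\norm{D_X^a f_a}_{2,a}$ of the differential of weighted self-attention in the weighted norm $\norm{\cdot}_a$. The natural route is to mimic the proof of Lemma~\ref{lem:link_euclidian_meanfield} given in Appendix~\ref{appsubsec:link_euclidian_meanfield}, but in the weighted setting: first reduce the local Lipschitz constant of $F$ on $\mathcal{P}_a$ to a ratio of $\norm{\cdot}_a$-distances between sequences, then recognize that ratio as defining the operator norm of the differential. The two ingredients are (a) a local isometry between $(\mathcal{P}_a(\R^d), W_2)$ near $m_a(X)$ and $((\R^d)^n, \norm{\cdot}_a)$ near $X$, up to the usual caveat about coordinate permutations when points of $X$ are not distinct, and (b) the identity $F(m_a(X)) = m_a(f_a(X))$ stated just after Definition~\ref{appdef:weighted_attention}, which makes $F$ on the slice literally a copy of $f_a$.

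\textbf{Step 1: local isometry.} Choose $\varepsilon_1 > 0$ small enough that $\norm{X - Y}_a \le \varepsilon_1$ forces, for each $i$ with $a_i > 0$, that $x_i$ is (one of) the nearest of the $x_j$ to $y_i$; concretely $\varepsilon_1 < \tfrac{1}{2}\sqrt{\min_i a_i}\cdot \min_{x_i \ne x_j}\lvert x_i - x_j\rvert$ works, since $a_i \lvert x_i - y_i\rvert^2 \le \norm{X-Y}_a^2$. On this neighborhood the $W_2$-optimal coupling between $m_a(X)$ and $m_a(Y)$ is the ``diagonal'' one matching $a_i\delta_{x_i}$ with $a_i\delta_{y_i}$, so $W_2(m_a(X), m_a(Y)) = \norm{X - Y}_a$. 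This requires some care about the $W_2$ coupling structure when several $x_i$ coincide, but exactly as in the proof of Lemma~\ref{lem:link_euclidian_meanfield} one handles it by permuting coordinates of a candidate $Y$ so that $x_i$ becomes a nearest neighbor of $y_i$; this permutation leaves both $\norm{\cdot}_a$ (after permuting the corresponding $a_i$) and $m_a$ invariant, which is why one gets the restricted Lipschitz constant of $F$ on $\mathcal{P}_a$ rather than an inequality. Parallel reasoning on the output side, using continuity of $f_a$, gives a smaller $\varepsilon_2$ with $W_2(m_a(f_a(X)), m_a(f_a(Y))) = \norm{f_a(X) - f_a(Y)}_a$ whenever $\norm{f_a(X)-f_a(Y)}_a \le \varepsilon_2$, and then an $\varepsilon \le \varepsilon_1$ small enough that $\norm{X-Y}_a \le \varepsilon$ implies both identities simultaneously.

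\textbf{Step 2: identify the ratio with the operator norm.} For $\eta \le \varepsilon$, combining Step~1 with $F(m_a(Y)) = m_a(f_a(Y))$ gives
$$\lip^{W_2}\!\big(F_{\lvert B_{W_2}(m_a(X),\eta)\cap\mathcal{P}_a}\big) = \sup_{0 < \norm{X-Y}_a \le \eta}\frac{\norm{f_a(X) - f_a(Y)}_a}{\norm{X - Y}_a},$$
where the supremum over $Y$ with $m_a(Y) \in B_{W_2}(m_a(X),\eta)\cap \mathcal{P}_a$ matches the supremum over $Y$ in the $\norm{\cdot}_a$-ball by the permutation argument (one direction is immediate; the reverse uses that any $\mu \in \mathcal{P}_a$ close to $m_a(X)$ is $m_a(Y)$ for some $Y$ that can be brought into the $\norm{\cdot}_a$-ball by permuting coordinates). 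Letting $\eta \to 0^+$, the left side converges to $\lip^{W_2}_{m_a(X)}F_{\lvert\mathcal{P}_a}$ by Definition~\ref{def:loc_cst}, and the right side converges to $\norm{D_X^a f_a}_{2,a}$ since $f_a$ is differentiable (it is a smooth function of $X$, being a composition of softmax$_a$ and linear maps) and the local Lipschitz constant of a differentiable map in any pair of norms is the operator norm of its differential between those norms. This yields the claimed equality.

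\textbf{Main obstacle.} The genuinely delicate point is Step~1, specifically establishing that near $m_a(X)$ the map $Y \mapsto m_a(Y)$ is a bijection onto a $W_2$-neighborhood that is moreover an isometry for $\norm{\cdot}_a$, including the bookkeeping of coordinate permutations when $X$ has repeated points (the weights $a_i$ attached to repeated locations must be aggregated, and one must check the weighted norm is insensitive to reshuffling among equal points). Everything downstream is formal once this local identification is in place; differentiability of $f_a$ and the passage to the operator norm are routine. I would import the argument structure of Appendix~\ref{appsubsec:link_euclidian_meanfield} essentially verbatim, substituting $\norm{\cdot}_a$ for $\norm{\cdot}_{F,p}$ and $m_a$ for $m$, and remark that it is an equality (not just $\le$) precisely because $\mathcal{P}_a$ is ``as large as'' the set of $\norm{\cdot}_a$-neighbors up to permutation — there is no analogue here of the strict inclusion $\mathcal{M}_n(\R^d) \subsetneq \mathcal{P}(\R^d)$ that downgrades Lemma~\ref{lem:link_euclidian_meanfield} to an inequality when comparing to the full $\lip^{W_p}F$.
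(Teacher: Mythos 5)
Your proposal is correct and takes essentially the same approach as the paper: the paper's own proof of this lemma is simply the remark that it ``can be proven with the same steps as for Lemma~\ref{lem:link_euclidian_meanfield}'', and you have carried out exactly that transposition, substituting $\norm{\cdot}_a$ for $\norm{\cdot}_{F,p}$ and $m_a$ for $m$, with the correct adjustment $\varepsilon_1 < \tfrac{1}{2}\sqrt{\min_i a_i}\cdot\min_{x_i\neq x_j}\lvert x_i-x_j\rvert$ to obtain the local isometry in the weighted norm.
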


This is a nice property, as it provides an optimized way to compute numerically the local Lipschitz constant of $f_a$, just as for Euclidean self-attention.
Lemma \ref{applem:loc_lip_jac_weighted} can be proven with the same steps as for Lemma \ref{lem:link_euclidian_meanfield}.
Moreover, the differential of $f_a$ has the following expression.

\begin{lemma}
    \label{applem:weighted_jacobian}
    Let $X = (x_1, \dots, x_n) \in (\R^d)^n$.
    For all perturbations $\epsilon \coloneqq (\epsilon_1,\dots, \epsilon_n) \in (\R^d)^n$ and all $i \in \{1, \dots, n\}$, we have
    \begin{equation*}
    (D_{X}^af_a)(\epsilon)_i = V \sum_{j=1}^n P^a_{ij} (x_j - \sum_{k=1}^n P^a_{ik}x_k)x_i^\top A^\top \epsilon_j + V\sum_{j=1}^n P^a_{ij} \epsilon_j + V \sum_{j=1}^n P^a_{ij}(x_j - \sum_{k=1}^n P^a_{ik}x_k) x_j^\top A \epsilon_i,
    \end{equation*}
    with $P^a_{ij} \coloneqq a_j e^{ x_i^\top A^\top x_j} / \sum_{k=1}^n a_k e^{ x_i^\top A^\top x_k}$.
\end{lemma}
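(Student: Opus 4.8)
The plan is to reduce to the Jacobian computation already performed in Lemma \ref{applem:jacobian_formula}. The key observation is that weighted self-attention has the same algebraic structure as ordinary self-attention: writing $\log a \coloneqq (\log a_1, \dots, \log a_n)$, one has $\softmax_a(w) = \softmax(w + \log a)$ coordinatewise, so that
$$P^a_i = \softmax\big((x_i^\top A^\top x_j + \log a_j)_{1\le j\le n}\big) \quad \text{and} \quad f_a(X)_i = V{\textstyle\sum_{j=1}^n} P^a_{ij} x_j.$$
Since the shift $\log a_j$ does not depend on $X$, it does not contribute to $D_X f_a$, and the differential of $f_a$ is obtained from that of $f$ in Lemma \ref{applem:jacobian_formula} simply by replacing every $P_{ij}$ by $P^a_{ij}$. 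This already gives the claimed formula; the rest of the argument just spells out that computation.

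Concretely, I would differentiate $f_a(X)_i = V\sum_{j=1}^n P^a_{ij} x_j$ by the product rule, obtaining the explicit term $V\sum_{j=1}^n P^a_{ij}\epsilon_j$ plus the term $V\sum_{j=1}^n (D_X P^a_{ij})(\epsilon)\,x_j$ coming from the dependence of the attention weights on $X$. For the latter I would use the standard softmax Jacobian: setting $w_{i\ell} \coloneqq x_i^\top A^\top x_\ell$, we have $\partial P^a_{ij}/\partial w_{i\ell} = P^a_{ij}(\delta_{j\ell} - P^a_{i\ell})$ and $D_X w_{i\ell}(\epsilon) = \epsilon_i^\top A^\top x_\ell + x_i^\top A^\top \epsilon_\ell$, hence by the chain rule
$$ (D_X P^a_{ij})(\epsilon) = P^a_{ij}\Big( c_j - {\textstyle\sum_{\ell=1}^n} P^a_{i\ell} c_\ell \Big), \qquad c_\ell \coloneqq \epsilon_i^\top A^\top x_\ell + x_i^\top A^\top \epsilon_\ell. $$

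To conclude, I would substitute this into $V\sum_j (D_X P^a_{ij})(\epsilon)x_j$ and invoke the elementary identity $\sum_{j} P^a_{ij}\big(c_j - \sum_\ell P^a_{i\ell}c_\ell\big) x_j = \sum_j P^a_{ij} c_j \big(x_j - \sum_k P^a_{ik} x_k\big)$, valid for any scalars $(c_j)_j$ because $\sum_j P^a_{ij}=1$. Splitting $c_\ell$ into its two halves and rewriting the scalar $\epsilon_i^\top A^\top x_j$ as $x_j^\top A \epsilon_i$, the half involving $\epsilon_j$ produces the first term of the claimed expression, the half involving $\epsilon_i$ produces the third, and together with the explicit term $V\sum_j P^a_{ij}\epsilon_j$ this is exactly the stated formula. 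There is no genuine obstacle here; the main points requiring care are the bookkeeping between scalars and vectors (treating $x_i^\top A^\top\epsilon_j$ as a scalar coefficient of the vector $x_j - \sum_k P^a_{ik}x_k$, and correctly transposing $\epsilon_i^\top A^\top x_j$) and the remark that introducing the weights $a_j$ merely shifts the logits, leaving the structure of the Jacobian unchanged.
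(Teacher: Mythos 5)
Your proof is correct and complete. The paper in fact states this lemma without proof (as it does for the analogous unweighted formula in Lemma~\ref{applem:jacobian_formula}), so there is no paper proof to compare against; your derivation -- noting that the weights $a_j$ act as a fixed additive shift $\log a_j$ of the logits and therefore vanish under $D_X$, then applying the product rule, the softmax Jacobian $\partial P^a_{ij}/\partial w_{i\ell} = P^a_{ij}(\delta_{j\ell}-P^a_{i\ell})$, and the chain rule on $w_{i\ell}=x_i^\top A^\top x_\ell$ -- is precisely the standard argument the paper leaves implicit. One minor slip in the justification: the rearrangement
\[
\sum_{j} P^a_{ij}\Bigl(c_j - \sum_{\ell} P^a_{i\ell}c_\ell\Bigr) x_j = \sum_j P^a_{ij} c_j \Bigl(x_j - \sum_k P^a_{ik} x_k\Bigr)
\]
holds by bilinearity alone (both sides expand to $\sum_j P^a_{ij} c_j x_j - \bigl(\sum_\ell P^a_{i\ell}c_\ell\bigr)\bigl(\sum_k P^a_{ik}x_k\bigr)$) and does not actually use $\sum_j P^a_{ij}=1$. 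This does not affect the correctness of the argument.
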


\subsection{Proof of Proposition \ref{prop:lower_bound_sqrt}}
\label{appsubsec:lower_bound_sqrt}

Proposition \ref{prop:lower_bound_sqrt} follows from the following Lemma.

\begin{lemma}
    Let $\gamma$ be a real eigenvalue of $A$ and $u\in \R^d$ an associated unit eigenvector.
    \begin{enumerate}[nosep,wide, leftmargin=*]
        \item If $\gamma \ge 0$, denote $X \coloneqq (u, \frac{u}{2}, \dots, \frac{u}{2}) \in (\R^d)^n$.
        Then, for any scaling factor $R >0$, it holds
        $$\norm{D_{R X}f}_{2} \ge  \frac{1}{1 + (n-1)e^{-R^2 \gamma/4}} \sqrt{n - 1}.$$
        \item If $\gamma < 0$, denote $X \coloneqq (u, -u, \dots, -u) \in (\R^d)^n$.
        Then, for any scaling factor $R >0$, it holds
        $$\norm{D_{R X}f}_{2} \ge  \frac{1 }{1 + (n-1)e^{- 2 R^2 \left \lvert \gamma \right \rvert}} \sqrt{n - 1}.$$ 
    \end{enumerate}
\end{lemma}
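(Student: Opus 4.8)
The plan is to lower-bound the operator norm $\norm{D_{RX}f}_2$ by testing it against a single, carefully chosen unit perturbation. Since $\norm{D_{RX}f}_2 = \sup_{\norm{\epsilon}_F=1}\norm{(D_{RX}f)(\epsilon)}_F$, it is enough to exhibit one $\epsilon$ with $\norm{\epsilon}_F=1$ and bound $\norm{(D_{RX}f)(\epsilon)}_F$ from below. I would take $\epsilon \coloneqq (\epsilon_1, 0, \dots, 0)$, concentrated on the first token, with $\lvert \epsilon_1\rvert = 1$ and (when $d\ge 2$) $\epsilon_1$ orthogonal to $u$; in the degenerate case $d=1$ the choice $\epsilon_1 = u$ works instead, with room to spare.

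The first step is to evaluate the attention weights $P_{ij}$ at the scaled input $RX$. Using $Au = \gamma u$ together with the identity $x_i^\top A^\top x_j = (Ax_i)^\top x_j$, and the fact that the tokens $x_2, \dots, x_n$ all coincide, one sees that for each $i\ge 2$ the $i$-th row $P_{i\cdot}$ takes just two values: one on the index $j=1$, one on the remaining $n-1$ indices. An elementary computation then gives, in case (1), $P_{i1} = \big(1+(n-1)e^{-R^2\gamma/4}\big)^{-1}$ for all $i\ge 2$, and in case (2), $P_{i1} = \big(1+(n-1)e^{-2R^2\lvert\gamma\rvert}\big)^{-1}$ --- exactly the prefactors appearing in the statement. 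Next I would substitute $\epsilon$ into the Jacobian formula of Lemma \ref{applem:jacobian_formula}: for $i\ge 2$ the summand carrying $\epsilon_i$ vanishes since $\epsilon_i=0$, and the summand carrying $x_i^\top A^\top\epsilon_1 = (Ax_i)^\top\epsilon_1$ vanishes because $Ax_i$ is a multiple of $u$ and $u^\top\epsilon_1 = 0$. Only $(D_{RX}f)(\epsilon)_i = V P_{i1}\epsilon_1 = P_{i1}\epsilon_1$ survives (recall $V=I_d$ in this subsection), so $\norm{(D_{RX}f)(\epsilon)}_F^2 \ge \sum_{i=2}^n \lvert P_{i1}\epsilon_1\rvert^2 = (n-1)P_{i1}^2$, whence $\norm{D_{RX}f}_2 \ge \sqrt{n-1}\,P_{i1}$; plugging in the two values of $P_{i1}$ yields the two claimed bounds.

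I do not expect a genuine obstacle here: the one mildly delicate point is arranging that the rank-one correction term in the Jacobian drops out, which is precisely what $\epsilon_1\perp u$ buys us (and in the $d=1$ corner case the surviving correction term has the same sign as the main term --- positive since $\gamma\ge 0$ in case (1), and since $\gamma<0$ makes the coefficient $-2R^2\gamma(1-P_{i1})>0$ in case (2) --- so the inequality is only reinforced). Everything else is the bookkeeping of the two attention values, which is straightforward because $u$ is an eigenvector of $A$.
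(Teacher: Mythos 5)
Your proof is correct, and it reaches the same bound by the same underlying idea (concentrate the perturbation on the first token and watch the attention weight $P_{i1}$ emerge as the prefactor), but the implementation differs from the paper's in a way worth noting. The paper first reduces to a \emph{two-point} configuration via the weighted self-attention framework of Appendix \ref{appsubsec:weighted_sa}: it replaces $RX$ by $R\tilde X = (Ru, -Ru)$ (resp. $(Ru, Ru/2)$) with weights $a = (1/n, 1-1/n)$, invokes Lemma \ref{applem:loc_lip_jac_weighted} to identify $\norm{D^a_{R\tilde X} f_a}_{2,a}$ with the local Wasserstein Lipschitz constant at the empirical measure $m(RX)$, and harvests $\sqrt{n-1}$ from the weighted norm $\norm{\cdot}_a$. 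You instead stay entirely in the $n$-point Euclidean framework and obtain $\sqrt{n-1}$ because the identical output entries $(D_{RX}f)(\epsilon)_i = P_{i1}\epsilon_1$ for $i = 2,\dots,n$ contribute $n-1$ copies of $|P_{i1}|^2$ to $\norm{(D_{RX}f)(\epsilon)}_F^2$. This is more elementary and self-contained, as it avoids the weighted-attention machinery and Lemma \ref{applem:loc_lip_jac_weighted} altogether. A second, smaller difference: the paper chooses $\epsilon_1 = \pm u$ and argues the extra rank-one term $2P_{22}^a R^2|\gamma|$ (resp. its analogue in case 1) is nonnegative; you instead pick $\epsilon_1 \perp u$ so that $(Ax_i)^\top\epsilon_1 = 0$ kills that term outright, reserving the sign argument only for the degenerate $d=1$ case. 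Both choices are valid; yours gives exactly $P_{i1}$ while the paper's gives something slightly larger, and either suffices for the stated inequality.
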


\begin{proof}
    Let us start with the case $\gamma < 0$.
    Let $\tilde X\coloneqq (u, -u)$.
    Using weighted self-attention $f_a$ introduced in Subsection \ref{appsubsec:weighted_sa}, and $a\coloneqq (1/n, 1 - 1/n)$, we have for any $\varepsilon \in \R^d$ and $1\le i\le n$ that
    $$D_{R\tilde X}f_a(\varepsilon)_i = \sum_{j=1}^n P_{ij}^a \varepsilon_j + \sum_{j=1}^n P_{ij}^a x_j (x_j - \sum_{k=1}^n P_{ik}^a x_k)^\top A \varepsilon_i + \sum_{j=1}^n P_{ij}^a (x_j - \sum_{k=1}^n P_{ik}^a x_k)x_i^\top A^\top \varepsilon_j.$$
    Setting $\varepsilon_1 \coloneqq -u$ and $\varepsilon_2 \coloneqq 0$, we get
    $$D_{R\tilde X}f_a(\varepsilon)_2 = P_{21}^a(1 + 2 P_{22}^aR^2\lvert \gamma\rvert) \ge P_{21}^a.$$
    Moreover
    $$P_{21}^a = \frac{1}{1 + (n-1)e^{-2R^2\lvert \gamma \rvert}},$$
    so that
    $$\norm{D_{RX}f}_2 \ge \norm{D_{R\tilde X}f_a}_{2, a}\ge P_{21}^a \sqrt{n-1},$$
    which proves the result.
    When $\gamma \ge 0$, the same method applies with $\varepsilon_1 = u$ and $\varepsilon_2 = 0$.
    Denoting $\tilde X \coloneqq (u, u/2)$ and $a\coloneqq (1/n, 1 - 1/n)$ one obtains
    $$\norm{D_{RX}f}_2 \ge \norm{D_{R\tilde X} f_a }_2\ge P_{21}^a \sqrt{n-1}$$
    and
    $$P_{21}^a = \frac{1}{1 + (n-1)e^{-R^2\gamma / 4}},$$
    which proves the result.
\end{proof}

\subsection{About the scaling factor in Proposition \ref{prop:lower_bound_sqrt}}
\label{appsubsec:scaling_factor}

Let us investigate numerically the scaling factor $2 R^2 \gamma$ that appears in Proposition \ref{prop:lower_bound_sqrt}.
With the model BERT pretrained, and with a batch of five text extracts from the dataset Alice in Wonderland, we plot $\gamma^{(h)} R^2$ for each layer $0\le \ell \le 11$ and for $h\in \{0, 5, 10\}$, where $\gamma^{(h)}$ is the parameter $\gamma$ defined in Proposition \ref{prop:lower_bound_sqrt} associated to the $A^{(h)}$ the weight matrix of head $h$, and $R$ is the mean magnitude of tokens as they enter the attention block of layer $\ell$, defined as $R \colon \sqrt{1/n\sum_{i=1}^n \lvert x_i\rvert^2}$.
We obtain Figure \ref{fig:scaling_factor}.

\begin{figure}
\centering
\includegraphics[]{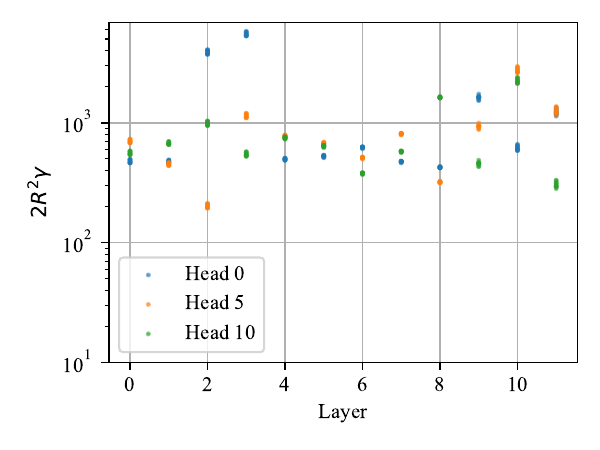}
\caption{Plot of the scaling factor $2R^2 \gamma$ across layers of BERT pretrained for three different heads and 5 text extracts of Alice in Wonderland (50 tokens for each extract).}
\label{fig:scaling_factor}
\end{figure}

\subsection{Proof of Proposition \ref{prop:unnorm_lower_bound_mf}}
\label{appsubsec:unnorm_lower_bound_mf}

Let us prove the following result, which implies Proposition \ref{prop:unnorm_lower_bound_mf}.

\begin{proposition}
    Let $R>0$. 
    Assume that $k = d$ and $V = I_d$, and denote $\gamma_1\ge \dots \ge \gamma_\nu$ the real eigenvalues of $A$.
    Then, the following claims hold.
    \begin{enumerate}
        \item If $\gamma_1 \ge -8\gamma_\nu$, denoting $C \coloneqq \frac{ \gamma_1}{8}$, there exists a function $\theta \colon [0, +\infty) \to [0, +\infty)$ such that $\theta(R) \to_{R\to +\infty} 1$ and:
        $$\lip^{W_2}(F_{\lvert \pcal(B_R)}) \ge  \theta (R) \frac{C}{2} R^2  e^{C R^2}.$$
        Moreover, the right-hand side is equivalent to the Lipschitz constant of mean-field self-attention at the probability measure $e^{-2CR^2} \delta_{Ru_1} + (1 - e^{-2CR^2}) \delta_{(R/2)u_1}$. 
        \item If $\gamma_1 < -8\gamma_\nu$, denoting $C' \coloneqq \lvert \gamma_\nu \rvert$, there exists a function $\theta \colon [0, +\infty) \to [0, +\infty)$ such that $\theta(R) \to_{R\to +\infty} 1$ and:
        $$\lip^{W_2}(F_{\lvert \pcal(B_R)}) \ge  \theta(R) \frac{C'}{2} R^2 e^{C' R^2 }.$$
        Moreover, the right-hand side is equivalent to the Lipschitz constant of mean-field self-attention at the probability measure $e^{-2C'R^2} \delta_{Ru_\nu} + (1 - e^{-2C'R^2}) \delta_{-Ru_\nu}$. 
    \end{enumerate}
\end{proposition}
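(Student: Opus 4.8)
The plan is to construct, for each $R>0$, an explicit two-Dirac probability measure supported in $B_R$ and lower-bound the local Lipschitz constant of mean-field self-attention at this measure, then invoke Lemma~\ref{lem:loc_glob_lip} to pass from the local to the global Lipschitz constant on $\pcal(B_R)$. The natural candidate, suggested by the statement and by the heuristics behind Proposition~\ref{prop:lower_bound_sqrt}, is a massively unbalanced mixture of two Diracs located along an eigendirection of $A$. Concretely, in case (1) I would take $\mu_R \coloneqq a\,\delta_{Ru_1} + (1-a)\,\delta_{(R/2)u_1}$ with $a = e^{-2CR^2}$ and $C = \gamma_1/8$, where $u_1$ is a unit eigenvector for $\gamma_1$; in case (2), $\mu_R \coloneqq a\,\delta_{Ru_\nu} + (1-a)\,\delta_{-Ru_\nu}$ with $a = e^{-2C'R^2}$ and $C' = |\gamma_\nu|$. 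The point of the weights is precisely that these measures are ``empirically hard'': to be well-approximated by an $n$-point empirical measure one needs $n \sim e^{2CR^2}$, which is exactly the edge regime $n \sim \exp(2\gamma R^2)$ of Proposition~\ref{prop:unnorm_lower_bound_mf}.

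The computational core is to evaluate the local Lipschitz constant of $F$ at $\mu_R$. Here I would use the weighted self-attention formalism of Appendix~\ref{appsubsec:weighted_sa}: by Lemma~\ref{applem:loc_lip_jac_weighted}, $\lip_{m_a(X)}^{W_2} F_{\lvert \pcal_a(\R^d)} = \norm{D_X^a f_a}_{2,a}$ with $X = (Ru_1, (R/2)u_1)$ (resp. $(Ru_\nu, -Ru_\nu)$) and $a = (e^{-2CR^2}, 1-e^{-2CR^2})$. Then I plug into the explicit Jacobian formula of Lemma~\ref{applem:weighted_jacobian} and test it against a well-chosen perturbation $\epsilon$. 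In case (2), mimicking the proof of Proposition~\ref{prop:lower_bound_sqrt}, set $\epsilon_1 \coloneqq u_\nu$, $\epsilon_2 \coloneqq 0$ (normalized in $\norm{\cdot}_a$); the dominant surviving term is the one of the form $\sum_j P^a_{2j}(x_j - \sum_k P^a_{2k}x_k)\,x_2^\top A^\top \epsilon_j$, whose size is governed by $P^a_{21} P^a_{22}$ times the squared spread of the two points times $R^2|\gamma_\nu|$. The attention weights satisfy $P^a_{21} = a_1 e^{x_2^\top A^\top x_1}/(a_1 e^{x_2^\top A^\top x_1} + a_2 e^{x_2^\top A^\top x_2})$; with the chosen $a$ and the eigenvector geometry one gets $x_2^\top A^\top x_1 = -R^2\gamma_\nu = R^2 C'$ and $x_2^\top A^\top x_2 = R^2 C'$, so the ratio $a_1 e^{R^2 C'} : a_2 e^{R^2 C'}$ is $e^{-2C'R^2}e^{R^2 C'} : e^{R^2 C'}$, i.e. $P^a_{21} \approx e^{-2C'R^2}(1+o(1))$ and $P^a_{22}\to 1$. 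Collecting, the relevant output coordinate has norm $\sim e^{-2C'R^2}\cdot R^2 C' \cdot (2R)^2$-type scaling, which after dividing by the tiny $\norm{\epsilon}_a$ (which is of order $\sqrt{a_1} = e^{-C'R^2}$) produces the claimed growth $\tfrac{C'}{2}R^2 e^{C'R^2}$, up to a multiplicative factor $\theta(R)\to 1$ absorbing lower-order exponential and polynomial corrections. Case (1) is handled identically with $u_1$, the points $Ru_1$ and $(R/2)u_1$, and the constant $C = \gamma_1/8$ arising from the gap $R^2\gamma_1 - R^2\gamma_1/4 = (3/4)R^2\gamma_1$ compared against the variance scale; the condition $\gamma_1 \ge -8\gamma_\nu$ is exactly what makes the $u_1$-construction dominate the $u_\nu$-construction. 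The ``moreover'' claims — that the right-hand side is genuinely equivalent (not just a lower bound) to the Lipschitz constant at $\mu_R$ — I would get by also proving a matching upper bound on $\norm{D_X^a f_a}_{2,a}$ for these specific two-point configurations, using Theorem~\ref{thm:unnorm_general_bound}-style estimates adapted to the weighted norm, or by a direct eigenvalue computation of the (at most) $2d\times 2d$ Jacobian.

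The main obstacle I anticipate is the bookkeeping of the competing terms in the Jacobian formula of Lemma~\ref{applem:weighted_jacobian} and, in particular, verifying that the chosen perturbation isolates the $R^2 e^{C R^2}$-order term while all other contributions are genuinely of lower order after normalization by $\norm{\epsilon}_a$. A subtler point is justifying the transfer from the weighted (finite-support) setting back to $\lip^{W_2}(F_{\lvert \pcal(B_R)})$: $\mu_R \in \pcal_a(\R^d)$ for the specific $a$, so $\lip_{\mu_R}^{W_2} F_{\lvert \pcal_a} \le \lip_{\mu_R}^{W_2} F \le \lip^{W_2}(F_{\lvert\pcal(B_R)})$ via Lemma~\ref{lem:loc_glob_lip}, which suffices — but one must check $\mu_R$ is supported in $B_R$ (immediate, since $|Ru_1| = R$ and $|(R/2)u_1| = R/2$) and that the perturbation directions stay admissible. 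The final step is purely asymptotic: writing $n\sim_{R\to\infty}\exp(2\gamma R^2)$ and checking that with $\gamma = \max(-\gamma_\nu, \gamma_1/8)$ the resulting bound matches $\theta(R)\tfrac{\gamma}{2}R^2 e^{\gamma R^2}$ in Proposition~\ref{prop:unnorm_lower_bound_mf}, which is a direct consequence of Lemma~\ref{lem:link_euclidian_meanfield} relating $\lip^{\norm{\cdot}_F}(f_{\lvert B_R^n})$ to the mean-field constant together with the density of $n$-point empirical measures in $\pcal(B_R)$ for $n$ of that size.
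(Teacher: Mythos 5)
Your overall strategy is the same as the paper's: take the massively unbalanced two-Dirac measure, pass to the weighted self-attention formalism of Appendix~\ref{appsubsec:weighted_sa}, lower-bound $\norm{D^a_Xf_a}_{2,a}$ by testing against a perturbation supported on the light token and reading off the output at the heavy token, then convert to $\lip^{W_2}(F_{\lvert\pcal(B_R)})$ via Lemmas~\ref{applem:loc_lip_jac_weighted} and \ref{lem:loc_glob_lip}. That is exactly the skeleton of the paper's proof (which writes the case $\gamma_\nu<0$ with the index labels swapped: $a_2 = e^{-2C'R^2}$ small, $\epsilon = (0,u_\nu)$, output index $1$ — an inconsequential relabelling).

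However, your intermediate computations contain errors that, if followed literally, would produce a \emph{vanishing} quantity rather than the claimed exponential growth, so the execution has a genuine gap. Concretely, with $x_1 = Ru_\nu$, $x_2 = -Ru_\nu$, and $Au_\nu = \gamma_\nu u_\nu$ with $\gamma_\nu<0$, one has $x_2^\top A^\top x_2 = (Ax_2)^\top x_2 = \gamma_\nu\lvert x_2\rvert^2 = \gamma_\nu R^2 = -C'R^2$, not $+C'R^2$ as you wrote. With the correct sign and $a_1 = e^{-2C'R^2}$, $a_2 = 1-e^{-2C'R^2}$, the weights balance: $a_1 e^{x_2^\top A^\top x_1} = e^{-2C'R^2}e^{C'R^2} = e^{-C'R^2}$ and $a_2 e^{x_2^\top A^\top x_2} \approx e^{-C'R^2}$, giving $P^a_{21}, P^a_{22} \to 1/2$ — not $P^a_{21}\approx e^{-2C'R^2}$, $P^a_{22}\to 1$. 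Your ``collecting'' step compounds this with an extra power of $R$: the dominant term is $P^a_{21}(x_1 - \bar x)\,x_2^\top A^\top u_\nu = P^a_{21}P^a_{22}(x_1-x_2)(x_2^\top A^\top u_\nu)$, with $\lvert x_1-x_2\rvert = 2R$ (not $(2R)^2$) and $\lvert x_2^\top A^\top u_\nu\rvert = RC'$ (not $R^2C'$), so its magnitude is $2R^2C'P^a_{21}P^a_{22} \approx \tfrac12 R^2C'$. Dividing by $\norm{\epsilon}_a = \sqrt{a_1} = e^{-C'R^2}$ then yields the advertised $\tfrac{C'}{2}R^2e^{C'R^2}$. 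By contrast, plugging in the values you state ($P^a_{21}\approx e^{-2C'R^2}$ and a term of size $\sim e^{-2C'R^2}\cdot 4R^4C'$) and dividing by $e^{-C'R^2}$ gives $\sim 4C'R^4e^{-C'R^2}\to 0$, which contradicts the claimed conclusion. Finally, a minor point on the last paragraph: Lemma~\ref{lem:link_euclidian_meanfield} gives $\lip^{\norm{\cdot}_F}(f_{\lvert B_R^n}) \le \lip^{W_2}(F_{\lvert\pcal(B_R)})$, which is the wrong direction for deducing the discrete lower bound of Proposition~\ref{prop:unnorm_lower_bound_mf} from the mean-field one; what is actually used there is that for $a = (1/n,1-1/n)$ with $1/n \sim e^{-2\gamma R^2}$, the weighted local Lipschitz constant of $f_a$ at the two-Dirac configuration is realized by a genuine $n$-point Euclidean configuration, i.e.\ Lemma~\ref{applem:loc_lip_jac_weighted} applied with the uniform weights, not Lemma~\ref{lem:link_euclidian_meanfield}.
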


\begin{proof}
    Let us detail the proof in the second case only, as the first case is very similar.
    According to Lemma \ref{applem:weighted_jacobian} with $V = I_d$, for any $X = (x_1, x_2) \in (\R^d)^2$ and $a = (a_1, a_2)\in \Sigma_2$, for any $\epsilon_2 \in \R^d$, denoting $\epsilon \coloneqq (0, \epsilon_2) \in (\R^d)^2$ we have
    $$D_X^a f_a (\epsilon)_1 = (P_{12}^a I_d + 2 P_{11}^a P_{12}^a x_2 x_1^\top A^\top) \epsilon_2.$$
    Let $u_1, \dots, u_\nu$ a family of unit eigenvectors of $A$, associated to the eigenvalues $\gamma_1 \ge \dots \ge \gamma_\nu$.
    Let $R > 0$.
    Set $a_2 \coloneqq e^{-2\lvert \gamma_\nu \rvert R^2}$, so that $a_1 = 1 - e^{-2\lvert \gamma_\nu \rvert R^2}$, and choose $x_1 \coloneqq Ru_\nu$ and $x_2\coloneqq -R u_\nu$.
    Then
    \begin{equation} 
        \label{appeq:exploding_term}
        \norm{D_X^a f_a(\epsilon)}_a \ge \sqrt{a_1} \lvert P_{12}^a (I_d + 2 P_{11}^a R^2 \lvert\gamma_\nu \rvert u_\nu u_\nu^\top )\epsilon_2\rvert.
    \end{equation}
    Now let us notice that
    $$P_{12}^a = \frac{a_2 e^{\lvert \gamma_\nu \rvert R^2}}{a_1 e^{- \lvert \gamma_\nu \rvert R^2} + a_2 e^{ \lvert \gamma_\nu \rvert R^2}} = \frac{e^{- \lvert \gamma_\nu \rvert R^2}}{(1 - e^{-2\lvert \gamma_\nu\rvert R^2})e^{- \lvert \gamma_\nu \rvert R^2} + e^{- \lvert \gamma_\nu \rvert R^2}}\to_{R\to +\infty} \frac{1}{2},$$
    and
    $$P_{11}^a = 1 - P_{12}^a \to_{R\to +\infty} \frac{1}{2}$$
    as well.
    Going back to Equation (\ref{appeq:exploding_term}) and setting $\epsilon_2 \coloneqq u_\nu$, we get
    $$\norm{D_X^a f_a(\epsilon)}_a \gtrsim \frac{1}{2} (1 + R^2 \lvert\gamma_\nu \rvert) \lvert u_\nu\rvert \gtrsim \frac{1}{2}R^2 \lvert \gamma_\nu \rvert,$$
    where $f(R) \gtrsim g(R)$ means that there exists a function $\theta \colon \R \to \R_+$ such that $\theta(R) \to_{R\to +\infty} 1$ and $f(R) \ge \theta(R) g(R)$.
    Finally, we get
    $$\frac{\norm{D_X^a f_a(\epsilon)}_a}{\norm{\epsilon}_a} \gtrsim \frac{1}{2} \lvert \gamma_\nu \rvert R^2{a_1}^{-1/2} = \frac{1}{2} \lvert \gamma_\nu \rvert R^2 e^{\lvert \gamma_\nu\rvert R^2},$$
    which proves the result as $\norm{D_X^a f_a}_{2,a} = \sup_{\epsilon \in (\R^d)^2\setminus \{ 0\}} \frac{\norm{D_X^a f_a(\epsilon)}_a}{\norm{\epsilon}_a}$.
\end{proof}

\subsection{An Example of Quadratic Growth with the Radius}
\label{appsubsec:quadratic_growth}

For simplicity, assume that $A=I_d$ and $V=I_d$.
Let $X = (0, x_2 \dots, x_n) \in B_R^n$.
\citet{kim2021lipschitz} show that, as all norms are equivalent in finite dimension, the local Lipschitz constant of $f$ at $X$ grows like the empirical variance of $X$.
Taking $x_2 = \dots = x_j = Re_1$ and $x_{j+1} = \dots = x_n = -Re_1$ with $e_1 = (1, 0,\dots, 0) \in \R^d$ and $j$ such that $\lvert 2j - 1 - n\rvert \le 1$ and denoting $X_R$ the resulting configuration, we obtain that the empirical variance of $X_R$ grows like $R^2$ up to a constant factor that is close to 1.

This behavior can be easily checked numerically (see Figure \ref{fig:quadratic_growth}).

\begin{figure}
\centering
\includegraphics[]{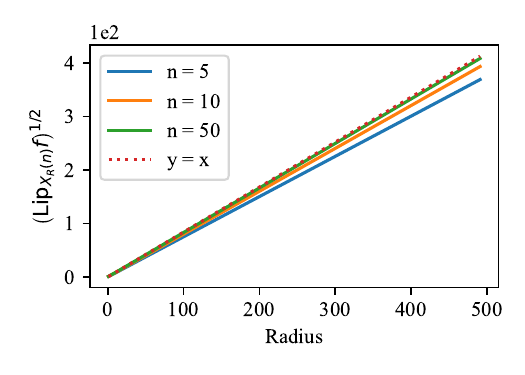}
\caption{Linear growth of the square root of the Lipschitz constant of self-attention in the configuration $X_R(n)$.}
\label{fig:quadratic_growth}
\end{figure}

\section{Proofs of \autoref{sec:masked}}

We have the following formula for the Jacobian of masked self-attention.

\begin{lemma}
    \label{applem:jacobian_formula_masked}
    Let $X = (x_1, \dots, x_n) \in (\R^d)^n$.
    For all perturbations $\epsilon \coloneqq (\epsilon_1,\dots, \epsilon_n) \in (\R^d)^n$ and all $i \in \{1, \dots, n\}$, we have
    \begin{equation*}
    (D_{X}f^m)(\epsilon)_i = V \sum_{j=1}^i P_{ij} (x_j - \sum_{k=1}^i P_{ik}x_k)x_i^\top A^\top \epsilon_j + V\sum_{j=1}^i P_{ij} \epsilon_j + V \sum_{j=1}^i P_{ij}(x_j - \sum_{k=1}^i P_{ik}x_k) x_j^\top A \epsilon_i,
    \end{equation*}
    with $P_{ij} \coloneqq e^{ x_i^\top A^\top x_j} / \sum_{k=1}^i e^{ x_i^\top A^\top x_k}$.
\end{lemma}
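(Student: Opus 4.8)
The plan is to reduce the computation entirely to the already-established Jacobian formula for \emph{unmasked} self-attention, Lemma \ref{applem:jacobian_formula}, by exploiting the causal structure of the mask. By Definition \ref{def:masked_self_attention}, the $i$-th output coordinate satisfies $f^m(X)_i = f(x_1, \dots, x_i)_i$, so it depends on the tokens $x_1, \dots, x_i$ only, and not on $x_{i+1}, \dots, x_n$. Consequently, for any perturbation $\epsilon = (\epsilon_1, \dots, \epsilon_n) \in (\R^d)^n$, the block $(D_X f^m)(\epsilon)_i$ depends only on $\epsilon_1, \dots, \epsilon_i$, and by the chain rule it equals the differential of the map $(y_1, \dots, y_i) \in (\R^d)^i \mapsto f(y_1, \dots, y_i)_i \in \R^k$, evaluated at $(x_1, \dots, x_i)$ and applied to $(\epsilon_1, \dots, \epsilon_i)$.

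\textbf{Applying the unmasked formula.} It then suffices to invoke Lemma \ref{applem:jacobian_formula} with the integer $n$ there replaced by $i$. That lemma provides a closed form for the full differential of unmasked self-attention on sequences of length $i$; extracting only its $i$-th output block and specializing $(y_1, \dots, y_i) = (x_1, \dots, x_i)$, $(\eta_1, \dots, \eta_i) = (\epsilon_1, \dots, \epsilon_i)$ yields exactly the claimed identity, where the weights $P_{ij} = e^{x_i^\top A^\top x_j}/\sum_{k=1}^i e^{x_i^\top A^\top x_k}$ are precisely the softmax weights appearing in $f(x_1, \dots, x_i)_i$.

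The argument is routine; the only mildly delicate point is the bookkeeping in the first step — verifying that differentiating the truncated map and padding with zeros genuinely reconstructs the $i$-th block-row of $D_X f^m$ — but this is immediate from the causal definition together with the chain rule, so there is no real obstacle. As an alternative to the reduction, one could differentiate $f^m(X)_i$ directly in $x_i$ and in $x_1, \dots, x_{i-1}$, using the identity $\tfrac{\partial}{\partial w_\ell}\softmax(w)_j = \softmax(w)_j(\mathbf{1}_{j=\ell} - \softmax(w)_\ell)$ exactly as in the proof of Lemma \ref{applem:jacobian_formula}; the reduction above simply avoids repeating that computation.
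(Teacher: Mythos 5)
Your proof is correct. The paper states Lemma \ref{applem:jacobian_formula_masked} without an explicit proof (just as it states the unmasked Lemma \ref{applem:jacobian_formula} without proof), so the natural argument is precisely the one you give: since $f^m(X)_i = f(x_1,\dots,x_i)_i$ depends only on $x_1,\dots,x_i$, the $i$-th block row of $D_X f^m$ is the $i$-th block row of the differential of unmasked self-attention on the truncated sequence $(x_1,\dots,x_i)$, which Lemma \ref{applem:jacobian_formula} (with $n$ replaced by $i$) provides verbatim. The causal-truncation reduction is clean and exactly matches the intended derivation.
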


\subsection{Proofs of \autoref{thm:masked_general_bound} and \autoref{thm:large_R_masked}}
\label{appsubsec:masked_general_bound}

In view of \autoref{applem:jacobian_formula_masked}, following the same steps as in the proof of \autoref{thm:unnorm_general_bound} leads to \autoref{thm:masked_general_bound}.
Likewise, the proof of \autoref{thm:large_R_masked} is the same as for \autoref{thm:unnorm_large_R_regime}.

\subsection{Proof of \autoref{thm:masked_mean_field}}
\label{appsubsec:masked_mean_field_bound}

Let $\bar \mu$ and $\bar \nu$ be two distinct measures in $\pcal([0,1]\times B_R)$.
Assume that $\bar \mu$ and $\bar \nu$ have the same first marginal, denoted $\theta$ (otherwise, we consider that they are associated with an infinite Lipschitz ratio).
We have
\begin{align*}
    d_p(F^m(\bar \mu), F^m(\bar \nu)) &\le d_p((\Gamma_{\bar \mu})_\sharp \bar \mu, (\Gamma_{\bar \mu})_\sharp \bar \nu) + d_p((\Gamma_{\bar \mu})_\sharp \bar \nu, (\Gamma_{\bar \nu})_\sharp \bar \nu) \\
    &\le \left ( \int_0^1 W_p\left ((\Gamma_{\bar \mu}^s)_\sharp \mu^s, (\Gamma_{\bar \mu}^s)_\sharp \nu^s\right )^p \right )^{1/p} \\
    &\phantom{space} + \left ( \int_0^1 W_p\left (\Gamma_{\bar \mu}^s)_\sharp \nu^s, (\Gamma_{\bar \nu}^s)_\sharp \nu^s \right ) \right )^{1/p},
\end{align*}
where we denote
$$\Gamma_{\bar \mu}^s(x) := \frac{\int Vy G(x,y)\mathbf{1}_{\tau \le s} \dd \bar \mu(\tau, y)}{\int G(x,y)\mathbf{1}_{\tau \le s} \dd \bar \mu(\tau, y)}.$$
Using Lemma \ref{applem:lip_wass1}, we get
$$W_p\left ((\Gamma_{\bar \mu}^s)_\sharp \mu^s, (\Gamma_{\bar \mu}^s)_\sharp \nu^s\right ) \le \lip(\Gamma_{\bar \mu}^s) W_p(\mu^s, \nu^s)$$
where the Lipschitz constant is taken on $B_R$.
A similar computation as for traditional self-attention shows that for all $0\le s\le 1$ and $x\in B_R$ we have
$$\norm{D_x \Gamma^s}_2 \le \norm{V}_2 \norm{A}_2 R^2,$$
so that
$$ \lip(\Gamma_{\bar \mu}^s) \le\norm{V}_2 \norm{A}_2 R^2.$$
To bound the second term in the previous inequality, we use Lemma \ref{applem:lip_wass2}, to get
$$ W_p\left ( (\Gamma_{\bar \mu}^s )_\sharp \nu^s, (\Gamma_{\bar \nu}^s)_\sharp \nu^s \right ) \le \norm{\Gamma_{\bar \mu}^s - \Gamma_{\bar \nu}^s}_{L^\infty(B_R)}.$$
Again, a similar computation as for traditional self-attention shows that
$$\norm{\Gamma_{\bar \mu}^s - \Gamma_{\bar \nu}^s}_{L^\infty (B_R)} \le \norm{V}_2 (2\norm{A}_2 R^2) e^{2\norm{A}_2 R^2} W_p(\mu^s, \nu^s),$$
which concludes the proof.

\section{Experiments}

\subsection{Power Iteration}
\label{appsubsec:power_iteration}

Let $X\in (\R^d)^n$.
To compute numerically $\norm{D_Xf}_2$, we pick an initialisation
\begin{gather*}
    u_0 \sim \otimes^{n\times d} \mathcal{N}(0, 1) \in \R^{n\times d} \\
    u_0 \leftarrow{u_0 / \norm{u_0}_F}
\end{gather*}
and then repeat the following steps until convergence:
\begin{gather*}
    v_k = (D_X f)^\top (D_Xf) u_k \\
    \mu_k = v_k^\top u_k \\
    u_{k+1} = \frac{v_k}{\norm{v_{k+1}}_F}
\end{gather*}
where $v_k$ is computed by doing successively a Jacobian-vector product and a vector-Jacobian product.
It is well known \cite{sra2012optimization} that with this method, $\mu_k$ converges to $\norm{D_Xf}_2$.

\subsection{GPT-2}
\label{appsubsec:gpt}

\begin{figure}
\centering
\includegraphics[width=0.7\textwidth]{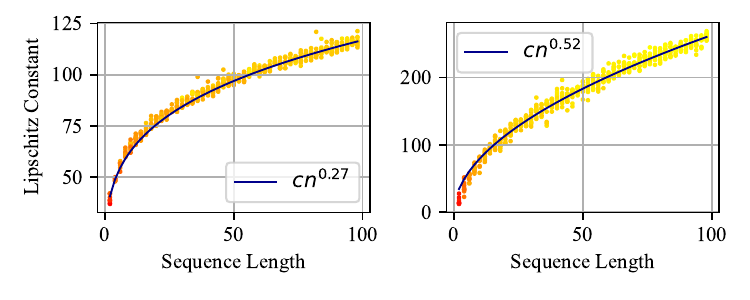}
\caption{Scatter plots of the local Lipschitz constant of masked self-attention for GPT-2 pretrained as a function of the sequence length, on the dataset Alice in Wonderland.
The first column corresponds to masked self-attention layer 0, and the second column to layer 6.}
\label{fig:gpt_trained}
\end{figure}

We do the same experiments as in Section \ref{sec:experiments} on GPT-2 pretrained \cite{radford2019language}.
We see in Figure \ref{fig:gpt_trained} that the behavior is different than what we observe in Figure \ref{fig:real_data}.
This is explained by the fact that the magnitude of tokens depends on the sequence length for pretrained GPT-2, due to the learned positional encoding (see Figures \ref{fig:wpe_norms} and \ref{fig:correlation_n_R}).
Therefore, the bound of Theorem \ref{thm:unnorm_general_bound} still holds but the growth rate can be faster than $\sqrt{n}$.

\begin{figure}
\centering
\begin{minipage}{.45\textwidth}
    \centering
    \includegraphics[width=.9\linewidth]{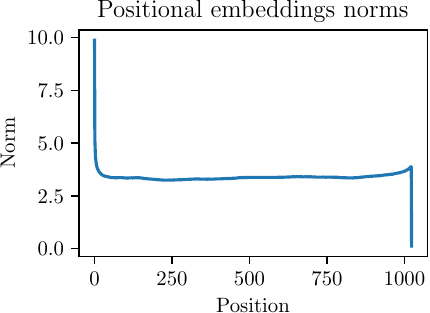}
    \caption{Norm of the positional embeddings of GPT-2 pretrained, ordered by position. The very first tokens are associated to positional embeddings of much larger magnitude, which makes $n$ and $R$ dependent from the very beginning of the architecture.}
    \label{fig:wpe_norms}
\end{minipage}~~~
\begin{minipage}{.45\textwidth}
    \centering
    \includegraphics[width=.9\linewidth]{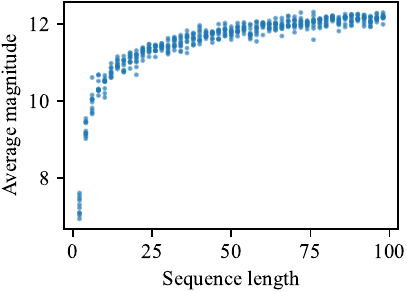}
    \caption{Scatter plot of the average magnitude of tokens $R\coloneqq \sqrt{1/n\sum_{i=1}^n \lvert x_i\rvert^2}$ as a function of the sequence length, for the same dataset as in Figure \ref{fig:gpt_trained}, right column.}
    \label{fig:correlation_n_R}
\end{minipage}
\end{figure}

\subsection{Self-attention with biases}
\label{appsubsec:sa_biases}

Some Transformer architectures such as BERT \cite{devlin2018bert} add biases $(b_Q, b_K, b_V) \in \R^k$ in the formula of self-attention:
\begin{equation}
    \label{eq:sa_biases}
    f^b\colon (x_1, \dots, x_n) \mapsto \Big ( V {\textstyle \sum_{j=1}^n} P_{ij} x_j + b_V \Big )_{1 \le i \le n} \in (\R^k)^n,
\end{equation}
\begin{equation*}
    \text{with}\quad P_i \coloneqq \softmax \left ( (Qx_i+b_Q)^\top (K x_j + b_K))_{1\le j\le n} \right ),
\end{equation*}
where we absorbed the factor $1/\sqrt{k}$ in $Q,K, b_Q$ and $b_K$ to alleviate notations.

\begin{remark}
Note that $b_K$ has no influence on the value of $f^b$:
$$\softmax \left ( (Qx_i+b_Q)^\top (K x_j + b_K))_{1\le j\le n} \right ) = \softmax \left ( (Qx_i+b_Q)^\top K x_j)_{1\le j\le n} \right ),$$
as $b_K$ in only involved in terms that are independent of $j$.
\end{remark}

How do biases in self-attention affect the bound in Theorem \ref{thm:unnorm_general_bound}?
We start by computing the Jacobian of self-attention with biases.

\begin{lemma}
    Let $X = (x_1, \dots, x_n) \in (\R^d)^n$.
    Let $f^b$ be a self-attention module with biases, defined as in Equation (\ref{eq:sa_biases}).
    For all perturbations $\epsilon \coloneqq (\epsilon_1,\dots, \epsilon_n) \in (\R^d)^n$ and all $i \in \{1, \dots, n\}$, we have
    \begin{equation*}
    (D_{X}f^b)(\epsilon)_i = V \sum_{j=1}^n P_{ij} (x_j - \sum_{k=1}^n P_{ik}x_k)(Qx_i + b_Q)^\top K \epsilon_j + V\sum_{j=1}^n P_{ij} \epsilon_j + V \sum_{j=1}^n P_{ij}(x_j - \sum_{k=1}^n P_{ik}x_k) x_j^\top A \epsilon_i,
    \end{equation*}
    with $P_{ij} \coloneqq e^{ (Qx_i + b_Q)^\top (K x_j + b_K)} / \sum_{k=1}^n e^{ (Qx_i + b_Q)^\top (K x_k + b_K)}$.
\end{lemma}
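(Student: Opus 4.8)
The plan is to compute the differential of $f^b$ directly from its definition, treating it as a composition of the Softmax operator with bilinear maps, exactly as one does for vanilla self-attention (Lemma \ref{applem:jacobian_formula}). First I would fix an index $i$ and write $f^b(X)_i = V \sum_{j=1}^n P_{ij} x_j + b_V$, where $P_{ij}$ depends on $X$ only through the scores $\sigma_{ij} \coloneqq (Qx_i + b_Q)^\top(Kx_j + b_K)$. Since $b_V$ is constant, it contributes nothing to the differential. The $j$-th output summand $P_{ij}x_j$ depends on $x_j$ both directly (through the factor $x_j$) and through the scores $\sigma_{ij}$, and depends on $x_i$ only through the scores; also each $P_{ik}$ for $k \neq j$ depends on $x_k$ and $x_i$ through $\sigma_{ik}$.

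Next I would recall the standard Softmax Jacobian identity: if $P_i = \softmax(\sigma_{i\cdot})$, then $\dd P_{ij} = P_{ij}\big(\dd\sigma_{ij} - \sum_{k} P_{ik}\,\dd\sigma_{ik}\big)$. Combining this with the product rule on $\sum_j P_{ij}x_j$ gives
\begin{equation*}
\dd\Big(\sum_j P_{ij}x_j\Big) = \sum_j P_{ij}\,\dd x_j + \sum_j P_{ij}\Big(x_j - \sum_k P_{ik}x_k\Big)\,\dd\sigma_{ij}.
\end{equation*}
It then remains to differentiate the bilinear score $\sigma_{ij} = (Qx_i + b_Q)^\top(Kx_j + b_K)$ with respect to a perturbation $\epsilon = (\epsilon_1,\dots,\epsilon_n)$: the $x_j$-dependence yields $(Qx_i+b_Q)^\top K\epsilon_j$, and the $x_i$-dependence yields $(K x_j + b_K)^\top Q \epsilon_i$. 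Here one uses the remark already noted in the excerpt — that $b_K$ drops out of $P_i$ — to simplify the $x_i$-term: since $\sum_k P_{ik}(Kx_k + b_K) \cdot(\text{stuff})$ and the centering $x_j - \sum_k P_{ik}x_k$ already appears, the $b_K$ contributions to the second term cancel against the Softmax normalization, so the $x_i$-derivative effectively reduces to $x_j^\top K^\top Q\,\epsilon_i = x_j^\top A \epsilon_i$ with $A = K^\top Q$ (here the $1/\sqrt{k}$ has been absorbed). Substituting these three pieces and left-multiplying by $V$ produces exactly the claimed formula.

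The computation is essentially routine; the only mildly delicate point — which I expect to be the main obstacle to writing it cleanly rather than a genuine difficulty — is bookkeeping the $b_K$ cancellation in the $\epsilon_i$-term so that it matches the bias-free shape $x_j^\top A\epsilon_i$ from Lemma \ref{applem:jacobian_formula}, while the $\epsilon_j$-term retains the asymmetric form $(Qx_i+b_Q)^\top K\epsilon_j$. This asymmetry is genuine and expected: $b_Q$ enters through the "query" slot $x_i$, which is not centered, whereas $b_K$ enters through the "key" slot $x_j$, which is subject to the Softmax normalization and hence washes out. Once the formula is established, one could then mirror the proof of \autoref{thm:unnorm_general_bound}: bound the centered term by the variance estimate of Lemma \ref{applem:bound_variance}, replace $\norm{A}_2 R$ by $\norm{K}_2(\norm{Q}_2 R + |b_Q|)$ in the relevant places, and collect terms to get a bound of the same $\sqrt{n}$ form with a constant depending additionally on $|b_Q|$.
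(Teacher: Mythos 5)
Your proposal is correct, and it is the natural (indeed essentially the only) derivation: the paper states this lemma without a written proof, treating it as a direct analogue of the bias-free Jacobian in Lemma~\ref{applem:jacobian_formula}. Your pipeline — product rule on $\sum_j P_{ij}x_j$, the softmax identity $\dd P_{ij} = P_{ij}\bigl(\dd\sigma_{ij} - \sum_k P_{ik}\,\dd\sigma_{ik}\bigr)$ rearranged into the centered form $\sum_j P_{ij}(x_j - \sum_k P_{ik}x_k)\,\dd\sigma_{ij}$, and then $\dd\sigma_{ij} = (Qx_i+b_Q)^\top K\epsilon_j + (Kx_j+b_K)^\top Q\epsilon_i$ — reproduces the claimed formula exactly. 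One small point worth tightening in the write-up: the reason the $b_K$ contribution to the $\epsilon_i$-term vanishes is precisely that $\sum_j P_{ij}\bigl(x_j - \sum_k P_{ik}x_k\bigr) = 0$, so the scalar $b_K^\top Q\epsilon_i$ multiplies a zero vector; phrasing it as ``cancel against the Softmax normalization'' is defensible but less transparent than just citing the zero centering. Also note the paper's own inconsistency here — having absorbed $1/\sqrt{k}$ into $Q,K$, the $A$ appearing in the $\epsilon_i$-term should be $K^\top Q$ rather than $K^\top Q/\sqrt{k}$ — which you correctly handle.
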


The same steps as in the proof of Theorem \ref{thm:unnorm_general_bound} lead to the following bound.

\begin{theorem}
    Let $Q, K, V \in \R^{k\times d}$ and $A \coloneqq K^\top Q / \sqrt{k}$.
    Let $R > 0$ and $n \in \mathbb N$.
    Unmasked self-attention with biases $f^b$ with parameters $(A, V)$, defined in Equation (\ref{eq:sa_biases}), is Lipschitz continuous on the set $B_R^n$, with
    \begin{equation*}
        \lip \Big ( f _{\lvert B_R^n} \Big ) \le \sqrt{3} \norm{V}_2 \left ( \norm{A}_2^2 R^4  + n \left (\norm{K}_2 (\norm{Q}_2 R + \lvert b_Q \rvert)^2 R^2 + 1 \right ) \right )^{1/2}\!\!.
    \end{equation*}
\end{theorem}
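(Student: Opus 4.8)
The statement to be proved is the bound on $\lip(f^b_{\lvert B_R^n})$ for self-attention with biases. Having already computed the Jacobian $D_X f^b$ in the preceding lemma, the plan is to mirror exactly the proof of \autoref{thm:unnorm_general_bound} (in Appendix \ref{appsubsec:unnorm_general_bound}), tracking carefully how the bias $b_Q$ enters. First I would fix $X \in B_R^n$ and a unit perturbation $\epsilon = (\epsilon_1, \dots, \epsilon_n)$ with $\norm{\epsilon}_F = 1$, and write out the three-term expression for $(D_X f^b)(\epsilon)_i$ from the lemma. Applying the triangle inequality splits $\lvert (D_X f^b)(\epsilon)_i \rvert$ into three contributions, exactly as in the unbiased case, except that the factor $x_i^\top A^\top = (Qx_i)^\top K$ is replaced by $(Qx_i + b_Q)^\top K$ in the first term.

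\textbf{Key steps.} (1) For the first term, bound $\lvert (Qx_i + b_Q)^\top K \rvert \le \norm{K}_2 \lvert Qx_i + b_Q \rvert \le \norm{K}_2 (\norm{Q}_2 R + \lvert b_Q \rvert)$ using $\lvert x_i \rvert \le R$; then apply Cauchy--Schwarz to $\sum_j P_{ij}(x_j - \sum_k P_{ik}x_k)\epsilon_j$ and invoke \autoref{applem:bound_variance} (more precisely, the displayed inequality $\sum_j P_{ij} \lvert x_j - \sum_k P_{ik} x_k \rvert^2 \le R^2$ from its proof) to get a factor $R$, leaving $(\sum_j P_{ij} \lvert \epsilon_j \rvert^2)^{1/2}$. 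This yields a first-term bound of $\norm{K}_2 (\norm{Q}_2 R + \lvert b_Q \rvert) R (\sum_j P_{ij} \lvert \epsilon_j \rvert^2)^{1/2}$. (2) The second term $\lvert \sum_j P_{ij} \epsilon_j \rvert$ is handled exactly as before: $\lvert \sum_j P_{ij} \epsilon_j \rvert \le 1$ since $\sum_j P_{ij}^2 \le 1$ and $\norm{\epsilon}_F = 1$. (3) For the third term, the matrix $\mathrm{Var}^{(i)} = \sum_j P_{ij}(x_j - \sum_k P_{ik}x_k)x_j^\top$ is unchanged (the bias $b_K$ has no effect, as noted in the remark, and $b_Q$ only appears through the attention weights $P_{ij}$, which are still a genuine probability distribution supported in $B_R$), so \autoref{applem:bound_variance} gives $\norm{\mathrm{Var}^{(i)}}_2 \le R^2$ and the third term is bounded by $\norm{A}_2 R^2 \lvert \epsilon_i \rvert$. (4) Collecting the three bounds, using $(a+b+c)^2 \le 3(a^2 + b^2 + c^2)$, summing over $i$, and using $\sum_i \sum_j P_{ij} \lvert \epsilon_j \rvert^2 \le \sum_j \lvert \epsilon_j \rvert^2 = 1$ and $\sum_i \lvert \epsilon_i \rvert^2 = 1$, gives
$$\sum_{i=1}^n \lvert (D_X f^b)(\epsilon)_i \rvert^2 \le 3 \norm{V}_2^2 \left( n \norm{K}_2^2 (\norm{Q}_2 R + \lvert b_Q \rvert)^2 R^2 + n + \norm{A}_2^2 R^4 \right),$$
which after taking square roots is the claimed bound (noting the paper's statement has $\norm{K}_2$ rather than $\norm{K}_2^2$, presumably a typo, but in any case this is the routine bookkeeping).

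\textbf{Main obstacle.} There is no deep obstacle here: the argument is a direct adaptation and the only thing to be careful about is where $b_Q$ does and does not appear. The one genuine point requiring a moment's thought is step (3) --- verifying that introducing $b_Q$ (and $b_K$) into the exponents does not spoil the variance bound. This holds because $P_i = \softmax((Qx_i + b_Q)^\top(Kx_j + b_K))_j$ is still a probability vector on $\{1, \dots, n\}$ regardless of the biases, so $\sum_j P_{ij} \delta_{x_j}$ is a probability measure supported in $B_R$ and \autoref{applem:bound_variance} applies verbatim; the bias shifts \emph{which} weights appear but not the geometry of the support. Everything else is the same estimation as in Appendix \ref{appsubsec:unnorm_general_bound}, with $\norm{A}_2 R$ in the first term replaced by $\norm{K}_2(\norm{Q}_2 R + \lvert b_Q \rvert) R$ and the other two terms untouched.
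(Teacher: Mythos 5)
Your proof is correct and follows exactly the route the paper indicates: the paper's proof of this theorem is a one-line reference to ``the same steps as in the proof of Theorem \ref{thm:unnorm_general_bound},'' and you carry those steps out faithfully, replacing the coefficient $\norm{A}_2 R$ in the first term of the triangle-inequality split by $\norm{K}_2(\norm{Q}_2 R + \lvert b_Q\rvert)$ and leaving the other two terms unchanged. Your observation that the derivation actually produces $\norm{K}_2^2(\norm{Q}_2 R + \lvert b_Q\rvert)^2 R^2$ rather than the paper's $\norm{K}_2(\norm{Q}_2 R + \lvert b_Q\rvert)^2 R^2$ is well-founded; the missing square on $\norm{K}_2$ in the stated theorem does appear to be a typo.
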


\end{document}